\documentclass[sigconf,nonacm]{preprint} 

\usepackage{balance} %
\usepackage{tablefootnote}
\usepackage{moresize}
\usepackage{pgfplots}
\pgfplotsset{compat=1.18}
\usepackage{pgfplotstable}
\usepackage{siunitx}
\usepackage{multirow}
\usepackage{caption}
\usepackage{subcaption}
\usepackage{circuitikz}
\usetikzlibrary{positioning}
\usepackage{algorithm}
\usepackage{algorithmic}
\usepackage{xcolor}
\usepackage[table]{xcolor}
\usepackage{makecell}
\usepackage{xurl}
\usepackage{hyperref}
\usepackage{bbm}
\hypersetup{
    breaklinks=true
}
\usepackage{bbm}
\usepackage{enumitem}
\usepackage{footmisc}
\usepackage[inkscapelatex=false]{svg}

  \definecolor{cfapprox}{HTML}{2A78D6}
  \definecolor{cftight}{HTML}{EB6834}
  \definecolor{cfgumbel}{HTML}{1BAF7A}

\DeclareCaptionSubType[roman]{subfigure}
\pgfplotsset{tick label style={font=\footnotesize},
             label style={font=\footnotesize},
             legend style={font=\footnotesize}}

\theoremstyle{plain}
\newtheorem{theorem}{Theorem}[section]
\newtheorem{proposition}[theorem]{Proposition}

\theoremstyle{definition}
\newtheorem{definition}[theorem]{Definition}
\newtheorem{assumption}[theorem]{Assumption}
\theoremstyle{remark}

\DeclareMathOperator*{\argmax}{arg\,max}

\setcopyright{none}

\AtBeginDocument{%
  \fancypagestyle{firstpagestyle}{%
    \fancyhf{}%
    \fancyfoot[C]{\footnotesize\thepage}}%
  \fancypagestyle{standardpagestyle}{%
    \fancyhf{}%
    \fancyfoot[C]{\footnotesize\thepage}}%
  \pagestyle{standardpagestyle}%
}

\title[Robust Counterfactuals for Markov Decision Processes\\under Statistical and Causal Uncertainties]{Robust Counterfactuals for Markov Decision Processes\\under Statistical and Causal Uncertainties}

\author{Jessica Lally}
\affiliation{
  \institution{King's College London}
  \city{London}
  \country{United Kingdom}}
\email{jessica.lally@kcl.ac.uk}

\author{Milad Kazemi}
\affiliation{
  \institution{Durham University}
  \city{Durham}
  \country{United Kingdom}}
\email{milad.kazemi@durham.ac.uk}

\author{Nicola Paoletti}
\affiliation{
  \institution{King's College London}
  \city{London}
  \country{United Kingdom}}
\email{nicola.paoletti@kcl.ac.uk}

\newcommand{\interv}[5]{{#1}_{(#2\leftarrow#3,#4\leftarrow#5)}}

\newcommand{\probs}[1][]{%
\ifthenelse{\equal{#1}{}}{P_{\mdp}}{P_{\mdp(#1)}}%
}
\newcommand{\mdp}{\mathcal{M}}
\newcommand{\states}{\mathcal{S}}

\newcommand{\jl}[1]{#1}
\newcommand{\updated}[1]{#1}

\newcommand{\revised}[1]{{\color{black}#1}}

\begin{abstract}
Counterfactual reasoning (asking \emph{what would have happened, had a decision been different}) is a powerful tool for identifying causes of failure and explaining the behaviour of AI systems in sequential decision-making tasks. In Markov Decision Processes (MDPs), two sources of uncertainty complicate counterfactual inference: statistical uncertainty in the MDP transitions when the MDP is estimated from trajectory data, and uncertainty about the unknown causal model, as there are multiple structural causal models (SCMs) compatible with each MDP. Existing work addresses these two challenges in isolation. We address both simultaneously by introducing a framework for partial counterfactual inference in MDPs that is robust to both statistical and causal uncertainty. Our approach provides: 1) closed-form analytical bounds for the counterfactual probabilities that are valid across all SCMs entailed by a given interval MDP (IMDP); 2) a method to compute \revised{minimal robust interventions to achieve a desired counterfactual outcome} via pessimistic value iteration on the induced counterfactual IMDP. Since the latter is constructed using our analytical bounds, our approach remains efficient and scalable to realistic MDP environments. Experiments across multiple case studies demonstrate our method's robustness and practical utility for identifying minimal sets of actions that, if changed, would have led to improved outcomes.
\end{abstract}

\keywords{Counterfactual Inference, Interval Markov Decision Processes}

\newcommand{\BibTeX}{\rm B\kern-.05em{\sc i\kern-.025em b}\kern-.08em\TeX}

\begin{document}

\maketitle

\section{Introduction}
Markov Decision Processes (MDPs) are a fundamental model for describing sequential decision-making processes under uncertainty, including reinforcement learning (RL) problems. Counterfactual reasoning for MDPs is central to policy evaluation, explanation, and accountability. Given an observed MDP path, i.e., a state-action sequence, one may ask: \textit{Would a different sequence of actions have led to a different outcome?} Such questions arise in numerous applications, such as generating counterfactual explanations \citep{oberst2019counterfactual, tsirtsis2024finding, tsirtsis2021counterfactual}, facilitating policy transfer \citep{killian2022counterfactually}, data augmentation with counterfactual paths \citep{lu2020sample, sun2024acamda}, as well as LLM counterfactuals \citep{chatzi2025counterfactual,pona2025abstract}.

However, two distinct sources of uncertainty complicate counterfactual reasoning for MDPs. First, counterfactuals in MDPs are generally not point-identified: many structural causal models (SCMs) can be compatible with the MDP, each yielding different counterfactual outcomes.  We call this \emph{causal uncertainty}. Yet, many approaches assume a fixed SCM, typically a Gumbel-max SCM \citep{benz2022counterfactual,
kazemi2024counterfactual,
killian2022counterfactually, noorbakhsh2022counterfactual,
tsirtsis2021counterfactual, zhu2020counterfactual}, potentially leading to invalid counterfactual conclusions. Second, the transition probabilities of an MDP are often estimated from finite trajectory data. Yet, most counterfactual analyses treat these point estimates as precise probabilities, ignoring \emph{statistical uncertainty} in the learned MDP. 

Recent work by \citet{lally2025robustcounterfactualinferencemarkov} addresses the causal uncertainty problem by bounding counterfactual probabilities across all SCMs consistent with the MDP. Also, there is a well-established line of research (e.g., \citep{suilen2022robust, meggendorfer2024odds, schnitzer2024certifiably}) on capturing statistical uncertainty by using uncertain MDPs, where each transition probability lies within a data-derived confidence set, most often an interval (yielding interval MDPs). 

However, there is no work to date that addresses both kinds of uncertainty, even though both matter for identifying robust \revised{interventions} and explanations. Consider a learned model for sepsis treatment. Suppose that, under a nominal transition estimate, changing the dosage at step $t$ prevents an adverse outcome. So we would conclude that action $a_t$ is \revised{a sufficient intervention to prevent that outcome,} 
but only if the transition probabilities were precise. Since these are learned from data and subject to sampling variability, there could be alternative models supported by the data where the adverse outcome would have occurred regardless of the dosage at $t$, i.e., \revised{where intervening on $a_t$ alone is not enough}. Thus, we seek \revised{counterfactual interventions} that are robust to statistical uncertainties, i.e., valid across all MDP models entailed by some data-derived confidence set. For a similar reason, since there are many SCM models compatible with an MDP, we want \revised{such interventions} to be robust to causal uncertainty, because each SCM can yield different counterfactual probabilities and thus different \revised{counterfactual policies that would have led to the patient's recovery.}

In this paper, we develop a framework for robust counterfactual reasoning in uncertain MDPs. Our contributions span two main areas:

\textit{1) Counterfactual bounds for uncertain MDPs.} We derive analytical bounds for the counterfactual transition probabilities of a given interval MDP (IMDP). \jl{Our bounds are robust to \textit{both} sources of uncertainty in this setting: they are guaranteed to include the true (unknown) counterfactual probabilities for every MDP $\mathcal{M}$ entailed by the IMDP, and every SCM compatible with $\mathcal{M}$.} To achieve this, we build on the work of \citet{lally2025robustcounterfactualinferencemarkov}, who derived bounds on counterfactual probabilities for standard (certain) MDPs, and extend it -- for the first time -- to the uncertain MDP setting. Crucially, without our analytical characterisation, computing robust counterfactual probabilities becomes prohibitive, as it entails solving an optimisation problem with a number of constraints exponential in the size of the MDP \citep{duarte2023, zaffalon2024,pmlr-v162-zhang22ab}. Finally, we prove that our analytical bounds are exact (i.e., tight) in all but one case. For this remaining case, we provide conservative (i.e., sound) bounds that remain empirically efficient. As a contribution of independent interest, we further derive conservative bounds for the special (parametric) case of Gumbel-max causal models.

\textit{2) \revised{Minimal robust counterfactual interventions.}} 
Building on these bounds, we define the notion of a \revised{minimal robust counterfactual intervention as an MDP policy that performs the smallest set of action changes to induce}  
a desired counterfactual outcome across all the causal models induced by statistical and causal uncertainty. In our MDP setting, w.l.o.g., we define outcomes as constraints over the (counter)factual path's expected return.
Given an IMDP estimated from data and an observed path, to derive such robust \revised{interventions}, our method first constructs a counterfactual IMDP by computing bounds for all its transitions; then,  \revised{interventions} are found as solutions to a robust value iteration procedure \citep{mathiesen2024intervalmdp} modified so that its value function keeps track of the allowed number of action changes. Finally, we evaluate our approach on a variety of MDP environments to investigate sources of failure in suboptimal RL policies, demonstrating that our method remains accurate and robust to uncertainty in both the MDP transitions and the causal model.

\section{Background}
In this section, we outline the background connecting Markov Decision Processes (MDPs), structural causal models (SCMs), and counterfactual inference.

\subsection{Markov Decision Processes}
MDPs model sequential decision-making under uncertainty: at each time $t$, an agent in state $s_t$ selects an action $a_t$ via a policy $\pi$, transitions to $s_{t+1} \sim P(\cdot \mid s_t, a_t)$, and receives reward $R(s_t,a_t)$. Formally, an MDP is a tuple $\mathcal{M}=(\mathcal{S},\mathcal{A},P,P_I,R)$, where $\mathcal{S}$ is the discrete \emph{state space}, $\mathcal{A}$ the set of \emph{actions}, $P: (\mathcal{S} \times \mathcal{A}) \rightarrow \Delta(\mathcal{S})$ the \emph{transition probability} function, $P_I \in \Delta(\mathcal{S})$ the \textit{initial state distribution}, and $R: (\mathcal{S} \times \mathcal{A}) \rightarrow \mathbb{R}$ the \emph{reward function}.

A (deterministic) \emph{policy} $\pi$ for $\mathcal{M}$ is a function $\pi: \mathcal{S} \rightarrow \mathcal{A}$. A path $\tau$ of $\mathcal{M}$ under policy $\pi$ is a sequence \revised{$\tau=(s_0, a_0, s_1, a_1, \ldots, s_{T-1}, a_{T-1}, s_T)$} where $T=|\tau|$ is the path length, $P_I(s_0)>0$, $a_t=\pi(s_t)$ for all $t=0,\ldots,T-1$, and $P(s_{t+1}\mid s_t, a_t)>0$ for all $t=0,\ldots,T-1$.

While MDPs can model a wide range of real-world decision-making problems \citep{feinberg2012handbook}, it is often challenging to determine the transition probability function ${P}$ exactly. To address this,  \textit{uncertain MDPs} allow us to explicitly account for uncertainty in the transition dynamics. In particular, \textit{interval MDPs} represent this uncertainty through lower and upper bounds on the transition probabilities.
\begin{definition}[Interval MDP \citep{coppola2024data}]
An Interval Markov Decision Process (IMDP) is a tuple $\mathcal{M_{\updownarrow}} = (\mathcal{S}, \mathcal{A}, {P_{\updownarrow}}, {P_I}, {R})$, where $\mathcal{S}$, $\mathcal{A}$, ${P_I}$, and ${R}$ are as in a standard MDP, and ${P_{\updownarrow}}$ is an uncertain transition function: for all $s,s'\in\mathcal{S}$ and $a\in\mathcal{A}$ there exist bounds $0 \leq P^{\it LB}_{\mathcal{M_{\updownarrow}}}(s' \mid s, a) \leq P^{\it UB}_{\mathcal{M_{\updownarrow}}}(s' \mid s, a) \leq 1$ such that
${P}_{\updownarrow}(s' \mid s, a) = [P^{\it LB}_{\mathcal{M_{\updownarrow}}}(s' \mid s, a), P^{\it UB}_{\mathcal{M_{\updownarrow}}}(s' \mid s, a)]$.
\end{definition}

\subsection{Counterfactual Inference}
Structural causal models (SCMs) \citep{halpern2005causes,pearl_2009} provide a mathematical framework for reasoning about causality and counterfactual inference. Formally, a SCM is a tuple $\mathcal{C}=(\mathbf{U},\mathbf{V},\mathcal{F},P(\mathbf{U}))$, where $\mathbf{V}$ is the set of endogenous (observed) variables, $\mathbf{U}$ is a set of exogenous (unobserved) variables, $P(\mathbf{U})$ is a joint distribution over each $U \in \mathbf{U}$, and $\mathcal{F}$ is a set of structural equations such that each $f_V \in \mathcal{F}$ determines the value of variable $V \in \mathbf{V}$ as a function of $U_V\in \mathbf{U}$ and of $V$'s endogenous parents $\mathbf{PA}_V \subseteq \mathbf{V}$. \jl{An \textit{intervention} (denoted $do(\mathbf{X}=\mathbf{x})$) modifies the SCM by fixing one or more variables $\mathbf{X} \subseteq \mathbf{V}$ to specific values $\mathbf{x}$ and severing their dependence on their respective parents $\mathbf{PA}_\mathbf{X}$ in the causal graph; combined with the original $P(\mathbf{U})$ this yields an \textit{interventional distribution} over the remaining variables. \textit{Counterfactual inference} follows the same procedure, except it infers a posterior $P(\mathbf{U} \mid \mathbf{v})$ over the values of the exogenous variables consistent with a given observation $\mathbf{v}$, and then applies the intervention $do(\mathbf{X}=\mathbf{x})$ using $P(\mathbf{U} \mid \mathbf{v})$ in place of $P(\mathbf{U})$, yielding a distribution over the counterfactual outcomes.}

\subsection{Counterfactual Inference in MDPs}\label{sec:CF_inf_mdp}
An MDP can be represented as an SCM with the causal graph of Figure \ref{fig:MDP_DAG} and the following structural equations:
\begin{equation}\label{eq:mdp_scm}
\begin{aligned}
        S_{t+1} = f(S_{t},A_{t}, U_{t}); \quad A_t = \pi(S_t); 
        \\ S_0 = f_I(U_I); \quad R_t = f_R(S_t, A_t).
\end{aligned}
\end{equation}
\begin{figure}
    \centering
    \resizebox{0.70\columnwidth}{!}{%
            \begin{circuitikz}
    \tikzstyle{every node}=[font=\HUGE]

    \node[circle, draw, minimum size=2.3cm] (S_tp1) at (3, 0) {$S_{t+1}$};
    \node[circle, draw, minimum size=2.3cm] (S_tp2) at (9, 0) {$S_{t+2}$};
    \node[circle, draw, minimum size=2.3cm] (S_t) at (-3, 0) {$S_t$};
    \node[circle, draw, minimum size=2.3cm] (A_t) at (-3, -3) {$A_t$};
    \node[circle, draw, minimum size=2.3cm] (A_tp1) at (3, -3) {$A_{t+1}$};
    \node[circle, draw, minimum size=2.3cm] (A_tp2) at (9, -3) {$A_{t+2}$};
    \node[circle, draw, minimum size=2.3cm] (R_t) at (0, -3) {$R_t$};
    \node[circle, draw, minimum size=2.3cm] (R_tp1) at (6, -3) {$R_{t+1}$};
    \node[circle, draw, minimum size=2.3cm] (R_tp2) at (12, -3) {$R_{t+2}$};
    \node[circle, draw, fill=gray!20, minimum size=2.3cm] (U_tm1) at (-3, 3) {$U_{t-1}$};
    \node[circle, draw, fill=gray!20, minimum size=2.3cm] (U_t) at (3, 3) {$U_{t}$};
    \node[circle, draw, fill=gray!20, minimum size=2.3cm] (U_tp1) at (9, 3) {$U_{t+1}$};
    \node at ([xshift=-1.5cm]S_t.west) {\Huge $\dots$};
    \node at ([xshift=1.5cm]S_tp2.east) {\Huge $\dots$};
    \draw[->, thick, >=Stealth]([xshift=-1cm]S_t.west) -- (S_t);
    \draw[->, thick, >=Stealth] (S_t) -- (S_tp1);
    \draw[->, thick, >=Stealth] (S_tp1) -- (S_tp2);
    \draw[->, thick, >=Stealth] (S_t) -- (A_t);
    \draw[->, thick, >=Stealth] (S_tp1) -- (A_tp1);
    \draw[->, thick, >=Stealth] (S_tp2) -- (A_tp2);
    \draw[->, thick, >=Stealth] (A_t) -- (S_tp1);
    \draw[->, thick, >=Stealth] (A_tp1) -- (S_tp2);
    \draw[->, thick, >=Stealth] (U_t) -- (S_tp1);
    \draw[->, thick, >=Stealth] (U_tm1) -- (S_t);
    \draw[->, thick, >=Stealth] (U_tp1) -- (S_tp2);
    \draw[->, thick, >=Stealth] (S_tp2) -- ([xshift=1cm]S_tp2.east);
    \draw[->, thick, dashed, >=Stealth] (A_tp2) -- ++(4.97,2.48);
    \draw[->, thick, >=Stealth] (S_t) -- (R_t);
    \draw[->, thick, >=Stealth] (A_t) -- (R_t);
    \draw[->, thick, >=Stealth] (S_tp1) -- (R_tp1);
    \draw[->, thick, >=Stealth] (A_tp1) -- (R_tp1);
    \draw[->, thick, >=Stealth] (S_tp2) -- (R_tp2);
    \draw[->, thick, >=Stealth] (A_tp2) -- (R_tp2);
\end{circuitikz}}
    \caption{MDP causal graph. White nodes are the endogenous variables; grey nodes are the exogenous variables.}
    \label{fig:MDP_DAG}
\end{figure}
\noindent The exogenous variables $U_t$ capture all sources of randomness in the MDP, rendering the structural equations deterministic (given $U_t$). In this setting, the interventional distribution \jl{$P(\interv{S_{t+1}}{S_t}{s}{A_t}{a} = s')$ corresponds to the MDP's transition probabilities $P(\cdot \mid s, a)$ and observations are paths $\tau = (s_0, a_0, \dots, s_{T-1}, a_{T-1}, s_T)$ of length $T$.}

Counterfactual inference is applied at the level of the individual timesteps: given the observed transition $s_t, a_t \rightarrow s_{t+1}$ at time $t$, \jl{we first infer the posterior distribution over the exogenous variable $U_t$ consistent with this observation, $P(U_t \mid S_t = s_t, A_t = a_t, S_{t+1} = s_{t+1})$. We then use this posterior to evaluate the counterfactual probability of reaching outcome $S_{t+1} = \tilde{s}’$, had we been in state $S_t = \tilde{s}$, and instead performed action $A_t = \tilde{a}$.} Formally, we define this counterfactual probability as $\tilde{P}_t(\tilde{s}' \mid \tilde{s}, \tilde{a}) = P(\interv{S_{t+1}}{S_t}{\tilde{s}}{A_t}{\tilde{a}} = \tilde{s}' \mid S_t = s_t, A_t = a_t, S_{t+1} = s_{t+1})$ where $\interv{S_{t+1}}{S_t}{\tilde{s}}{A_t}{\tilde{a}}$ denotes variable $S_{t+1}$ in the SCM after the intervention $S_t \gets \tilde{s}$ and $A_t \gets \tilde{a}$.

\jl{These single-timestep counterfactual probabilities can be combined across all timesteps to construct a non-stationary \textit{counterfactual} MDP \citep{tsirtsis2021counterfactual}, allowing us to reason about counterfactual outcomes over an entire trajectory.}

\begin{definition}[Counterfactual MDP (CFMDP)]
Given an MDP $\mathcal{M}$, \revised{a corresponding SCM $\mathcal{C}$ of the form~\eqref{eq:mdp_scm} (i.e., one whose interventional distribution matches the transition probabilities of $\mathcal{M}$)}, and an observed path $\tau = (s_0, a_0, s_1, a_1, \dots, s_{T-1}, a_{T-1}, s_T)$, the corresponding counterfactual MDP is a non-stationary MDP ${\tilde{\mathcal{M}}_{\tau}}=(\mathcal{S}, \mathcal{A}, \tilde{P}, \tilde{P}_{I}, {R})$ where $\tilde{P}_I(s_0) = 1$ and the time-dependent transition kernel is given, for any $t \in \{0,\ldots, |\tau|-1\}$, by $\tilde{P}_{t}(\tilde{s}' \mid \tilde{s}, \tilde{a}) = 
      P({S_{t+1}}_{(S_t=\tilde{s}, A_t=\tilde{a})} = \tilde{s}' \mid 
      S_t = s_t, A_t = a_t, S_{t+1} = s_{t+1}).$
\end{definition}

In addition to enabling predictions under hypothetical scenarios, counterfactual inference lets us identify the factors that were causally responsible for an observed outcome. \revised{In an MDP, an RL agent controls the outcome through its actions, so these factors correspond to the actions selected by the agent's policy. The CFMDP $\tilde{\mathcal{M}}_\tau$ lets us evaluate how the outcome would have changed under different actions, and therefore identify the fewest changes to the observed actions that would have led to a better outcome. However, constructing $\tilde{\mathcal{M}}_\tau$ requires knowing both the MDP $\mathcal{M}$ and its corresponding SCM $\mathcal{C}$, which is rarely the case; most often we face both statistical and causal uncertainty, which we address in the following sections.}

\section{Counterfactual Inference in Uncertain MDPs}
\jl{We first review how \citet{pmlr-v162-zhang22ab}’s partial counterfactual inference approach applies to MDPs, and how \citet{lally2025robustcounterfactualinferencemarkov} build on this to derive closed-form bounds on counterfactual MDP probabilities.} We then extend these bounds to IMDPs, forming the basis of our robust counterfactual inference framework.

\subsection{Canonical SCMs}
\jl{To address the problem of causal model unidentifiability, \citet{pmlr-v162-zhang22ab} introduce a family of \textit{canonical SCMs} that can simultaneously capture all possible counterfactual probability distributions across all compatible causal models. They show how to optimise over these canonical SCMs to find the SCMs that minimise and maximise counterfactual probabilities, yielding tight bounds.}

\jl{\citet{lally2025robustcounterfactualinferencemarkov} construct canonical SCMs for MDPs. Specifically, given a known MDP $\mathcal{M}$ (with a known transition function $P_\mathcal{M}$), the first step is to associate each value of the exogenous variable $u_t \in U_t$ with a unique deterministic mapping $f: \mathcal{S} \times \mathcal{A} \times \mathcal{U} \rightarrow \mathcal{S}$ that defines the next state for every state-action pair. Since there are finitely many such mappings\footnote{Wlog, \citet{pmlr-v162-zhang22ab} prove that for SCMs with categorical endogenous variables, the exogenous distributions can be represented by discrete, finite variables.}, this gives $|U_t| = {|\mathcal{S}|}^{|\mathcal{S}| \times |\mathcal{A}|}$. This mapping is represented by an indicator variable $\mu \in \{0,1\}^{|\mathcal{S}| \times |\mathcal{A}| \times |{U_t}| \times |\mathcal{S}|}$ such that, for any $s, a, u_t, s'$:}
\[
\mu_{s, a, u_t, s'} = 
\begin{cases}
    1 & \text{if $f(s, a, u_t) = s'$}\\
    0 & \text{otherwise.}\\
\end{cases}
\]
\jl{That is, $\mu_{s, a, u_t, s'} = 1$ when the mapping associated with $u_t$ leads to the next state $s'$ from state-action pair $(s, a)$}. Next, we define a vector $\theta \in \mathbb{R}^{|{U_t}|}$, where each $\theta_{u_t}$ represents the probability $P(U_t = u_t)$. Each instantiation of $\theta$ defines a unique SCM. Given an observed transition $s_t, a_t \rightarrow s_{t+1}$, we can express the counterfactual probability $\tilde{P}_{\mathcal{M},t}(s' \mid s, a)$ of any transition $s, a \rightarrow s'$ in terms of $\mu$ and $\theta$ as follows:
\begin{equation}\label{eq:cf_probs}
    \tilde{P}_{\mathcal{M},t}(s' \mid s, a) = \frac{\sum_{u_t = 1}^{|U_t|} \mu_{s, a, u_t, s'} \cdot \mu_{s_t, a_t, u_t, s_{t+1}} \cdot \theta_{u_t}}{P_{\mathcal{M}}(s_{t+1} \mid s_t, a_t)}\,.
\end{equation}
\jl{To obtain bounds $[\tilde{P}^{\it LB}_{\mathcal{M},t}(s' \mid s, a),\tilde{P}^{\it UB}_{\mathcal{M},t}(s' \mid s, a)]$ on the counterfactual transition probabilities, we optimise over all values of $\theta$ (note that $\mu$ is fixed and is not optimised over) consistent with the observed transition and interventional data (i.e., the MDP transition probabilities). This results in the following optimisation problem (omitting the constant denominator in \eqref{eq:cf_probs} from the objective):}
\begin{align}
\label{eq: optimisation objective}
\min_{\theta} / \max_{\theta} \textstyle\sum_{u_t=1}^{|U_t|} \mu_{s,a,u_t,s'} \cdot \mu_{s_t, a_t, u_t, s_{t+1}} \cdot \theta_{u_t},\\
\label{eq: interventional constraint}
\text{s.t.}\ \textstyle\sum_{u_t=1}^{|U_t|} \mu_{\hat s,\hat a,u_t,\hat s'}\cdot\theta_{u_t} = P_{\mathcal M}(\hat s'\mid \hat s,\hat a),\ \forall \hat s,\hat a,\hat s',\\
\label{eq: u constraints}
\forall u_t, 0 \leq \theta_{u_t} \leq 1 \text{ and } \textstyle\sum_{u_t=1}^{|U_t|} \theta_{u_t} = 1\,.
\end{align}
\subsection{Closed-form Bounds for MDPs}
\label{sec: closed-form bounds mdps}
\citet{lally2025robustcounterfactualinferencemarkov} prove that, in MDP settings, this optimisation problem reduces to exact, closed-form expressions. These bounds also incorporate two key assumptions, counterfactual stability and counterfactual monotonicity, to avoid unintuitive counterfactual probabilities. Without these assumptions, the optimisation in \eqref{eq: optimisation objective}-\eqref{eq: u constraints} tends to produce loose or trivial $[0, 1]$ bounds, which compound over multiple transitions in sequential decision-making and reduce the usefulness of counterfactual inference. 

\jl{Counterfactual stability is a standard assumption in the literature \citep{oberst2019counterfactual, tsirtsis2021counterfactual} and states that the outcome under an intervention changes only if the relative probability of some other outcome increases relative to the observed outcome.} Counterfactual monotonicity ensures that observing an outcome cannot make it less likely in the counterfactual setting, and that unobserved (but possible) outcomes cannot become more likely. 
Formal definitions and the resulting bounds are in Appendix \ref{app: assumptions and bounds}.

\subsection{Extending the Bounds to IMDPs}
To extend these counterfactual probability bounds to IMDPs, we generalise the optimisation problem in \eqref{eq: optimisation objective}-\eqref{eq: u constraints} as follows. Let $\mathcal{M_\updownarrow}$ denote an arbitrary IMDP. For a given counterfactual transition $s, a \rightarrow s'$ and observed transition $s_t, a_t \rightarrow s_{t+1}$, the upper and lower bounds $\tilde{P}_{\mathcal{M}_\updownarrow,t}^{\it UB}(s' \mid s, a)$ and $\tilde{P}_{\mathcal{M}_\updownarrow,t}^{\it LB}(s' \mid s, a)$ are obtained by solving
\begin{equation}
\begin{aligned}
&\underset{\mathcal{M} \in \mathcal{M_\updownarrow},\, \theta}{\min / \max}\left( \frac{\sum_{u_t = 1}^{|U_t|} \mu_{s, a, u_t, s'} \cdot \mu_{s_t, a_t, u_t, s_{t+1}} \cdot \theta_{u_t}}{P_{\mathcal{M}}(s_{t+1}\mid s_t, a_t)} \right) \\
\end{aligned}    
\label{eq: imdp optimisation}
\end{equation}
\jl{subject to constraints \eqref{eq: interventional constraint} and \eqref{eq: u constraints}, and crucially ensuring that $\mathcal{M}$ is a valid MDP within the IMDP's uncertainty set}, i.e., 
$\forall s, a, \sum_{s' \in \mathcal{S}}P_\mathcal{M}(s' \mid s, a) = 1$.
Since closed-form expressions for the counterfactual bounds are already available for a fixed MDP $\mathcal{M}$, the optimisation over the IMDP reduces to optimising these bounds over all admissible MDPs:
\begin{align}
\tilde{P}^{UB}_{\mathcal{M}_\updownarrow,t}(s' \mid s, a) = {\max}_{\mathcal{M} \in \mathcal{M_\updownarrow}} \left( \tilde{P}^{UB}_{\mathcal{M},t}(s' \mid s, a) \right),
\label{eq: optimisation ub}\\
\tilde{P}^{LB}_{\mathcal{M}_\updownarrow,t}(s' \mid s, a) = {\min}_{\mathcal{M} \in \mathcal{M_\updownarrow}} \left( \tilde{P}^{LB}_{\mathcal{M},t}(s' \mid s, a) \right).
\label{eq: optimisation lb}
\end{align}
This formulation enables us to derive closed-form expressions for the counterfactual probability bounds. \jl{Without it, computing these bounds would require solving \eqref{eq: imdp optimisation} directly: a joint optimisation over admissible MDPs $\mathcal{M} \in \mathcal{M}_\updownarrow$, and over $\theta$ (whose cardinality is exponential in the size of the state and action spaces)}. \jl{However, deriving these closed-form expressions requires more than simply optimising the closed-form MDP bounds of \citet{lally2025robustcounterfactualinferencemarkov} independently over each probability interval: the resulting extremal values must also be simultaneously achievable by a single, valid MDP $\mathcal{M} \in \mathcal{M}_\updownarrow$, which makes extending their bounds to IMDPs non-trivial.}

\jl{The exact closed-form expression depends on whether the counterfactual state-action pair $(s,a)$ and the observed state-action pair have disjoint or overlapping support: in the IMDP setting, we say $(s, a)$ has disjoint support from the observed $(s_t,a_t)$ if their supports are disjoint for \emph{every} admissible MDP $\mathcal{M} \in \mathcal{M}_\updownarrow$; otherwise, they have overlapping support. 
The resulting closed-form bounds are summarised in Table~\ref{tab:upper-bounds}, using $p^{\it LB}_{s, a}(s')$ as shorthand for $P^{\it LB}_{\mathcal{M}_\updownarrow}(s' \mid s, a)$ and $p^{\it UB}_{s, a}(s')$ for $P^{UB}_{\mathcal{M}_\updownarrow}(s' \mid s, a)$, for all $s, a, s'$; full theorem
statements and proofs are given in Appendix~\ref{app: proofs}.}
\begin{table*}[h!]
\centering
\small
\setlength{\tabcolsep}{3pt}
\renewcommand{\arraystretch}{1.25}
\caption{Closed-form counterfactual probability upper and lower bounds. For $s' \neq s_{t+1}$ in the overlapping case, the upper and lower bound conditions are each applied in order, where \emph{CS always holds} if $p^{\it LB}_{s_t,a_t}(s')>0$ and $\frac{p^{\it LB}_{s,a}(s_{t+1})}{p^{\it UB}_{s_t,a_t}(s_{t+1})}\geq\frac{p^{\it UB}_{s,a}(s')}{p^{\it LB}_{s_t,a_t}(s')}$, and \emph{CS sometimes holds} if $p^{\it UB}_{s_t,a_t}(s')>0$ and $\frac{p^{\it UB}_{s,a}(s_{t+1})}{p^{\it LB}_{s_t,a_t}(s_{t+1})}\geq\frac{p^{\it LB}_{s,a}(s')}{p^{\it UB}_{s_t,a_t}(s')}$. $\mathcal{P}_{\it sum}=\sum_{s'\neq s_{t+1}}\tilde{P}^{\it UB}_{\mathcal{M}_\updownarrow,t}(s'\mid s,a)$ for the closed-form but conservative lower bound; for the tight, optimisation-based bound, $\mathcal{P}_{\it sum}=\max_{\mathcal{M}\in\mathcal{M}_\updownarrow} \textstyle\sum_{s'' \in \mathcal{S}\setminus\{s'\}} \tilde{P}_{\mathcal{M},t}^{\it UB}(s'' \mid s, a)$.}
\begin{tabular}{@{}p{0.12\textwidth} p{0.12\textwidth} p{0.33\textwidth} @{\hspace{8pt}} p{0.12\textwidth} p{0.25\textwidth}@{}}
\toprule
\textbf{Theorem} & \textbf{UB condition} & \textbf{Upper bound} $\tilde{P}^{\it UB}_{\mathcal{M}_\updownarrow,t}(s'\mid s,a)$ & \textbf{LB condition} & \textbf{Lower bound} $\tilde{P}^{\it LB}_{\mathcal{M}_\updownarrow,t}(s'\mid s,a)$ \\
\midrule
Thm \ref{theorem: observed imdp}: $(s,a)=(s_t,a_t)$
& n/a
& $1$ if $s' = s_{t+1}$, $0$ otherwise
& n/a
& $1$ if $s' = s_{t+1}$, $0$ otherwise \\
\midrule
Thms \ref{theorem: upper disjoint imdp}, \ref{theorem: lower disjoint imdp}: $(s,a)\neq(s_t,a_t)$, supports disjoint
& n/a
& $\min\Big\{1,\, \dfrac{p^{\it UB}_{s, a}(s')}{p^{\it LB}_{s_t, a_t}(s_{t+1})}\Big\}$
& n/a
& $\max\Big\{0,\, \dfrac{p^{\it LB}_{s, a}(s') - \big(1-p^{\it LB}_{s_t, a_t}(s_{t+1})\big)}{p^{\it LB}_{s_t, a_t}(s_{t+1})}\Big\}$ \\
\midrule
\multirow{5}{0.12\textwidth}{Thms \ref{theorem: upper overlapping imdp}, \ref{theorem: lower overlapping imdp}: $(s,a)\neq(s_t,a_t)$, supports overlap}
& $s'=s_{t+1}$
& $\min\Big\{1,\, \dfrac{p^{\it UB}_{s, a}(s_{t+1})}{p^{\it LB}_{s_t, a_t}(s_{t+1})}\Big\}$
& $s'=s_{t+1}$
& $\max\big(p^{\it LB}_{s,a}(s'),\,1-\mathcal{P}_{\it sum}\big)$ \\
& CS always holds
& $0$
& CS sometimes holds
& $0$ \\
& $p^{\it LB}_{s_t,a_t}(s')>0$
& $\min\big(p^{\it UB}_{s,a}(s'),\,1-p^{\it LB}_{s,a}(s_{t+1})\big)$
& otherwise
& $\max\big(0,\,1-\mathcal{P}_{\it sum}\big)$ \\
& $p^{\it UB}_{s_t,a_t}(s')=0$
& $\min\Big(1-p^{\it LB}_{s,a}(s_{t+1}),\,\dfrac{p^{\it UB}_{s,a}(s')}{p^{\it LB}_{s_t,a_t}(s_{t+1})}\Big)$
& & \\
& otherwise
& $\min\Big(1-p^{\it LB}_{s,a}(s_{t+1}),$\newline
  $\phantom{\min\Big(}\dfrac{p^{\it UB}_{s,a}(s')}{\max\Big(\begin{array}{@{}l@{}}1-\sum_{s''\neq s',s_{t+1}}p^{\it UB}_{s_t,a_t}(s''),\\ p^{\it LB}_{s_t,a_t}(s_{t+1})\end{array}\Big)}\Big)$
& & \\
\bottomrule
\end{tabular}
\label{tab:bounds}
\label{tab:upper-bounds}
\label{tab:lower-bounds}
\end{table*}

\jl{\paragraph{Proof Sketch}
Consider an observed transition $s_t, a_t \rightarrow s_{t+1}$ and counterfactual transition $s, a \rightarrow s’$. From ~\eqref{eq:cf_probs}, for a fixed MDP $\mathcal{M}$ and exogenous distribution $\theta$, we see that the counterfactual probability $\tilde{P}_{\mathcal{M}, t}(s’ \mid s, a)$ depends only on the subset of exogenous values $u_t$ consistent with both transitions, since $P_{\mathcal{M}}(s_{t+1} \mid s_t, a_t)$, the denominator in~\eqref{eq:cf_probs}, is fixed. Therefore, the optimisation involves identifying the $\theta$ that maximises/minimises the following sum: 
\[ 
    \sum_{u_t = 1}^{|U_t|} \mu_{s, a, u_t, s'} \cdot \mu_{s_t, a_t, u_t, s_{t+1}} \cdot \theta_{u_t} = \sum_{\substack{u_t \in U_t \\f(s_t, a_t, u_t) = s_{t+1} \\ f(s, a, u_t) = s'}} \theta_{u_t} \,.
\] 

Importantly, \citet{lally2025robustcounterfactualinferencemarkov} prove that the bounds for each $(s, a)$ depend only on the probability distribution of this state-action pair and the observed state-action pair $(s_t, a_t)$, so we can decompose the optimisation problem. Since the bounds proven by \citet{lally2025robustcounterfactualinferencemarkov} are tight, extending these bounds from MDPs to IMDPs \eqref{eq: imdp optimisation} reduces to \eqref{eq: optimisation ub} and \eqref{eq: optimisation lb}, where we aim to identify, for each transition, the max/min possible counterfactual upper/lower bound across all admissible MDPs within the IMDP. In the proofs, we search over the interval probabilities for $(s, a)$ and $(s_t, a_t)$ to obtain these bounds, while ensuring that a valid MDP exists (i.e., where the probability distribution for each state-action pair sums to 1) that results in these bounds. Intuitively, each case reflects a different way probability mass can be distributed between the transitions from $(s, a)$ and $(s_t, a_t)$ under this constraint: when $(s,a)=(s_t,a_t)$, the bounds are immediate (the same $(s_t,a_t)$ under the same context $u_t$ must yield the same outcome); when the supports are disjoint, the counterfactual stability and counterfactual monotonicity assumptions are vacuously satisfied, and do not constrain how $(s,a)$ and $(s_t,a_t)$ behave under the same $u_t$, so the bound depends only on their marginal probabilities; and when the supports are overlapping, these assumptions constrain how probability mass can be distributed across the shared support of $(s,a)$ and $(s_t,a_t)$.
}

\jl{
\begin{proposition}[Tightness of the Counterfactual IMDP Bounds]
    \label{prop:tightness}
The upper bounds $\tilde{P}^{UB}_{\mathcal{M}_\updownarrow,t}(s' \mid s, a)$ are tight for all $s, a, s'$. The lower bounds $\tilde{P}^{LB}_{\mathcal{M}_\updownarrow,t}(s' \mid s, a)$ are tight for all $s, a, s'$, except when $(s,a) \neq (s_t,a_t)$, the counterfactual and observed transitions have overlapping support, and the counterfactual stability constraint does not force the bound to $0$ for all admissible MDPs $\mathcal{M} \in \mathcal{M}_\updownarrow$, i.e., Cases 1 and 3 of Theorem~\ref{theorem: lower overlapping imdp}.
\end{proposition}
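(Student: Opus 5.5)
Tightness of every claimed bound follows the same two-step pattern. First, because the MDP bounds of \citet{lally2025robustcounterfactualinferencemarkov} are tight, for any fixed admissible $\mathcal{M}$ there is an SCM $\theta$ that satisfies \eqref{eq: interventional constraint}--\eqref{eq: u constraints} and attains $\tilde{P}^{UB}_{\mathcal{M},t}(s'\mid s,a)$ (resp.\ $\tilde{P}^{LB}_{\mathcal{M},t}$). By \eqref{eq: optimisation ub}--\eqref{eq: optimisation lb}, tightness of an IMDP bound therefore reduces to one claim: the extremal value in Table~\ref{tab:bounds} equals $\tilde{P}^{UB}_{\mathcal{M}^*,t}$ (resp.\ $\tilde{P}^{LB}_{\mathcal{M}^*,t}$) for some single valid MDP $\mathcal{M}^*\in\mathcal{M}_\updownarrow$. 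Second, we exhibit such an $\mathcal{M}^*$ case by case, following the theorems behind the table.

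For $(s,a)=(s_t,a_t)$ (Theorem~\ref{theorem: observed imdp}) any admissible $\mathcal{M}$ works, since the bound does not depend on $\mathcal{M}$.

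In the disjoint case (Theorems~\ref{theorem: upper disjoint imdp}, \ref{theorem: lower disjoint imdp}), the two rows $(s,a)$ and $(s_t,a_t)$ are different rows of the IMDP, so they can be chosen independently.
\begin{itemize}
\item For the upper bound, set $P_{\mathcal{M}^*}(s'\mid s,a)=p^{UB}_{s,a}(s')$ and $P_{\mathcal{M}^*}(s_{t+1}\mid s_t,a_t)=p^{LB}_{s_t,a_t}(s_{t+1})$.
\item For the lower bound, set $P_{\mathcal{M}^*}(s'\mid s,a)=p^{LB}_{s,a}(s')$ and $P_{\mathcal{M}^*}(s_{t+1}\mid s_t,a_t)=p^{LB}_{s_t,a_t}(s_{t+1})$.
\end{itemize}
In each case, complete both rows to distributions with the remaining mass placed within the intervals. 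This is possible whenever the IMDP is nonempty: a standard greedy fill starting from the lower bounds works because $\sum p^{LB}\le 1\le\sum p^{UB}$. Monotonicity of the closed-form MDP expression in these two quantities then shows the value is attained.

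In the overlapping case for the upper bound (Theorem~\ref{theorem: upper overlapping imdp}), each branch of the table identifies which interval endpoints the MDP expression is monotone in. I would build $\mathcal{M}^*$ by pushing exactly those entries to their endpoints:
\begin{itemize}
\item $p^{UB}_{s,a}(s')$ and $p^{LB}_{s,a}(s_{t+1})$ in the row $(s,a)$;
\item in the last branch, $P(s''\mid s_t,a_t)=p^{UB}_{s_t,a_t}(s'')$ for $s''\neq s',s_{t+1}$, so that the denominator equals $\max\bigl(1-\sum p^{UB},p^{LB}\bigr)$.
\end{itemize}
The remaining entries are then redistributed to keep each row summing to one. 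I must check that both terms inside the $\min$ are simultaneously feasible. This follows because they involve different coordinates of the same row, $s'$ versus $s_{t+1}$, and any resulting conflict is absorbed by the $\min$ itself. The ``CS always holds'' branch gives $0$, which is trivially attained, and the ``CS sometimes holds'' lower-bound branch is attained by the MDP that witnesses the stated ratio inequality.

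For the lower bound with $s'=s_{t+1}$ and in the ``otherwise'' branch (Cases 1 and 3 of Theorem~\ref{theorem: lower overlapping imdp}), we make no tightness claim. The bound there uses $\mathcal{P}_{\it sum}=\sum_{s'}\tilde{P}^{UB}_{\mathcal{M}_\updownarrow,t}$, which maximises each summand separately, and these maximisers may require incompatible MDPs. I expect the main obstacle to be the overlapping upper-bound case. There the ordering of branches, and the coupling between the denominator (row $(s_t,a_t)$) and the numerator constraint $1-p^{LB}_{s,a}(s_{t+1})$, must be shown to be realised by a single MDP that also stays inside the correct branch's conditions. My plan is to handle this by fixing the branch conditions first and then doing the greedy completion on the unconstrained coordinates.
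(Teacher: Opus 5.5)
Your reduction and case structure are the paper's. Tightness of the MDP-level bounds reduces everything to \eqref{eq: optimisation ub}--\eqref{eq: optimisation lb}. You then exhibit one extremal admissible MDP per branch, and you exclude Cases 1 and 3 of Theorem~\ref{theorem: lower overlapping imdp} for the paper's reason: $\mathcal{P}_{\it sum}$ adds separately maximised terms.

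The step that fails is realisability. Nonemptiness, $\sum p^{\it LB}\le 1\le \sum p^{\it UB}$, does not let you pin a coordinate at an endpoint and greedy-fill the rest. Take a row with two successors $x,y$ and interval $[0,0.6]$ for both. The row is nonempty, yet pinning $x$ at its lower endpoint forces $P(y)=1>0.6$. This is not cosmetic: without normalisation the table values are genuinely not attained. In the disjoint lower bound with $s'=x$ and $P(s_{t+1}\mid s_t,a_t)=1$, the raw formula gives $0$, while every admissible MDP gives at least $0.4$. The paper closes this step with Assumption~\ref{ass: normalisation}, which you never invoke.

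Even under normalisation, only two kinds of pinning are guaranteed feasible. Any single coordinate can be set to either endpoint. Two coordinates of the same row can be set to \emph{opposite} endpoints, one at $p^{\it UB}$ and one at $p^{\it LB}$, because the two feasibility inequalities are exactly the two normalisation inequalities. Two coordinates at the same endpoint are not jointly feasible in general. You must check that each construction uses only such pinnings. They do, and then there is no ``conflict absorbed by the $\min$'': $p^{\it UB}_{s,a}(s')+p^{\it LB}_{s,a}(s_{t+1})\le 1$ follows directly from normalisation.

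Two overlapping upper-bound branches are also left underdetermined, and they are exactly the ones you flagged as the obstacle.

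\emph{Branch $p^{\it LB}_{s_t,a_t}(s')>0$.} You must also pin the row $(s_t,a_t)$ at $P(s'\mid s_t,a_t)=p^{\it LB}_{s_t,a_t}(s')$ and $P(s_{t+1}\mid s_t,a_t)=p^{\it UB}_{s_t,a_t}(s_{t+1})$. With this choice, the negation of ``CS always holds'' is precisely the failure of counterfactual stability at $\mathcal{M}^*$. So its MDP-level bound is $\min(P(s'\mid s,a),1-P(s_{t+1}\mid s,a))$ rather than $0$.

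\emph{Last branch.} The essential choice is $P(s'\mid s_t,a_t)=0$, which you never state. It is what places $\mathcal{M}^*$ in the MDP-level branch whose denominator is $P(s_{t+1}\mid s_t,a_t)$. Any positive mass on $s'$ puts $\mathcal{M}^*$ in a branch with the generally smaller value $\min(P(s'\mid s,a),1-P(s_{t+1}\mid s,a))$, or $0$. Moreover, putting every $s''\neq s',s_{t+1}$ at $p^{\it UB}_{s_t,a_t}(s'')$ is infeasible whenever $1-\sum_{s''\neq s',s_{t+1}}p^{\it UB}_{s_t,a_t}(s'')<p^{\it LB}_{s_t,a_t}(s_{t+1})$. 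Instead, set $P(s_{t+1}\mid s_t,a_t)$ directly to the $\max$ in the denominator, as the paper does. This is feasible because $p^{\it LB}_{s_t,a_t}(s')=0$ together with normalisation gives $\sum_{s''\neq s'}p^{\it UB}_{s_t,a_t}(s'')\ge 1$.

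With these repairs your argument becomes the paper's.
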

}
\jl{\paragraph{Remark} In every case where a tight bound can be obtained (Theorems \ref{theorem: observed imdp}, \ref{theorem: upper disjoint imdp}, \ref{theorem: lower disjoint imdp}, \ref{theorem: upper overlapping imdp}, and Case 2 of \ref{theorem: lower overlapping imdp}, each closed-form bound can be expressed as a function of a small, fixed set of individual transition probabilities. This lets us explicitly construct a single admissible MDP that sets each of these probabilities to its IMDP-level extremal (upper or lower) value, thereby minimising or maximising the overall expression. However, Cases 1 and 3 of \ref{theorem: lower overlapping imdp} differ structurally, as the bound depends on the sum of the upper-bound probabilities of alternative outcomes $\sum_{s'' \in \mathcal{S}\setminus\{s'\}}{\tilde{P}_{\mathcal{M},t}^{\it UB}(s'' \mid s, a)}$, each term of which is itself the result of a separate optimisation over $\mathcal{M}_\updownarrow$. This leaves us with two options: (i) solving the nested optimisation when necessary to obtain the true maximum of $\sum_{s'' \in \mathcal{S}\setminus\{s'\}}{\tilde{P}_{\mathcal{M},t}^{\it UB}(s'' \mid s, a)}$ achievable by a single admissible MDP, giving a tight bound, or (ii) computing a conservative lower bound by replacing this sum with the sum of the individually-optimised IMDP-level upper bounds $\sum_{s'' \in \mathcal{S}\setminus\{s'\}}{\tilde{P}_{\mathcal{M}_\updownarrow,t}^{\it UB}(s'' \mid s, a)}$, which need not be jointly achievable. We compare these two approaches in our experiments.}

\paragraph{Bounds for Gumbel-max SCMs}
While this is not the main focus of our work, we also extend the Gumbel-max SCM counterfactual inference approach \citep{oberst2019counterfactual, tsirtsis2021counterfactual, killian2022counterfactually} from MDPs to IMDPs and, to our knowledge, provide the first bounds on Gumbel-max counterfactual probabilities for IMDPs. \jl{Although these bounds are not tight in general, they may be of independent interest given the popularity of Gumbel-max SCMs; we also use them in our experimental evaluation (Section~\ref{sec:eval}) as a baseline to assess the robustness of our approach. The bounds and their proof are given in Appendix~\ref{app: gumbel max}.}

\section{Methodology}\label{sec:methods}
\revised{We first describe how to derive an IMDP from paths sampled from the unknown MDP, and how to construct a counterfactual IMDP (CFIMDP) for a given observed path.} We then formalise the notion of \revised{minimal robust counterfactual interventions} and show how to compute them via robust value iteration over the CFIMDP.

\subsection{Learning IMDPs from Data}
\label{sec: imdp construction}
We begin by collecting observed trajectories of length $|\tau|$ (an\\ environment-specific hyperparameter) from either the initial state, if defined, or any non-sink state. Following the standard approach \citep{suilen2022robust, meggendorfer2024odds, schnitzer2024certifiably}, we estimate the IMDP by computing a confidence interval for each transition probability from these observations, assuming that the MDP's topology is known \jl{(the so-called \textit{grey-box} setting)\footnote{The \textit{black-box} alternative, where the topology is instead inferred from data by assuming a lower bound on the minimum transition probability \citep{meggendorfer2024odds}, is also compatible with our framework.}}. We use the Clopper-Pearson method \citep{clopper1934use}, with an overall confidence level $1-\alpha$ distributed evenly across all nondeterministic transitions (see Appendix \ref{app: confidence proof} for more details). \revised{We assign a minimum probability $\epsilon$ to all possible transitions ($\epsilon=0.01$, or $10^{-4}$ for Sepsis, whose smallest transition probability is approximately $10^{-4}$) to ensure that observed transitions have strictly positive lower bounds, as required for valid counterfactual inference.} \revised{We note our framework is agnostic to the estimation method: more sample-efficient methods than Clopper-Pearson may yield tighter IMDP bounds from the same data, and hence tighter CFIMDP bounds.} 

\subsection{Constructing the CFIMDP}
Given the counterfactual probability bounds in Table~\ref{tab:upper-bounds}, we can compute a non-stationary counterfactual interval MDP (CFIMDP) for any given IMDP $\mathcal{M}_\updownarrow$ and observed path $\tau$. For each transition observed in $\tau$, we compute the counterfactual probability bounds for all possible transitions in the IMDP. These bounds are then combined to form a non-stationary CFIMDP by concatenating the time-dependent counterfactual transition probability bounds across time steps. Formally, a CFIMDP is defined as follows:

\begin{definition}[Counterfactual Interval MDP (CFIMDP)\label{def:cfimdp} \citep{lally2025robustcounterfactualinferencemarkov}]
Given an observed path $\tau$ and IMDP $\mathcal{M}_\updownarrow$, the counterfactual probability for each transition in the CFIMDP ${\tilde{\mathcal{M}}^{\updownarrow}_{\tau}}$ is defined, for $t=0,\ldots,|\tau|-1$, as $\tilde{P}^{\updownarrow}_{t}({s}' \mid {s}, {a}) = [\tilde{P}_{t}^{\it LB}({s'} \mid {s}, {a}), \tilde{P}_{t}^{\it UB}({s'} \mid {s}, {a})]$.
\end{definition}
\jl{\begin{proposition}[Confidence propagation]
\label{prop: confidence}
Let $\mathcal{M}_\updownarrow$ be a data-driven IMDP, which contains the true MDP with probability at least $1-\alpha$. Then the CFIMDP derived from $\mathcal{M}_\updownarrow$ contains the true CFMDP with probability at least $1-\alpha$ (proof in Appendix~\ref{app: confidence proof}).
\end{proposition}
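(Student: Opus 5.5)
The proof is a containment argument on a single high-probability event, with no need to union-bound over counterfactual transitions. Let $E$ be the event that the true MDP $\mathcal{M}^\ast$ lies in $\mathcal{M}_\updownarrow$, i.e., $P^{\it LB}_{\mathcal{M}_\updownarrow}(s'\mid s,a)\le P_{\mathcal{M}^\ast}(s'\mid s,a)\le P^{\it UB}_{\mathcal{M}_\updownarrow}(s'\mid s,a)$ for all $s,a,s'$.

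First I will establish that $\Pr(E)\ge 1-\alpha$. This follows from the construction in Section~\ref{sec: imdp construction}: the Clopper--Pearson intervals are computed at level $1-\alpha/K$ for each of the $K$ nondeterministic transitions, and a union bound gives $\Pr(E)\ge 1-\alpha$. Two points need care here. Deterministic transitions under the known (grey-box) topology are trivially contained. The minimum-probability floor $\epsilon$ must not exclude the true probabilities; I will take it to lie below the smallest true transition probability, as the paper's choice of $\epsilon$ suggests. If the statement's hypothesis is taken as given, this step is simply assumed.

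The main step is deterministic: on $E$, for every $t$ and every $(s,a,s')$, the true counterfactual probability lies in $[\tilde P^{\it LB}_{\mathcal{M}_\updownarrow,t}(s'\mid s,a),\tilde P^{\it UB}_{\mathcal{M}_\updownarrow,t}(s'\mid s,a)]$. I will fix any SCM $\mathcal{C}$ compatible with $\mathcal{M}^\ast$ and satisfying counterfactual stability and monotonicity, with exogenous distribution $\theta^\ast$. The quantity $\tilde P_t(s'\mid s,a)$ is given by \eqref{eq:cf_probs} with $\theta=\theta^\ast$ and $\mathcal{M}=\mathcal{M}^\ast$. The pair $(\mathcal{M}^\ast,\theta^\ast)$ is feasible for \eqref{eq: imdp optimisation}: $\theta^\ast$ satisfies \eqref{eq: interventional constraint}--\eqref{eq: u constraints} because $\mathcal{C}$ entails $\mathcal{M}^\ast$, and $\mathcal{M}^\ast\in\mathcal{M}_\updownarrow$ on $E$. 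Hence $\tilde P_t$ lies between the min and the max of \eqref{eq: imdp optimisation}. By \eqref{eq: optimisation ub}--\eqref{eq: optimisation lb}, these extremes equal the closed-form entries of Table~\ref{tab:upper-bounds}, since the theorems in Appendix~\ref{app: proofs} show the table entries attain (upper bounds and tight lower-bound cases) or conservatively enclose (Cases 1 and 3 of Theorem~\ref{theorem: lower overlapping imdp}) these optima.

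It follows that on $E$ every time-indexed kernel $\tilde P_t$ of the true CFMDP lies in the corresponding interval of the CFIMDP of Definition~\ref{def:cfimdp}. The initial distributions agree (a point mass at $s_0$), as do the rewards. So $E$ implies that the CFIMDP contains the true CFMDP, and $\Pr(\text{containment})\ge\Pr(E)\ge1-\alpha$. Because the containment holds simultaneously for all $t$ and all transitions on the same event, no further union bound is incurred.

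The main obstacle I expect is making the soundness direction rigorous in the conservative cases. There I must check that replacing the jointly achievable sum by $\mathcal{P}_{\it sum}=\sum_{s''\neq s_{t+1}}\tilde P^{\it UB}_{\mathcal{M}_\updownarrow,t}(s''\mid s,a)$ only lowers the bound. This holds because each summand upper-bounds its per-MDP counterpart, so $1-\mathcal{P}_{\it sum}$ is no larger than the per-MDP value. A second subtlety is the well-definedness of the denominator $P_{\mathcal{M}^\ast}(s_{t+1}\mid s_t,a_t)>0$, which holds because $s_{t+1}$ was actually observed under $\mathcal{M}^\ast$.
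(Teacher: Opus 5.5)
Your proposal is correct and follows essentially the same route as the paper. Both condition on the single event that the true MDP lies in $\mathcal{M}_\updownarrow$, which has probability at least $1-\alpha$ by the union-bounded Clopper--Pearson construction. On that event, every true counterfactual transition probability lies within the IMDP-level max/min bounds simultaneously, and the conservative lower bounds only widen the intervals. You are more explicit than the paper on several points: the feasibility of $(\mathcal{M}^\ast,\theta^\ast)$ for an SCM satisfying counterfactual stability and monotonicity, the positivity of the denominator, and the caveat that the $\epsilon$ floor must not exclude true probabilities. The underlying argument is the same.
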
}

\subsection{Minimal Robust Counterfactual Interventions}
The CFIMDP we just introduced allows us to systematically reason about counterfactual outcomes. Specifically, given an observed trajectory whose cumulative reward falls below a threshold $\lambda$, we seek an intervention that guarantees an \revised{expected counterfactual cumulative reward $\geq \lambda$}. \revised{Using the CFIMDP, we can identify interventions that would have improved on the factual outcome} in a \emph{robust} way w.r.t.\ all models induced by statistical and causal uncertainties. Moreover, we are interested in \emph{minimal} interventions, i.e., policies that require the fewest action changes to achieve this. Given our RL context, we focus on the expected cumulative reward. Properties like reach-avoid can be captured this way via a suitable Bellman recursion \citep{ABATE20082724}; more complex temporal logic properties can be encoded as reachability by taking the product of the CFIMDP with a reward machine \citep{icarte2022reward}, which
we leave for future work.

Let $\tau = (s_0, a_0, \dots, s_{|\tau|-1}, a_{|\tau|-1}, s_{|\tau|})$ be an observed path,
and let $G(\tau) = \sum_{t=0}^{|\tau|-1} R(s_t, a_t)$ denote its cumulative reward.
\revised{A \emph{counterfactual policy} $\pi$ chooses an action $\pi(t,s)$ for each time step
$t$ and state $s$. We write $\tilde{\mathcal M}_\tau\in\tilde{\mathcal{M}}^{\updownarrow}_{\tau}$
for any CFMDP within the CFIMDP, i.e., any non-stationary MDP whose transition
probabilities satisfy
$\tilde P^{\it LB}_t(s'\mid s,a)\le\tilde P_t(s'\mid s,a)\le\tilde P^{\it UB}_t(s'\mid s,a)$
for all $t$, $s$, $a$ and $s'$. Following $\pi$ in $\tilde{\mathcal M}_\tau$ induces a distribution
$\tilde{\mathcal M}_\tau(\pi)$ over counterfactual paths
$\tilde\tau=(\tilde s_0,\tilde a_0,\ldots,\tilde s_{|\tau|})$, and we define the worst-case
expected counterfactual cumulative reward of $\pi$ from the initial state $s_0$ as
\begin{equation}\label{eq:wc_v}
V^{\bot,\pi}(s_0) = \min_{ \tilde{\mathcal{M}}_{\tau} \in \tilde{\mathcal{M}}^{\updownarrow}_{\tau} }
\mathbb{E}_{\tilde{\tau} \sim \tilde{\mathcal{M}}_{\tau}(\pi) }
\big[G(\tilde{\tau}) \mid \tilde s_0 = s_0\big].
\end{equation}
The set $\tilde{\mathcal{M}}^{\updownarrow}_{\tau}$ contains the CFMDP of every MDP in the
IMDP under every corresponding SCM. Since the CFIMDP bounds each counterfactual transition
probability separately, it may also contain CFMDPs that combine probabilities induced by
different MDPs or SCMs, and so do not correspond to any single causal model compatible with the IMDP. As is standard
for IMDPs~\citep{GIVAN200071}, taking the worst case over this larger set yields
conservative, but efficiently computable, guarantees.}

\revised{For $k \in \{0,\dots,|\tau|\}$, we denote with $\Pi_{\tau,k}$ the set of counterfactual policies that change no more than $k$ actions from $\tau$, i.e.,
$$ \Pi_{\tau,k} =\left\{ \pi \;\middle|\; \sum_{t=0}^{|\tau|-1} \mathbbm{1}(\tilde a_t \neq a_t) \leq k,
\ \forall\tilde{\mathcal M}_\tau\in\tilde{\mathcal{M}}^{\updownarrow}_{\tau}
\text{ and } 
\tilde\tau\in \mathbf{supp}(\tilde{\mathcal M}_\tau(\pi)) \right\},$$
where $\mathbbm{1}\{\cdot\}$ denotes the indicator function, $\tilde a_t = \pi(t,\tilde s_t)$ is the action taken by $\pi$ at time $t$ along
$\tilde\tau$ and $\mathbf{supp}(\tilde{\mathcal M}_\tau(\pi))$ denotes the set of admissible paths of $\tilde{\mathcal M}_\tau(\pi)$.}

\revised{\begin{definition}[Minimal Robust Counterfactual Intervention]\label{def:mrci}
A counterfactual policy $\pi^*$ is a \emph{minimal robust counterfactual intervention} for a
path $\tau$ and threshold $\lambda$ if, for some $k^* \in \{0,\ldots,|\tau|\}$:
\begin{enumerate}
    \item $\pi^* \in \Pi_{\tau,k^*}$ and $V^{\bot,\pi^*}(s_0)\geq \lambda$, and
    \item for all $k < k^*$, $\max_{\pi \in \Pi_{\tau,k}} V^{\bot,\pi}(s_0) < \lambda$.
\end{enumerate}
\end{definition}}

\noindent \revised{Conversely, if the observed cumulative reward is at least $\lambda$, we can
identify the minimal interventions that would have robustly led to a cumulative reward
below $\lambda$, i.e., the observed actions responsible for the success, by reversing the
inequalities and replacing the $V^{\bot,\pi}$ with the corresponding
best-case (optimistic) value $V^{\top,\pi}$.}

\jl{\paragraph{\revised{Computing minimal robust counterfactual interventions.}} \revised{We compute minimal robust counterfactual interventions via pessimistic value
iteration~\citep{mathiesen2024intervalmdp}, which maximises the expected cumulative reward
under the worst-case CFMDP within the CFIMDP. To restrict the search to
$\pi \in \Pi_{\tau,k}$, we augment the state with a counter $k$ that tracks the remaining allowed number of
action changes, and define
\begin{multline}\label{eq:bellman_budget}
V^*(s,t,k) = \max_{\pi} \;\min_{\tilde{P}_t \in \tilde{P}^{\updownarrow}_{t}}
\mathbb{E}_{s' \sim \tilde{P}_t(\cdot \mid s, \pi(t,s))} \Big[R\big(s,\pi(t,s)\big) \\
+ V^*\big(s',\,t+1,\,k - \mathbbm{1}\{\pi(t,s) \neq a_t\}\big)\Big],
\end{multline}
where $V^*(s,|\tau|,k)=0$ for all $s$ and $k$, and, for $k=0$, the maximum is restricted to $\pi(t,s)=a_t$. Since the CFIMDP selects transition probabilities independently for each $(t,s,a)$, the worst case can be computed one step at a time~\citep{GIVAN200071}, so $V^*(s_0,0,k)=\max_{\pi\in\Pi_{\tau,k}}V^{\bot,\pi}(s_0)$. Let $k^*$ be the smallest $k$ with $V^*(s_0,0,k)\ge\lambda$, and $\pi^*_{k^*}$ the corresponding maximising policy. Then $\pi^*_{k^*}\in\Pi_{\tau,k^*}$ achieves $V^{\bot,\pi^*_{k^*}}(s_0)\ge\lambda$, and no policy that changes fewer actions does, so $\pi^*_{k^*}$ is a minimal robust counterfactual intervention (see Appendix~\ref{app: algorithm} for the full algorithm). With the conservative lower bounds, $\pi^*_{k^*}$ is still guaranteed to achieve $V^{\bot,\pi^*_{k^*}}(s_0)\ge\lambda$, but $k^*$ may be larger than with the tight bounds.}}
\revised{\paragraph{Relation to actual causation.} Minimal robust counterfactual interventions are closely related to the notion of actual causes~\citep{halpern2016actual}. Informally, a set of observed actions is an \textit{actual cause} of an outcome if changing those actions, while keeping all other actions at their observed values, would have changed the counterfactual outcome, and no strict subset of these actions has this property. Identifying an actual cause requires knowing the SCM and the context (the exogenous noise values), which together determine a single counterfactual path. In our setting, neither is known: we only know that the true CFMDP lies within the CFIMDP, and each CFMDP induces a distribution over counterfactual paths. We therefore require the change in outcome to hold robustly and in expectation, making our interventions a robust, policy-based analogue of actual causes.}

\section{Evaluation}\label{sec:eval}
\updated{We evaluate our approach along five axes: (1) its accuracy in off-policy evaluation tasks; (2) how its policies compare against those obtained from the exact CFIMDP (i.e., the CFIMDP derived from the true MDP); (3) its robustness relative to the Gumbel-max approach; (4) \revised{how well it identifies minimal action changes to achieve a desired counterfactual value}; and (5) the trade-off of using conservative vs.\ tight lower bounds.} We evaluate our approach on a range of MDP tasks, ranging from simple models (GridWorld and Frozen Lake) to more complex clinical and navigation tasks (Sepsis and Aircraft). Full descriptions of the MDP environments and system details can be found in the Appendix \ref{app: environments}. The code is available at:
\url{https://github.com/jessicalally/robust-cf-inference-beyond-mdps}.
\begin{table}[t]
\centering
  \caption{Average lower ($V^{\bot,\pi^*}(s_0)$) and upper ($V^{\top,\pi^*}(s_0)$) bounds on the counterfactual value of $\pi^*$ over $100$ randomly sampled observed paths, under the exact CFIMDP and data-driven CFIMDPs whose IMDPs are learnt from $N$ simulated paths. The true interventional value is $V^{\pi^*}(s_0)=333$.}%
\label{tab:cf_correctness_simulated_gridworld_0.9}

\begin{tabular}{lcc}
\toprule
& \textbf{Lower Bound} & \textbf{Upper Bound} \\
\midrule
Exact Bounds & $249 \pm 118$ & $387 \pm 39$ \\
\midrule
\multicolumn{3}{l}{\textbf{Data-driven Bounds}} \\
$N=10{,}000$       & $183 \pm 104$    & $398 \pm 36$ \\
$N=100{,}000$      & $228 \pm 114$    & $392 \pm 37$ \\
$N=1{,}000{,}000$  & $241 \pm 117$    & $389 \pm 38$ \\
\bottomrule
\end{tabular}
\end{table}
\begin{figure*}
\centering
\edef\cfimdpPanelWidth{\the\dimexpr0.145\textwidth\relax}
\edef\cfimdpPanelHeight{\the\dimexpr0.14\textwidth\relax}
\pgfplotsset{
    cfimdp panel/.style={
        scale only axis,
        width=\cfimdpPanelWidth,
        height=\cfimdpPanelHeight,
        xlabel={$k$},
        grid=major,
        font=\footnotesize,
        tick label style={font=\scriptsize},
        yticklabel style={xshift=1pt},
        ylabel shift=-3pt,
        every axis plot/.append style={
            mark size=1.2pt,
            line width=0.6pt,
            error bars/error bar style={line width=0.3pt},
            error bars/error mark options={rotate=90, mark size=1.5pt, line width=0.3pt},
        },
    }
}
\captionsetup[subfigure]{justification=centering}
\begin{subfigure}[t]{0.215\textwidth}
\centering
\begin{tikzpicture}
    \begin{axis}[
        cfimdp panel,
        xmin=0,
        xmax=9,
        ylabel={$V^*(s_0,0,k)$},
        legend columns=2,
        legend style={/tikz/every even column/.append style={column sep=1em}},
        legend to name=sharedLegend2
    ]

    \addplot+[
        color=blue,
        mark=*,
        error bars/.cd,
        y dir=both,
        y explicit
    ] coordinates {
        (0, 38.98) +- (0, 234.2844843347506)
        (1, 134.79974886000255) +- (1, 143.55234276756133)
        (2, 181.11859867678032) +- (2, 131.28665054556078)
        (3, 219.67185027332073) +- (3, 116.58860363531534)
        (4, 249.71742342845232) +- (4, 102.04492486244844)
        (5, 259.9355473812317) +- (5, 101.15582681476026)
        (6, 264.09758898032806) +- (6, 101.44779328314269)
        (7, 265.48604865044433) +- (7, 101.74884547199785)
        (8, 265.64428824616226) +- (8, 101.80632516127817)
        (9, 265.649894275875) +- (9, 101.80843346542855)
    };
    \addlegendentry{Data-driven CFIMDP}

    \addplot+[
        color=red,
        mark=square*,
        error bars/.cd,
        y dir=both,
        y explicit
    ] coordinates {
        (0, 38.98) +- (0, 234.2844843347506)
        (1, 143.22302626730746) +- (1, 147.1557995799284)
        (2, 195.91429274355193) +- (2, 134.2287720619955)
        (3, 238.3308317063109) +- (3, 119.48943884708012)
        (4, 270.14364955149443) +- (4, 103.68992000548751)
        (5, 278.72781906140017) +- (5, 103.59836326295454)
        (6, 282.13468027119467) +- (6, 104.06375390870947)
        (7, 283.0937905569028) +- (7, 104.30888521360238)
        (8, 283.1754435089571) +- (8, 104.34167903838107)
        (9, 283.17565926550293) +- (9, 104.34172515573152)
    };
    \addlegendentry{Exact CFIMDP}
    \end{axis}
    \end{tikzpicture}
\caption{{GridWorld ($p=0.9$)}}    \label{fig:gridworld}
\end{subfigure}%
\hfill
\begin{subfigure}[t]{0.19\textwidth}
\centering
\begin{tikzpicture}
    \begin{axis}[
        cfimdp panel,
        xmin=0,
        xmax=9
    ]

    \addplot+[
        color=blue,
        mark=*,
        error bars/.cd,
        y dir=both,
        y explicit
    ] coordinates {
        (0, -48.48) +- (0, 189.54590367507285)
        (1, -29.619009787879758) +- (1, 150.14791853839662)
        (2, -26.386447207908983) +- (2, 143.83889223063383)
        (3, -25.900523420065152) +- (3, 142.99005473448887)
        (4, -25.686107667822636) +- (4, 142.616931669738)
        (5, -25.601536005353864) +- (5, 142.46736707959641)
        (6, -25.570504987635942) +- (6, 142.40722357428726)
        (7, -25.55591294733943) +- (7, 142.37682660293837)
        (8, -25.55427293362152) +- (8, 142.3734711539469)
        (9, -25.55427293362152) +- (9, 142.3734711539469)
    };
    \addplot+[
        color=red,
        mark=square*,
        error bars/.cd,
        y dir=both,
        y explicit
    ] coordinates {
        (0, -48.48) +- (0, 189.54590367507285)
        (1, -26.01095575466666) +- (1, 145.23811381149417)
        (2, -22.069118804906665) +- (2, 139.5541278448571)
        (3, -21.501946209706656) +- (3, 138.7280012475704)
        (4, -21.28218729514666) +- (4, 138.36039879004176)
        (5, -21.160499871146662) +- (5, 138.13974391630052)
        (6, -21.091684220586657) +- (6, 138.00326531363356)
        (7, -21.06132992042665) +- (7, 137.9415129245016)
        (8, -21.059062685866657) +- (8, 137.9372996128318)
        (9, -21.059062685866657) +- (9, 137.9372996128318)
    };
    \end{axis}
    \end{tikzpicture}
\caption{{GridWorld ($p=0.4$)}}
\end{subfigure}%
\hfill
\begin{subfigure}[t]{0.19\textwidth}
\centering
 \begin{tikzpicture}
    \begin{axis}[
        cfimdp panel,
        xmin=0,
        xmax=19
    ]

    \addplot+[
        color=blue,
        mark=*,
        error bars/.cd,
        y dir=both,
        y explicit
    ] coordinates {
        (0, -76.81) +- (0, 48.82042502887496)
        (1, -58.09627271043695) +- (1, 44.89337821023748)
        (2, -47.99102023863091) +- (2, 43.703434156851436)
        (3, -37.329743104148704) +- (3, 41.68245391280376)
        (4, -26.221597365955077) +- (4, 39.22426270865087)
        (5, -15.255793997336939) +- (5, 35.91857891243606)
        (6, -4.879552660087285) +- (6, 32.26376291449058)
        (7, 4.453485085909165) +- (7, 28.691129093569085)
        (8, 12.166104170123504) +- (8, 25.44293058166517)
        (9, 18.452467292984) +- (9, 22.791716324277452)
        (10, 23.39495760427462) +- (10, 20.811340131421733)
        (11, 26.923498295517817) +- (11, 19.498329446552965)
        (12, 29.152661396864925) +- (12, 18.927370497906125)
        (13, 30.39538273559182) +- (13, 18.610373071840485)
        (14, 31.11948331564742) +- (14, 18.349539280768823)
        (15, 31.460880769391206) +- (15, 18.251860160104066)
        (16, 31.53593239419183) +- (16, 18.24366568669524)
        (17, 31.555566128670065) +- (17, 18.243531754527538)
        (18, 31.565158559868763) +- (18, 18.24296641798767)
        (19, 31.569807084633272) +- (19, 18.24223483304952)
    };
    \addplot+[
        color=red,
        mark=square*,
        error bars/.cd,
        y dir=both,
        y explicit
    ] coordinates {
        (0, -76.81) +- (0, 48.82042502887496)
        (1, -55.914832321556304) +- (1, 44.4001545080534)
        (2, -44.79562335448227) +- (2, 42.84947822428805)
        (3, -33.21789890987079) +- (3, 40.523356752545816)
        (4, -21.35012275834039) +- (4, 37.61673721416898)
        (5, -9.930276297724) +- (5, 33.916018112179344)
        (6, 0.6568369978173695) +- (6, 30.00506484740474)
        (7, 9.940008960010823) +- (7, 26.394572533370944)
        (8, 17.439580127053677) +- (8, 23.401410538595915)
        (9, 23.293700173809917) +- (9, 21.22971207374961)
        (10, 27.702302678082674) +- (10, 19.836963207729614)
        (11, 30.75708337549435) +- (11, 19.054455574782647)
        (12, 32.698114590850885) +- (12, 18.77497270055438)
        (13, 33.75303510323518) +- (13, 18.683364403347582)
        (14, 34.33265811045936) +- (14, 18.575807160160533)
        (15, 34.63645666593117) +- (15, 18.51248797509012)
        (16, 34.69703820074842) +- (16, 18.508052982535254)
        (17, 34.707200582561164) +- (17, 18.5093975378037)
        (18, 34.70766450288023) +- (18, 18.50951843931451)
        (19, 34.70920375230639) +- (19, 18.50928656577958)
    };
    \end{axis}
    \end{tikzpicture}
\caption{{Frozen Lake}}
    \label{fig:frozen_lake}
\end{subfigure}%
\hfill
\begin{subfigure}[t]{0.19\textwidth}
\centering
\begin{tikzpicture}
    \begin{axis}[
        cfimdp panel,
        xmin=0,
        xmax=9
    ]

    \addplot+[
        color=blue,
        mark=*,
        error bars/.cd,
        y dir=both,
        y explicit
    ] coordinates {
        (0, -3347.00) +- (0, 4760.54)
        (1, -2404.33) +- (1, 4020.78)
        (2, -1623.35) +- (2, 3363.72)
        (3, -1030.67) +- (3, 2890.76)
        (4, -539.54)  +- (4, 2490.94)
        (5, -138.34)  +- (5, 2176.52)
        (6, 175.48)   +- (6, 1936.54)
        (7, 418.28)   +- (7, 1764.60)
        (8, 573.50)   +- (8, 1667.14)
        (9, 632.67)   +- (9, 1637.06)
    };

    \addplot+[
        color=red,
        mark=square*,
        error bars/.cd,
        y dir=both,
        y explicit
    ] coordinates {
        (0, -3347.00) +- (0, 4760.54)
        (1, -1502.46) +- (1, 3417.75)
        (2, -303.57)  +- (2, 2314.55)
        (3, 366.02)   +- (3, 1833.55)
        (4, 765.85)   +- (4, 1628.70)
        (5, 986.70)   +- (5, 1559.71)
        (6, 1098.43)  +- (6, 1546.16)
        (7, 1147.70)  +- (7, 1547.35)
        (8, 1165.54)  +- (8, 1549.14)
        (9, 1169.69)  +- (9, 1549.81)
    };

    \end{axis}
    \end{tikzpicture}
    \caption{Sepsis}
    \label{fig:sepsis}
\end{subfigure}%
\hfill
\begin{subfigure}[t]{0.19\textwidth}
\centering
\begin{tikzpicture}
    \begin{axis}[
        cfimdp panel,
        xmin=0,
        xmax=11
    ]

    \addplot+[
        color=blue,
        mark=*,
        error bars/.cd,
        y dir=both,
        y explicit
    ] coordinates {
        (0, -250.5) +- (0, 490.5486214433795)
        (1, -145.4474148263778) +- (1, 439.6890925582446)
        (2, -113.48318507187962) +- (2, 416.0384826246025)
        (3, -91.31982581002114) +- (3, 395.565745019047)
        (4, -83.8790662592863) +- (4, 389.22697408426643)
        (5, -82.57824235959386) +- (5, 388.26117072701476)
        (6, -82.54193173283582) +- (6, 388.20100262162606)
        (7, -82.54193173283582) +- (7, 388.20100262162606)
        (8, -82.54193173283582) +- (8, 388.20100262162606)
        (9, -82.54193173283582) +- (9, 388.20100262162606)
        (10, -82.54193173283582) +- (10, 388.20100262162606)
        (11, -82.54193173283582) +- (11, 388.20100262162606)
    };
    \addplot+[
        color=red,
        mark=square*,
        error bars/.cd,
        y dir=both,
        y explicit
    ] coordinates {
        (0, -250.5) +- (0, 490.5486214433795)
        (1, -115.50765286978371) +- (1, 420.4324447538014)
        (2, -50.165096383711756) +- (2, 363.6471970942694)
        (3, -22.693806342597664) +- (3, 339.3073062670981)
        (4, -12.996514532991242) +- (4, 328.51079665920247)
        (5, -11.110271550502393) +- (5, 326.1372735771754)
        (6, -11.102212188824378) +- (6, 326.12754863322067)
        (7, -11.102212188824256) +- (7, 326.12754863322056)
        (8, -11.102212188824224) +- (8, 326.12754863322056)
        (9, -11.102212188824215) +- (9, 326.1275486332205)
        (10, -11.102212188824215) +- (10, 326.1275486332205)
        (11, -11.102212188824215) +- (11, 326.1275486332205)
    };

    \end{axis}
    \end{tikzpicture}
    \caption{{Aircraft}}
    \label{fig:aircraft}
\end{subfigure}
\caption{Average pessimistic counterfactual value $V^{*}(s_0,0,k)$ over $100$ randomly sampled observed trajectories, on the \textcolor{blue}{data-driven CFIMDP} $(N=100{,}000, \alpha=0.1)$ (\textcolor{blue}{$\bullet$}) vs. the \textcolor{red}{\textbf{exact CFIMDP}} (\textcolor{red}{$\blacksquare$}), for each $k$. The data-driven value is a certified lower bound, so the gap to the exact curve is the price of learning the model from data. Error bars show standard deviation.}%
\label{fig: performance}
\end{figure*}
\begin{table}
\centering
\caption{Average worst-case counterfactual value $V^{\bot,\pi^*_k}(s_0)$ on the exact CFIMDP of policies derived from data-driven CFIMDPs ($N=100{,}000$, $\alpha=0.1$) and of the exact CFIMDP's pessimistic optimal policy, over $100$ observed trajectories $\tau$ with $k=|\tau|-1$. Values are mean $\pm$ standard deviation.}
\label{tab: robustness}%
\small
\setlength{\tabcolsep}{4pt}
\begin{tabular}{l c c c}
\toprule
\textbf{Environment}
& \makecell{\textbf{Exact} \\ \textbf{CFIMDP}}
& \makecell{\textbf{Data-driven} \\ \textbf{CFIMDP} \\ \textbf{(Cons.\ LB)}}
& \makecell{\textbf{Data-driven} \\ \textbf{CFIMDP} \\ \textbf{(Gumbel-max)}} \\
\midrule
GridWorld ($p=0.9$) & $283 \pm 104$ & $283 \pm 104$ & $227 \pm 233$ \\
GridWorld ($p=0.4$) & $-21.1 \pm 138$ & $-21.9 \pm 139$ & $-56.6 \pm 165$ \\
Frozen Lake & $34.7 \pm 18.5$ & $34.6 \pm 18.5$ & $-4.80 \pm 49.5$ \\
Aircraft & $-11.1 \pm 326$ & $-24.6 \pm 344$ & $-78.3 \pm 408$ \\
\bottomrule
\end{tabular}
\end{table}

\subsection{Off-Policy Evaluation}
\label{sec: correctness}
We first check the correctness of our counterfactual inference method for IMDPs in an off-policy evaluation (OPE) setting \citep{oberst2019counterfactual,buesing2019woulda}, \updated{estimating the expected return of an optimal target policy $\pi^*$ using trajectories generated by a suboptimal behavioural policy $\pi$.}
\updated{We construct a data-driven IMDP given a fixed number $N$ of observed trajectories and confidence $\alpha=0.1$, sample $100$ trajectories from the true MDP under $\pi$, build a CFIMDP for each, and compute upper and lower bounds on the expected counterfactual return from the initial state. The counterfactual value averaged over observed trajectories should equal the interventional value. Hence, if the method is unbiased, the average bounds should contain the true interventional value $V^{\pi^*}(s_0)$. Table \ref{tab:cf_correctness_simulated_gridworld_0.9} shows the results for the GridWorld ($p=0.9$) MDP (similar results for GridWorld ($p=0.4$) can be found in Appendix \ref{app: additional experiments}), alongside the exact bounds from the true MDP following \citet{lally2025robustcounterfactualinferencemarkov}. The results confirm that our approach is valid and unbiased: the data-driven CFIMDP bounds consistently contain the true expected return, and converge towards the exact bounds as the number $N$ of trajectories increases.}
\begin{figure*}
  \centering
  \pgfplotsset{optimalk axis/.style={legend to name=optimalklegend, legend columns=4}}
  \begin{subfigure}[b]{0.24\textwidth}
    \centering
    \input{graphs/gridworld_actual_causes}
    \caption{GridWorld ($p=0.9$)}
  \end{subfigure}\hfill
  \begin{subfigure}[b]{0.24\textwidth}
    \centering
    \input{graphs/gridworld_0.4_actual_causes}
    \caption{GridWorld ($p=0.4$)}
  \end{subfigure}\hfill
  \begin{subfigure}[b]{0.24\textwidth}
    \centering
    \pgfkeysifdefined{/pgfplots/optimalk axis/.@cmd}{}{\pgfplotsset{optimalk axis/.style={}}}%
\begin{tikzpicture}
\begin{axis}[
    width=0.96\linewidth,
    height=0.82\linewidth,
    xmin=-100, xmax=60,
    ymin=0, ymax=21,
    xlabel={Threshold $\lambda$},
    ylabel={Mean minimal $k^*$},
    axis lines=left,
    axis line style={-},
    ytick distance=5,
    ymajorgrids,
    grid style={line width=0.3pt, draw=black!12},
    tick label style={font=\footnotesize},
    label style={font=\small},
    legend style={
        font=\footnotesize,
        draw=none,
        at={(0.5,1.03)},
        anchor=south,
        legend columns=2,
        /tikz/every even column/.append style={column sep=0.8em},
    },
    legend cell align=left,
    every axis plot/.append style={const plot mark left, no markers},
    optimalk axis,
]
    \addplot[black, line width=1.1pt]             table[col sep=tab, x=threshold, y=exact_mean] {graphs/frozenlakedata.txt};
    \addlegendentry{Exact CFIMDP}
    \addplot[cfapprox, line width=0.9pt, dashed]  table[col sep=tab, x=threshold, y=approx_mean] {graphs/frozenlakedata.txt};
    \addlegendentry{Data-driven CFIMDP (Cons.\ LB)}
    \addplot[cftight, line width=1pt, dotted]     table[col sep=tab, x=threshold, y=tight_mean] {graphs/frozenlakedata.txt};
    \addlegendentry{Data-driven CFIMDP (Tight LB)}
    \addplot[cfgumbel, line width=0.9pt, dash dot] table[col sep=tab, x=threshold, y=gumbel_mean] {graphs/frozenlakedata.txt};
    \addlegendentry{Data-driven CFIMDP (Gumbel-max)}
\end{axis}
\end{tikzpicture}%
    \caption{Frozen Lake}
  \end{subfigure}
  \begin{subfigure}[b]{0.24\textwidth}
    \centering
    \pgfkeysifdefined{/pgfplots/optimalk axis/.@cmd}{}{\pgfplotsset{optimalk axis/.style={}}}%
\begin{tikzpicture}
\begin{axis}[
    width=0.96\linewidth,
    height=0.82\linewidth,
    xmin=-800, xmax=260,
    ymin=0, ymax=14,
    xlabel={Threshold $\lambda$},
    ylabel={Mean minimal $k^*$},
    axis lines=left,
    axis line style={-},
    ytick distance=5,
    ymajorgrids,
    grid style={line width=0.3pt, draw=black!12},
    tick label style={font=\footnotesize},
    label style={font=\small},
    legend style={
        font=\footnotesize,
        draw=none,
        at={(0.5,1.03)},
        anchor=south,
        legend columns=2,
        /tikz/every even column/.append style={column sep=0.8em},
    },
    legend cell align=left,
    every axis plot/.append style={const plot mark left, no markers},
    unbounded coords=discard,
    optimalk axis,
]
    \addplot[black, line width=1.1pt]             table[col sep=tab, x=threshold, y=exact_mean] {graphs/aircraftdata.txt};
    \addplot[cfapprox, line width=0.9pt, dashed]  table[col sep=tab, x=threshold, y=approx_mean] {graphs/aircraftdata.txt};
    \addplot[cftight, line width=1pt, dotted]     table[col sep=tab, x=threshold, y=tight_mean] {graphs/aircraftdata.txt};
    \addplot[cfgumbel, line width=0.9pt, dash dot] table[col sep=tab, x=threshold, y=gumbel_mean] {graphs/aircraftdata.txt};
\end{axis}
\end{tikzpicture}%
    \caption{Aircraft}
  \end{subfigure}
  \medskip
  \ref{optimalklegend}

  \caption{Minimum number of action changes $k^*$ needed for $V^{*}(s_0,0,k)$ to reach a threshold $\lambda$, averaged over $100$ randomly sampled observed trajectories. We compare the exact CFIMDP (black, solid) with data-driven CFIMDPs using conservative lower bounds (\textcolor{cfapprox}{dashed}), tight lower bounds (\textcolor{cftight}{dotted}), and the Gumbel-max model (\textcolor{cfgumbel}{dash-dot}).\protect\hyperref[fn:sepsis]{\textsuperscript{\ref*{fn:sepsis}}}}
  \label{fig:optimal_k}
\end{figure*}
\begin{figure*}[t]
  \centering
  \begin{subfigure}{\textwidth}
    \centering
    \begin{subfigure}[b]{0.19\textwidth}
      \centering
      \includegraphics[height=2cm, width=\textwidth, keepaspectratio]{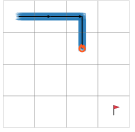}
      \caption*{(i) $k=0$}
    \end{subfigure}\hfill
    \begin{subfigure}[b]{0.19\textwidth}
      \centering
      \includegraphics[height=2cm, width=\textwidth, keepaspectratio]{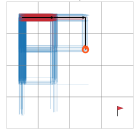}
      \caption*{(ii) $k=1$}
    \end{subfigure}\hfill
    \begin{subfigure}[b]{0.19\textwidth}
      \centering
      \includegraphics[height=2cm, width=\textwidth, keepaspectratio]{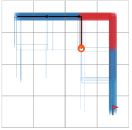}
      \caption*{(iii) $k=2$}
    \end{subfigure}\hfill
    \begin{subfigure}[b]{0.19\textwidth}
      \centering
      \includegraphics[height=2cm, width=\textwidth, keepaspectratio]{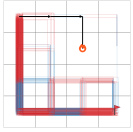}
      \caption*{(iv) $k=5$}
    \end{subfigure}\hfill
    \begin{subfigure}[b]{0.19\textwidth}
      \centering
      \includegraphics[height=2cm, width=\textwidth, keepaspectratio]{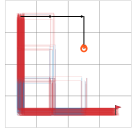}
      \caption*{(v) $k=8$}
    \end{subfigure}
    \caption{GridWorld: distribution of counterfactual trajectories. The observed path is shown in black, and counterfactual paths in blue where the observed action is taken and in red where the action has been changed.}
  \end{subfigure}
  \medskip
  \begin{subfigure}{\textwidth}
    \centering
    \begin{subfigure}[b]{0.211\textwidth}
      \centering
      \includegraphics[height=3.4cm, width=\textwidth, keepaspectratio]{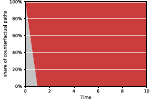}
      \caption*{(i) $k=0$}
    \end{subfigure}\hfill
    \begin{subfigure}[b]{0.18\textwidth}
      \centering
      \includegraphics[height=2.8cm, width=\textwidth, keepaspectratio]{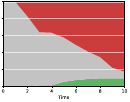}
      \caption*{(ii) $k=1$}
    \end{subfigure}\hfill
    \begin{subfigure}[b]{0.18\textwidth}
      \centering
      \includegraphics[height=2.8cm, width=\textwidth, keepaspectratio]{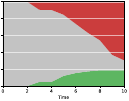}
      \caption*{(iii) $k=2$}
    \end{subfigure}\hfill
    \begin{subfigure}[b]{0.18\textwidth}
      \centering
      \includegraphics[height=2.8cm, width=\textwidth, keepaspectratio]{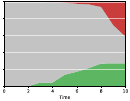}
      \caption*{(iv) $k=5$}
    \end{subfigure}\hfill
    \begin{subfigure}[b]{0.18\textwidth}
      \centering
      \includegraphics[height=2.8cm, width=\textwidth, keepaspectratio]{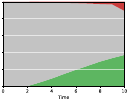}
      \caption*{(v) $k=8$}
    \end{subfigure}
    \caption{Sepsis: share of counterfactual paths at each time step where the patient has recovered (green), remains in treatment (grey), or has died (red).}
  \end{subfigure}
\caption{Behaviour of the data-driven counterfactual policy for a single observed catastrophic path, with varying $k$, under a CFMDP adversarially chosen from the exact CFIMDP.}
\label{fig: counterfactual visualisation}
\end{figure*}
\subsection{Evaluating Performance}
\label{sec: evaluation performance}
\updated{Next, we evaluate the impact of learning the model from data by comparing the performance of the counterfactual policies derived from our data-driven IMDP (with conservative lower bounds) against those derived from the exact CFIMDP. We use a suboptimal observational policy $\pi$ that selects a random action over the optimal action with probability $0.5$, construct an IMDP with hyperparameters $(N = 100,000, \alpha=0.1)$, and derive CFIMDPs for $100$ paths sampled under $\pi$ using both approaches. We then compare the average values $V^{*}(s_0,0,k)$ of the resulting pessimistic optimal policies, where $s_0$ is the MDP's initial state, for each maximum number of action changes $k$.} 

\updated{For GridWorld and Frozen Lake (Figures \ref{fig:gridworld}-\ref{fig:frozen_lake}), our approach yields estimates of $V^{*}(s_0,0,k)$ close to those obtained from the exact CFIMDP, showing that the statistical uncertainty in the MDP has little impact on the robust counterfactual policies. For Sepsis and Aircraft (Figures \ref{fig:sepsis} and \ref{fig:aircraft}), $V^{*}(s_0,0,k)$ is slightly lower, as the larger state spaces require wider bounds in the IMDP to maintain an overall significance level of $\alpha=0.1$; with more data, these estimates would tighten. The resulting policies may therefore identify more conservative \revised{interventions}, i.e., change more actions than strictly necessary. However, since the statistical guarantees on the IMDP carry over to the CFIMDP, these \revised{interventions} remain valid at the stated confidence level. Finally, as shown in Section \ref{sec: evaluation robustness}, policies derived from data-driven IMDPs tend to exceed their estimated performance when evaluated on the true CFIMDP, suggesting that these policies are generally accurate and close to the optimal policy obtained without statistical uncertainty.}

\subsection{Evaluating Robustness}
\label{sec: evaluation robustness}
\updated{To evaluate robustness, we compare the counterfactual policies generated by our approach (with conservative lower bounds) and from the data-driven Gumbel-max approach on the exact CFIMDP, i.e., by their worst-case counterfactual value across all causal models compatible with the true MDP (Table \ref{tab: robustness})\footnote{\label{fn:sepsis}We omit Sepsis, as generating its Gumbel-max CFIMDPs is computationally prohibitive.}. Our policies perform close to the optimal policies learned from the true CFIMDP, showing little impact from statistical uncertainty in these examples. In contrast, the Gumbel-max CFIMDP policies perform suboptimally, suggesting that committing to a specific causal model (even when accounting for uncertainty in the transition dynamics) can be unreliable if the assumed model is misspecified.}

\subsection{Computing Robust Interventions}
\label{sec: actual causes}
\updated{We now evaluate how well our approach identifies \revised{minimal action changes that achieve a counterfactual value $\geq \lambda$. To do so, we use} the same $100$ observed trajectories \revised{ as before}. For each trajectory and $k \in \{0,\dots,|\tau|-1\}$, we compute the pessimistic policy $\pi^{*}_k$ and its value $V^{*}(s_0,0,k)$ under the exact CFIMDP, our data-driven CFIMDPs (conservative and tight lower bounds), and the Gumbel-max CFIMDP, and report the smallest $k^*$ such that $V^{*}(s_0,0,k) \ge \lambda$. Figure~\ref{fig:optimal_k} shows that both data-driven approaches identify 
\revised{interventions involving similar numbers of action changes to} the exact CFIMDP, being only slightly more conservative, whereas Gumbel-max often underestimates $k^*$, particularly in GridWorld $(p=0.9)$ and Aircraft, so its \revised{interventions} may be unreliable.} \updated{Figure~\ref{fig: counterfactual visualisation} visualises the identified \revised{interventions} for one catastrophic trajectory. In GridWorld, with $k=1$ the policy changes the second action to move back towards the start, so most counterfactual paths avoid the fire, but none reach the goal; with $k=2$, it instead changes the action that led into the fire, allowing over 60\% of counterfactual paths to reach the goal. The 
\revised{size of a robust intervention} 
thus depends on the outcome compared against: avoiding catastrophe requires one change, while reaching the goal requires two. Similarly, in Sepsis, mortality falls, and recovery rises as $k$ grows. Appendix~\ref{app: additional experiments} shows these counterfactual paths in more detail, plotting each vital sign and treatment over time for every $k$.}

\subsection{Execution Times and Bound Widths}
\updated{Finally, we assess the impact of using conservative lower bounds (closed-form, efficient) instead of tight lower bounds (optimisation-based, expensive). For the same $100$ paths, we derive the CFIMDPs with both approaches and measure the percentage of transitions whose lower bound changes when the tight bound is used, both across all transitions and across non-zero probability transitions only, together with the average change in bound width. The full results are reported in Table~\ref{tab:combined_results} in Appendix \ref{app: additional experiments}. The overall effect is small: at most $4.4\%$ of all transitions change, and among non-zero probability transitions, at most $21\%$ in GridWorld ($p=0.9$) and at most $6\%$ elsewhere, with an average width change of at most $0.06$ for the transitions that change. As the previous experiments show, this has little impact on the derived policies, and since tight lower bounds require solving an additional optimisation problem and are slower to compute, the conservative approach is preferable in practice. While comparable in the smaller environments, the Gumbel-max approach scales poorly with the MDP state space, taking $293$s on average for Aircraft compared with $10$s for our conservative bounds, and is computationally prohibitive for Sepsis.}

\section{Conclusion}
We have presented a counterfactual inference approach for MDPs that is robust to both uncertainty in the underlying transition dynamics and the unknown causal model. We derived closed-form bounds on counterfactual transition probabilities, enabling scalable inference in larger MDPs. 
\revised{Building on these bounds, we introduced minimal robust counterfactual interventions: non-stationary policies that guarantee a desired expected return under all admissible counterfactual dynamics, while minimising the maximum number of action changes. 
Experiments across diverse MDP environments demonstrate the robustness and scalability of our approach, with counterfactual interventions closely tracking those of the exact counterfactual MDP.}  
Future work will address unobserved confounding in offline trajectory datasets 
and extend our approach to continuous and partially-observable environments.

\newpage
\balance
\bibliographystyle{ACM-Reference-Format} 
\bibliography{example_paper}

\onecolumn
\begin{center}
{\Huge\bfseries Robust Counterfactuals for Markov Decision Processes
under Statistical and Causal Uncertainties (Supplementary Material)\par}
\end{center}
\vspace{1.5em}
\appendix
\section{Counterfactual Probability Bounds for MDPs}
\label{app: assumptions and bounds}

We copy the counterfactual probability bounds proven by \citet{lally2025robustcounterfactualinferencemarkov} below, under the assumptions of counterfactual stability and counterfactual monotonicity:

\begin{definition}[Counterfactual stability]\label{def: counterfactual stability} An MDP SCM \eqref{eq:mdp_scm} satisfies \textit{counterfactual stability} if, given we observed the transition $s_t, a_t \rightarrow s_{t+1}$ at time $t$, the counterfactual outcome under a different state-action pair $(s, a)$ will not change to some $S_{t+1} = s'\neq s_{t+1}$ unless $\dfrac{P(s_{t+1} \mid s, a)}{P(s_{t+1} \mid s_t, a_t)} < \dfrac{P(s' \mid s, a)}{P(s' \mid s_t, a_t)}$.
\end{definition}

\begin{definition}[Counterfactual monotonicity]
    An MDP SCM \eqref{eq:mdp_scm} satisfies \textit{counterfactual monotonicity} if, upon observing the transition $s_t, a_t \rightarrow s_{t+1}$, then, $\forall s \in \mathcal{S}, a \in \mathcal{A}$: 
    \begin{enumerate}[label=(Mon\arabic*)]
        \item $\tilde{P}_t(s_{t+1} \mid s, a) \geq P(s_{t+1} \mid s, a)$ (i.e., observing an outcome cannot make it less likely in the counterfactual world), and
        \item $\forall s'\neq s_{t+1}$ with $P(s' \mid s_t, a_t)>0$, $\tilde{P}_t(s' \mid s, a) \leq P(s' \mid s, a)$ (i.e., not observing a possible outcome cannot make it more likely in the counterfactual world).
    \end{enumerate}
\end{definition}

For any MDP $\mathcal{M}$ and observed transition $s_t, a_t \rightarrow s_{t+1}$, we can bound the counterfactual probability of any transition $s, a \rightarrow s'$ as follows.

\begin{theorem}
\label{theorem: observed state appendix}
When the counterfactual state-action pair is the observed state-action pair, $(s, a) = (s_{t}, a_t)$, the counterfactual probability bounds are as follows:
{
\[\tilde{P}_{t}^{\it LB}(s_{t+1} \mid s_t, a_t) = \tilde{P}_{t}^{\it UB}(s_{t+1} \mid s_t, a_t) = 1\] 
\[\forall s' \in \mathcal{S} \setminus \{s_{t+1}\}, \tilde{P}_{t}^{\it LB}(s' \mid s_t, a_t) = \tilde{P}_{t}^{\it UB}(s' \mid s_t, a_t) = 0\]
}
\end{theorem}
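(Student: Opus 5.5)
The plan is to work directly from the canonical-SCM representation \eqref{eq:cf_probs}, with $(s,a)=(s_t,a_t)$. For any $s'$, the numerator becomes
\[
\sum_{u_t}\mu_{s_t,a_t,u_t,s'}\cdot\mu_{s_t,a_t,u_t,s_{t+1}}\cdot\theta_{u_t}.
\]
Each $u_t$ indexes a \emph{deterministic} mapping $f(\cdot,\cdot,u_t)$, so for fixed $u_t$ exactly one $s''$ satisfies $\mu_{s_t,a_t,u_t,s''}=1$. The product $\mu_{s_t,a_t,u_t,s'}\mu_{s_t,a_t,u_t,s_{t+1}}$ therefore equals $\mu_{s_t,a_t,u_t,s_{t+1}}$ when $s'=s_{t+1}$, and is identically $0$ when $s'\neq s_{t+1}$.

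In the first case the numerator is $\sum_{u_t}\mu_{s_t,a_t,u_t,s_{t+1}}\theta_{u_t}$. By the interventional constraint \eqref{eq: interventional constraint} instantiated at $(\hat s,\hat a,\hat s')=(s_t,a_t,s_{t+1})$, this equals $P_{\mathcal M}(s_{t+1}\mid s_t,a_t)$, so the ratio is exactly $1$. In the second case the numerator vanishes, so the ratio is $0$.

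These values hold for every feasible $\theta$, not just an optimal one. Hence the minimum and maximum of \eqref{eq: optimisation objective}--\eqref{eq: u constraints} coincide, giving $\tilde P^{LB}=\tilde P^{UB}=1$ for $s'=s_{t+1}$ and $\tilde P^{LB}=\tilde P^{UB}=0$ otherwise.

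Two side points need checking.
\begin{itemize}
\item The denominator must be positive. This holds because the observed transition lies on a path, so $P_{\mathcal M}(s_{t+1}\mid s_t,a_t)>0$ by the definition of a path.
\item The feasible set must be nonempty, so that the bounds are attained rather than vacuous. Any SCM realising $\mathcal M$ supplies a feasible $\theta$; for example, take independent per-$(s,a)$ choices and let $\theta_{u_t}$ be the product of the corresponding transition probabilities.
\end{itemize}
Counterfactual stability and monotonicity play no role here. They are consistent with the result (for instance, (Mon1) gives $1\ge P(s_{t+1}\mid s_t,a_t)$), so restricting to SCMs satisfying them leaves the bounds unchanged, provided such an SCM exists. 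The same independent-coupling construction, or the Gumbel-max SCM known to satisfy stability, can serve as the witness.

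I expect the only mildly delicate step to be this nonemptiness argument under the two assumptions; the computation itself is immediate from determinism of $f$.
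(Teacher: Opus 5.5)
Your core argument is correct and takes essentially the paper's route. The paper does not prove this statement itself; it quotes it from \citet{lally2025robustcounterfactualinferencemarkov}. Its only justification, in the proof sketch, is that the same $(s_t,a_t)$ under the same context $u_t$ must yield the same outcome. That is your one-hot argument: the objective is identically $1$ or $0$ on the feasible set, so the minimum and maximum coincide under any extra constraints, provided the feasible set is nonempty.

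The one wrong step is your witness for that nonemptiness: neither candidate satisfies both assumptions.
\begin{itemize}
\item Independent (product) coupling. Conditioning on the observation leaves the $(s,a)$-component of $u_t$ untouched, so $\tilde P_t(\cdot\mid s,a)=P(\cdot\mid s,a)$ for every $(s,a)\neq(s_t,a_t)$. This violates counterfactual stability whenever its condition binds. For example, if $P(\cdot\mid s,a)=P(\cdot\mid s_t,a_t)$ with full support, stability forces $\tilde P_t(s_{t+1}\mid s,a)=1$.
\item Gumbel-max SCM. It satisfies stability but can violate (Mon2). Take three outcomes, observed outcome $1$, $P(\cdot\mid s_t,a_t)=(\tfrac12,\epsilon,\tfrac12-\epsilon)$ and $P(\cdot\mid s,a)=(\delta,0.3,0.7-\delta)$. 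As $\epsilon,\delta\to0$ the observation reduces to $G_1>G_3$, which makes $G_3$ stochastically smaller. This gives $\tilde P_t(2\mid s,a)\to 36/85\approx0.42>0.3=P(2\mid s,a)$, even though $P(2\mid s_t,a_t)=\epsilon>0$.
\end{itemize}

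This does not damage the theorem. Its bounds, like all those in Appendix~\ref{app: assumptions and bounds}, are defined over the class of SCMs satisfying the two assumptions, so you can take nonemptiness as part of the premise.

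If you want to prove it, build the coupling directly. Let $p=P(\cdot\mid s_t,a_t)$, $q=P(\cdot\mid s,a)$, $p^*=p(s_{t+1})$, $q^*=q(s_{t+1})$, and let $Z$ be the outcomes forced to zero by stability. Let the posterior row for $(s,a)$ put:
\begin{itemize}
\item mass $\min\{1,q^*/p^*\}$ on $s_{t+1}$;
\item nothing on $Z$;
\item the remainder on the other outcomes $y$, capped at $q(y)$ if $p(y)>0$ and at $q(y)/p^*$ otherwise.
\end{itemize}
The inequality $p^*q(Z)\le q^*p(Z)\le q^*(1-p^*)$ shows these caps leave enough room. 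The leftover marginals can be coupled arbitrarily with $X\neq s_{t+1}$. Drawing all pairs conditionally independently given the $(s_t,a_t)$ outcome then yields a valid $\theta$.
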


\begin{theorem}
\label{theorem: ub disjoint}
When the counterfactual state-action pair $(s, a)\neq (s_t, a_t)$ and has disjoint support from the observed $(s_t, a_t)$, the counterfactual probability bounds are as follows, $\forall s' \in \mathcal{S}$:
\[
\tilde{P}_{t}^{\it UB}(s' \mid s, a) =  
\begin{cases}
\dfrac{P(s' \mid s, a)}{P(s_{t+1} \mid s_t, a_t)} & \textnormal{ if }P(s' \mid s, a) < P(s_{t+1} \mid s_t, a_t) \\
1 &\textnormal{ otherwise}
\end{cases}
\]

\[
\tilde{P}_{t}^{\it LB}(s' \mid s, a) =  
\begin{cases}
\dfrac{%
  P(s' \mid s, a) - \bigl(1 - P(s_{t+1} \mid s_t, a_t)\bigr)%
}{%
  P(s_{t+1} \mid s_t, a_t)%
} 
& 
\begin{aligned}
&\textnormal{if } P(s' \mid s, a) > \\&1 - P(s_{t+1} \mid s_t, a_t)
\end{aligned} \\[1.5mm]
0 & \textnormal{otherwise.}
\end{cases}
\]
\end{theorem}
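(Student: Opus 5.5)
The plan is to fix the MDP $\mathcal{M}$ and write $p = P(s'\mid s,a)$ and $q = P(s_{t+1}\mid s_t,a_t) > 0$. By \eqref{eq:cf_probs}, $\tilde P_t(s'\mid s,a) = \pi(s_{t+1},s')/q$. Here $\pi(x,y) = \sum_{u_t} \theta_{u_t}\,\mathbbm{1}\{f(s_t,a_t,u_t)=x,\ f(s,a,u_t)=y\}$ is the joint law of the pair of random next states $(f(s_t,a_t,U_t), f(s,a,U_t))$ induced by $\theta$. The interventional constraint \eqref{eq: interventional constraint} says exactly that $\pi$ is a coupling of $P(\cdot\mid s_t,a_t)$ and $P(\cdot\mid s,a)$. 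The problem is therefore to bound the mass that a coupling can place on the single cell $(s_{t+1},s')$, subject to whatever the two assumptions add.

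First I will show that, under disjoint supports, counterfactual stability and counterfactual monotonicity impose no extra restriction on $\pi$.
\begin{itemize}
\item (Mon1) is trivial, since $P(s_{t+1}\mid s,a)=0$.
\item For (Mon2), any $s''\ne s_{t+1}$ with $P(s''\mid s_t,a_t)>0$ lies outside the support of $(s,a)$. So $\pi(s_{t+1},s'')\le P(s''\mid s,a) = 0$, and the requirement $\tilde P_t(s''\mid s,a)\le 0$ holds automatically.
\item For stability, any $s''$ reachable from $(s,a)$ with positive probability has $P(s''\mid s_t,a_t)=0$. Its ratio $P(s''\mid s,a)/P(s''\mid s_t,a_t)$ is therefore $+\infty$, while $P(s_{t+1}\mid s,a)/q = 0$, so the strict inequality in Definition~\ref{def: counterfactual stability} always holds.
\end{itemize}
Hence the feasible set of $\pi$ is the full set of couplings of the two marginals.

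The Fréchet--Hoeffding inequalities then give $\max(0,\,p+q-1)\le \pi(s_{t+1},s')\le \min(p,q)$. Dividing by $q$ yields exactly the stated case expressions: the upper bound is $p/q$ if $p<q$ and $1$ otherwise, and the lower bound is $(p-(1-q))/q$ if $p>1-q$ and $0$ otherwise.

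For tightness I will construct explicit $\theta$ attaining each extreme.
\begin{itemize}
\item For the upper bound, take a coupling of the two marginals with $\pi(s_{t+1},s') = \min(p,q)$. Fill the remaining mass with any coupling of the residual marginals, for instance the normalised product of the residuals.
\item For the lower bound, take a coupling that first puts as much of the mass of $s_{t+1}$ as possible against states $y \neq s'$ of $(s,a)$, which have total mass $1-p$. This leaves exactly $\max(0, p+q-1)$ on the cell $(s_{t+1},s')$.
\item In both cases, let $U_t$ also draw the next states of every other state-action pair independently from their transition distributions.
\end{itemize}
Each such joint law is a distribution over deterministic maps $f$, i.e.\ a valid $\theta$ on the canonical $U_t$. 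Its marginals match every $P(\cdot\mid\hat s,\hat a)$, so \eqref{eq: interventional constraint} holds, and by the previous step the assumptions are satisfied. The bounds are therefore attained.

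I expect the main obstacle to be the careful verification that the assumptions are genuinely vacuous. In particular, the stability condition must be read as a constraint on $\pi$ (no mass on $(s_{t+1},y)$ for ratio-violating $y$), and the Mon2 case and the degenerate case $p=0$ must be handled without division issues. Also, the other state-action pairs must not interact with the joint constraint. This holds because both the assumptions and the objective involve only the pairs $(s_t,a_t)$ and $(s,a)$, which is the decomposition property quoted in the proof sketch.
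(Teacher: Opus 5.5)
Your argument is correct in substance and follows the paper's route. The paper imports this result from prior work rather than reproving it, but its proof sketch says exactly this: under disjoint supports, counterfactual stability and monotonicity are vacuous, so the bounds depend only on the two marginals. Your coupling reformulation of \eqref{eq:cf_probs}, the Fr\'echet--Hoeffding bounds $\max(0,p+q-1)\le\pi(s_{t+1},s')\le\min(p,q)$, and your checks of (Mon1), (Mon2) and stability for the pair $(s,a)$ make this precise. The validity direction needs nothing more.

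One step of the attainment argument fails as written, though it is easily repaired. As defined in the paper, stability and monotonicity are properties of the whole SCM, required for every state-action pair $(\hat s,\hat a)$ and not only for $(s,a)$. So your justification that the assumptions ``involve only $(s_t,a_t)$ and $(s,a)$'' is inaccurate. As a result, drawing the other pairs' next states independently of $X_0=f(s_t,a_t,U_t)$ can make your witness $\theta$ infeasible. For example, suppose some $(\hat s,\hat a)$ has the same transition distribution as $(s_t,a_t)$ and $q<1$. Then the two ratios in Definition~\ref{def: counterfactual stability} coincide for every $y$, so stability forbids any change of outcome, yet independence changes it with probability $1-q$.

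The fix is to glue conditionally on $X_0$. For each other pair, draw $f(\hat s,\hat a,U_t)$ given $X_0$ from a coupling of $P(\cdot\mid s_t,a_t)$ and $P(\cdot\mid\hat s,\hat a)$ that satisfies the assumptions for that pair, conditionally independently of everything else given $X_0$. This preserves all marginals and every pairwise law with $X_0$, which is all the constraints and the objective depend on. Such a coupling always exists. Let $r=P(s_{t+1}\mid\hat s,\hat a)/q$. Given $X_0=s_{t+1}$, put mass $\min(1,r)$ on $s_{t+1}$. If $r<1$, spread the remaining $1-r$ over the states $y\neq s_{t+1}$ with $P(y\mid\hat s,\hat a)>r\,P(y\mid s_t,a_t)$, placing at most $P(y\mid\hat s,\hat a)$ on each. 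These caps sum to at least $(1-rq)-r(1-q)=1-r$, and (Mon1), (Mon2) and stability can be checked directly. The coupling can then be completed arbitrarily on $X_0\neq s_{t+1}$. With this replacement, which is what actually underlies the decomposition property you cite, your proof is complete.
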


\begin{theorem}
\label{theorem:ub overlapping}
When the counterfactual state-action pair $(s, a)\neq (s_t, a_t)$ and has overlapping support with the observed $(s_t, a_t)$, the counterfactual probability bounds are as follows, $\forall s' \in \mathcal{S}$:

\[
\tilde{P}_{t}^{\it UB}(s' \mid s, a) = \begin{cases}
\dfrac{\min\left(\begin{array}{@{}c@{}} P(s_{t+1} \mid s_t, a_t), P(s_{t+1} \mid s, a) \end{array}\right)}{P(s_{t+1} \mid s_t, a_t)} & \text{ if } s' = s_{t+1} \\
0 & \textnormal{ if CS conditions hold} \\
\min\left(\begin{array}{@{}c@{}} P(s' \mid s, a), 1 - P(s_{t+1} \mid s, a) \end{array}\right) & \textnormal{ if } P({s' \mid s_t, a_t}) > 0 \\
\min\left(\begin{array}{@{}c@{}} 1 - P(s_{t+1} \mid s, a), \dfrac{P(s' \mid s, a)}{P(s_{t+1} \mid s_t, a_t)} \end{array}\right) & \textnormal{ otherwise}
\end{cases}
\]
{
\[
\tilde{P}_{t}^{\it LB}(s' \mid s, a) = \begin{cases}
\max\left(P(s' \mid s, a), 1 - \sum_{s'' \in \mathcal{S}\setminus\{s'\}}{\tilde{P}_{t}^{\it UB}(s'' \mid s, a)}\right) & \textnormal{ if $s'= s_{t+1}$}\\
0 & \textnormal{ if CS conditions hold}\\
\max\left(0, 1 - \sum_{s'' \in \mathcal{S}\setminus\{s'\}}{\tilde{P}_{t}^{\it UB}(s'' \mid s, a)}\right) & \textnormal{ otherwise}
\end{cases}
\]
}
\noindent where the counterfactual stability (CS) conditions are $P({s' \mid s_t, a_t}) > 0$  and  $\dfrac{P(s_{t+1} \mid s, a)}{P(s_{t+1} \mid s_t, a_t)} \geq \dfrac{P(s' \mid s, a)}{P({s' \mid s_t, a_t})}$
\end{theorem}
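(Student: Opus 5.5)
The plan is to reduce the linear program \eqref{eq: optimisation objective}--\eqref{eq: u constraints} to an optimisation over \emph{couplings} of two categorical distributions, and then handle each case of the statement by an upper-bounding argument followed by an explicit construction showing the bound is attained.

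\textbf{Reduction to couplings.} Let $p(\cdot)=P(\cdot\mid s_t,a_t)$ and $r(\cdot)=P(\cdot\mid s,a)$. Given $\theta$, define the joint distribution
\[
q(x,y)=\sum_{u_t:\,f(s_t,a_t,u_t)=x,\ f(s,a,u_t)=y}\theta_{u_t}.
\]
Then $q$ is a coupling of $p$ and $r$, and the objective equals $q(s_{t+1},s')/p(s_{t+1})$.

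Conversely, any coupling $q$ of $p$ and $r$ can be realised by some $\theta$ satisfying \eqref{eq: interventional constraint}. To see this, assign the remaining state-action pairs independently of $(X,Y)=(f(s_t,a_t,U_t),f(s,a,U_t))$. This is the decomposition already used in the proof sketch above, so the whole problem reduces to optimising over couplings $q$.

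\textbf{Translating the assumptions into linear constraints on $q$.}
\begin{itemize}
\item Counterfactual stability: $q(s_{t+1},y)=0$ whenever $y\neq s_{t+1}$, $p(y)>0$ and $r(s_{t+1})/p(s_{t+1})\ge r(y)/p(y)$.
\item (Mon1): $q(s_{t+1},s_{t+1})\ge r(s_{t+1})\,p(s_{t+1})$.
\item (Mon2): $q(s_{t+1},y)\le r(y)\,p(s_{t+1})$ for every $y\neq s_{t+1}$ with $p(y)>0$.
\end{itemize}

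\textbf{Upper bounds, in the order of the statement.}
\begin{itemize}
\item For $s'=s_{t+1}$: the coupling constraint gives $q(s_{t+1},s_{t+1})\le\min(p(s_{t+1}),r(s_{t+1}))$.
\item If the CS conditions hold for $s'$: the CS constraint gives the value $0$ directly.
\item If $p(s')>0$ and the CS conditions fail: (Mon2) gives $\tilde P_t\le r(s')$. Row $s_{t+1}$ of $q$ sums to $p(s_{t+1})$, so combining with (Mon1) gives $q(s_{t+1},s')\le p(s_{t+1})-q(s_{t+1},s_{t+1})\le p(s_{t+1})(1-r(s_{t+1}))$.
\item If $p(s')=0$: (Mon2) does not apply, so only the column constraint $q(s_{t+1},s')\le r(s')$ and the same row argument remain. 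This yields $\min\big(1-r(s_{t+1}),\,r(s')/p(s_{t+1})\big)$.
\end{itemize}

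\textbf{Lower bounds.} The counterfactual distribution $q(s_{t+1},\cdot)/p(s_{t+1})$ sums to $1$. Hence for each $s'$,
\[
\tilde P_t(s'\mid s,a)\ \ge\ 1-\sum_{s''\neq s'}\tilde P^{UB}_t(s''\mid s,a).
\]
For $s'=s_{t+1}$ this is combined with the (Mon1) bound $\tilde P_t\ge r(s_{t+1})$. For $s'\neq s_{t+1}$ where the CS conditions hold, the CS constraint forces the value to $0$.

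\textbf{Attainment.} For each case I will exhibit a coupling achieving the bound. Fill row $s_{t+1}$ first as follows.
\begin{enumerate}
\item Put $\min(p(s_{t+1}),r(s_{t+1}))$ on the diagonal entry.
\item Put the extremal permitted mass on column $s'$.
\item Spread the leftover row mass over the columns that are not CS-forbidden, within their (Mon2) caps.
\end{enumerate}
Then complete the remaining rows $x\neq s_{t+1}$ by any coupling of the residual marginals. A coupling of nonnegative residuals with equal total mass always exists, for instance the product or north-west-corner coupling.

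\textbf{Main obstacle.} The hard step is feasibility of step 3 together with the lower-bound attainment. I must show the leftover row mass fits into the non-forbidden columns under the (Mon2) caps without violating the column marginals of $r$. The approach I would try first is a ratio argument: by construction, CS forbids exactly the columns with small $r(y)/p(y)$. So the allowed columns have $r(y)/p(y)$ larger than $r(s_{t+1})/p(s_{t+1})$, and hence enough capacity $r(y)p(s_{t+1})$ in total to absorb $p(s_{t+1})(1-r(s_{t+1}))$, possibly with help from columns where $p(y)=0$. This should follow from $\sum_y r(y)=\sum_y p(y)=1$.

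The lower bound $1-\sum_{s''\neq s'}\tilde P^{UB}_t(s''\mid s,a)$ is achieved by simultaneously saturating every other column's upper bound. I expect this joint saturation to be the delicate point, since the individual maxima need not be compatible a priori. I would handle it with the same greedy row-filling argument, ordering the columns by $r(y)/p(y)$.
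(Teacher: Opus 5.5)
The paper gives no proof of this theorem. It is quoted from prior work, and the only in-paper argument is the per-pair decomposition invoked in the proof sketch, which your coupling reduction reproduces. Your necessity half is sound: the translation of CS, (Mon1) and (Mon2) into constraints on row $s_{t+1}$ of $q$ is right, and every stated upper and lower bound is implied. One repair is needed in the reduction. CS and monotonicity are required for every state-action pair at once, so you cannot glue the other pairs independently of $(X,Y)$: their joint with $X$ would be $p\otimes P(\cdot\mid\hat s,\hat a)$, which puts positive mass on any CS-forbidden cell $(s_{t+1},y)$ with $P(y\mid\hat s,\hat a)>0$. Instead, glue them conditionally independently given $X$, each with its own admissible coupling with $p$; one exists by the nonemptiness argument below.

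The genuine gap is attainment, which is what makes these \emph{the} bounds, and two steps of your plan for it fail.
\begin{itemize}
\item Fixing the diagonal at $\min(p(s_{t+1}),r(s_{t+1}))$ before placing the extremal mass on column $s'$ can overflow the row. Take $p=(0.5,0.1,0.4)$ and $r=(0.5,0.3,0.2)$ on $(s_{t+1},s',z)$. Then $s'$ is not CS-forbidden and its upper bound needs $q(s_{t+1},s')=0.15$, yet the diagonal already uses all of $p(s_{t+1})=0.5$. The diagonal has to act as slack within $[r(s_{t+1})p(s_{t+1}),\min(p(s_{t+1}),r(s_{t+1}))]$.
\item The capacity claim you intend to prove is false. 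For $p=(0.5,0.5)$ and $r=(0.6,0.4)$ on $\{s_{t+1},y\}$, the column $y$ is CS-forbidden, so the non-forbidden off-diagonal capacity is $0<p(s_{t+1})(1-r(s_{t+1}))=0.2$. Setting $\rho=r(s_{t+1})/p(s_{t+1})$, what $r(y)\le\rho\,p(y)$ on forbidden columns actually yields is off-diagonal capacity at least $p(s_{t+1})-\min(p(s_{t+1}),r(s_{t+1}))$. That suffices only once the diagonal may reach its maximum.
\end{itemize}

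The missing idea removes both your joint-saturation worry and any need for a ratio ordering. Rows $x\neq s_{t+1}$ carry no constraints, so any nonnegative residual column masses can be completed. Hence the achievable $v=q(s_{t+1},\cdot)/p(s_{t+1})$ are exactly $\{\ell\le v\le h,\ \sum_y v_y=1\}$, where:
\begin{itemize}
\item $\ell_{s_{t+1}}=r(s_{t+1})$ and $\ell_y=0$ otherwise;
\item $h_{s_{t+1}}=\min(1,\rho)$;
\item $h_y=0$ for forbidden columns, $h_y=r(y)$ for allowed columns with $p(y)>0$, and $h_y=r(y)/p(s_{t+1})$ for columns with $p(y)=0$.
\end{itemize}
The corrected capacity bound gives $\sum_y h_y\ge 1$, so this set is nonempty. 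Coordinate $i$ then ranges over exactly $[\max(\ell_i,1-\sum_{j\neq i}h_j),\,\min(h_i,1-\sum_{j\neq i}\ell_j)]$. The right endpoint reproduces every upper-bound case. For the left endpoint you must still check that replacing $h_j$ by $\tilde P^{UB}_t(j\mid s,a)$ is harmless. The two differ only for a column with $p(y)=0$ and $r(y)/p(s_{t+1})>1-r(s_{t+1})$, and in that case both expressions reduce to $\ell_i$.
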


\section{Proofs}
\label{app: proofs}
Let $\mathcal{M}_\updownarrow$ be an arbitrary IMDP. For a given counterfactual transition $s, a \rightarrow s'$ and observed transition $s_t, a_t \rightarrow s_{t+1}$, the optimisation problems defined in \eqref{eq: optimisation ub} and \eqref{eq: optimisation lb} yield the upper and lower bounds on the counterfactual probability of $s, a \rightarrow s'$ respectively, under the following assumptions.

\begin{assumption}[Normalisation]
    For any IMDP $\mathcal{M}_\updownarrow$, we assume the transition bounds have been normalised in the sense that, $\forall s, a, s'$:
    \[P^{\it LB}_{\mathcal{M}_\updownarrow}(s' \mid s, a) \geq 1 - \sum_{s'' \in \mathcal{S}\setminus\{s'\}}P^{\it UB}_{\mathcal{M}_\updownarrow}(s'' \mid s, a)\]
    \[P^{\it UB}_{\mathcal{M}_\updownarrow}(s' \mid s, a) \leq 1 - \sum_{s'' \in \mathcal{S}\setminus\{s'\}}P^{\it LB}_{\mathcal{M}_\updownarrow}(s'' \mid s, a)\]
\label{ass: normalisation}
\end{assumption}

This is not a strong assumption, because any feasible MDP $\mathcal{M} \in \mathcal{M}_\updownarrow$ must have transition probabilities that sum to $1$, so any IMDP can be tightened to meet the above conditions without altering the set of compatible MDPs.

\begin{assumption}[Non-zero Lower Bounds of Observed Transitions]
    Given any arbitrary IMDP $\mathcal{M}_\updownarrow$, we assume that, for any feasible $\mathcal{M} \in \mathcal{M}_\updownarrow$, $P_{\mathcal{M}}(s_{t+1} \mid s_t, a_t) > 0$, i.e., we assume $P^{LB}_{\mathcal{M}_\updownarrow}(s_{t+1} \mid s_t, a_t) > 0$.
\label{ass: observed transition}
\end{assumption}

This assumption must hold, because we observed the transition $s_t, a_t \rightarrow s_{t+1}$, hence the true MDP must have $P_{\mathcal{M}}(s_{t+1} \mid s_t, a_t) > 0$. 

\subsection{Observed $(s_t, a_t)$}
\begin{theorem}
\label{theorem: observed imdp}
Assume $(s, a) = (s_t, a_t)$. For outgoing transitions from the observed state-action pair $(s_t, a_t)$, the optimisation problem in \eqref{eq: optimisation ub} will produce an upper bound of:
\[
\tilde{P}^{UB}_{\mathcal{M}_\updownarrow}(s' \mid s, a) =
\begin{cases}
    1 &\text{if $s' = s_{t+1}$}\\
    0 &\text{otherwise}
\end{cases}
\]
and the optimisation problem in \eqref{eq: optimisation lb} will produce a lower bound of:
\[
\tilde{P}^{LB}_{\mathcal{M}_\updownarrow}(s' \mid s, a) =
\begin{cases}
    1 &\text{if $s' = s_{t+1}$}\\
    0 &\text{otherwise}
\end{cases}
\]
across all $\mathcal{M} \in \mathcal{\mathcal{M}_\updownarrow}$.
\end{theorem}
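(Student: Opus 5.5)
The plan is to reduce the statement directly to Theorem~\ref{theorem: observed state appendix}, using the decomposition in \eqref{eq: optimisation ub}--\eqref{eq: optimisation lb}. Fix any admissible MDP $\mathcal{M}\in\mathcal{M}_\updownarrow$. By Assumption~\ref{ass: observed transition}, $P_\mathcal{M}(s_{t+1}\mid s_t,a_t)\ge P^{LB}_{\mathcal{M}_\updownarrow}(s_{t+1}\mid s_t,a_t)>0$, so the denominator in \eqref{eq:cf_probs} is nonzero and the counterfactual probability is well defined for every $\mathcal{M}$ and every compatible $\theta$.

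The first step is to verify the MDP-level bounds directly from \eqref{eq:cf_probs}, so that the argument is self-contained. When $(s,a)=(s_t,a_t)$, the numerator becomes $\sum_{u_t}\mu_{s_t,a_t,u_t,s'}\,\mu_{s_t,a_t,u_t,s_{t+1}}\,\theta_{u_t}$. Since each $u_t$ defines a deterministic map $f$, we have $\mu_{s_t,a_t,u_t,s'}\mu_{s_t,a_t,u_t,s_{t+1}}=\mu_{s_t,a_t,u_t,s_{t+1}}$ if $s'=s_{t+1}$, and $0$ otherwise. In the first case, the interventional constraint \eqref{eq: interventional constraint} makes the numerator equal to $P_\mathcal{M}(s_{t+1}\mid s_t,a_t)$, so the ratio is $1$. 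In the second case the numerator is $0$. This holds for every feasible $\theta$, so $\tilde P^{LB}_{\mathcal{M},t}=\tilde P^{UB}_{\mathcal{M},t}$, and the common value is given by Theorem~\ref{theorem: observed state appendix}. The stability and monotonicity assumptions are not needed for this step.

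The second step is to take the max and min over $\mathcal{M}\in\mathcal{M}_\updownarrow$ in \eqref{eq: optimisation ub} and \eqref{eq: optimisation lb}. The value just computed is constant in $\mathcal{M}$, so both the maximum and the minimum equal it. The set of admissible MDPs is nonempty: by Assumption~\ref{ass: normalisation}, each interval row admits a distribution summing to one, and any such choice gives a valid $\mathcal{M}$. For each such $\mathcal{M}$ a compatible $\theta$ exists, for instance the independent-coupling SCM. The optimisation is therefore over a nonempty feasible set and attains exactly these values.

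No step is a genuine obstacle. The only care needed is to justify that the denominator is positive uniformly over the IMDP, which Assumption~\ref{ass: observed transition} supplies, and that the feasible set is nonempty. Tightness follows immediately, since the bounds coincide.
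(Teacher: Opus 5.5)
Your proposal is correct and follows the paper's proof. For every admissible $\mathcal{M}\in\mathcal{M}_\updownarrow$, the MDP-level bounds of Theorem~\ref{theorem: observed state appendix} are the constants $1$ (if $s'=s_{t+1}$) and $0$ (otherwise), so the max in \eqref{eq: optimisation ub} and the min in \eqref{eq: optimisation lb} return those same constants; your direct check via the indicator product and your positivity and nonemptiness remarks are extra detail the paper omits.

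One minor caveat on that extra detail: the independent-coupling SCM is not a valid feasible witness once counterfactual stability is imposed. For example, if some $(s,a)\neq(s_t,a_t)$ has the same distribution as $(s_t,a_t)$ with some $s'\neq s_{t+1}$ in its support, independent coupling gives $\tilde P_t(s'\mid s,a)=P_\mathcal{M}(s'\mid s,a)>0$, whereas stability forces it to $0$. So the existence of a feasible $\theta$ is better taken from the cited MDP-level result.
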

\begin{proof}
For all MDPs $\mathcal{M} \in \mathcal{M}_\updownarrow$, we know that the counterfactual probability for all transitions from the observed state-action pair $(s_t, a_t)$ will be:
\[\tilde{P}_{\mathcal{M},t}^{\it LB}(s_{t+1} \mid s_t, a_t) = \tilde{P}_{\mathcal{M},t}^{\it UB}(s_{t+1} \mid s_t, a_t) = 1\] 
\[\forall s' \in \mathcal{S}\setminus \{s_{t+1}\}, \tilde{P}_{\mathcal{M},t}^{\it LB}(s' \mid s_t, a_t) = \tilde{P}_{\mathcal{M},t}^{\it UB}(s' \mid s_t, a_t) = 0\]
Therefore, if $s' = s_{t+1}$:

\[
\tilde{P}^{UB}_{\mathcal{M}_\updownarrow,t}(s' \mid s, a) = {\max}_{\mathcal{M} \in \mathcal{M_\updownarrow}} \left( \tilde{P}^{UB}_{\mathcal{M},t}(s_{t+1}\mid s_t, a_t) \right) = {\max}_{\mathcal{M} \in \mathcal{M_\updownarrow}}1 = 1
\]
\[
\tilde{P}^{LB}_{\mathcal{M}_\updownarrow,t}(s' \mid s, a) = {\min}_{\mathcal{M} \in \mathcal{M_\updownarrow}} \left( \tilde{P}^{LB}_{\mathcal{M},t}(s_{t+1}\mid s_t, a_t) \right) = {\min}_{\mathcal{M} \in \mathcal{M_\updownarrow}}1 = 1
\]
and if $s' \in \mathcal{S}\setminus\{s_{t+1}\}$:
\[
\tilde{P}^{UB}_{\mathcal{M}_\updownarrow,t}(s' \mid s, a) = {\max}_{\mathcal{M} \in \mathcal{M_\updownarrow}} \left( \tilde{P}^{UB}_{\mathcal{M},t}(s' \mid s_t, a_t) \right) = {\max}_{\mathcal{M} \in \mathcal{M_\updownarrow}}0 = 0
\]
\[
\tilde{P}^{LB}_{\mathcal{M}_\updownarrow,t}(s' \mid s, a) = {\min}_{\mathcal{M} \in \mathcal{M_\updownarrow}} \left( \tilde{P}^{LB}_{\mathcal{M},t}(s' \mid s_t, a_t) \right) = {\min}_{\mathcal{M} \in \mathcal{M_\updownarrow}}0 = 0
\]
\end{proof}
\subsection{Disjoint $(s, a)$}
\subsubsection{Upper Bound}
\begin{theorem}
\label{theorem: upper disjoint imdp}
Assume $(s, a) \neq (s_t, a_t)$, and assume that the support of $(s, a)$ is disjoint from the support of $(s_t, a_t)$ in the IMDP $\mathcal{M}_\updownarrow$. In this case, the optimisation problem in \eqref{eq: optimisation ub} will produce an upper bound of:
\[
\tilde{P}_{\mathcal{M}_\updownarrow, t}^{\it UB}(s' \mid s, a) = 
\begin{cases}
\dfrac{P^{\it UB}_{\mathcal{M}_\updownarrow}(s' \mid s, a)}{P^{\it LB}_{\mathcal{M}_\updownarrow}(s_{t+1} \mid s_t, a_t)} & \text{if $P^{UB}_{\mathcal{M}_\updownarrow}(s' \mid s, a) < P^{LB}_{\mathcal{M}_\updownarrow}(s_{t+1} \mid s_t, a_t)$} \\
1 & \text{otherwise} \\
\end{cases}
\]
\end{theorem}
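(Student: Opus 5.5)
By \eqref{eq: optimisation ub}, the bound is $\max_{\mathcal{M}\in\mathcal{M}_\updownarrow}\tilde{P}^{UB}_{\mathcal{M},t}(s'\mid s,a)$. Disjointness in the IMDP sense means the supports are disjoint for every admissible $\mathcal{M}$, so Theorem~\ref{theorem: ub disjoint} applies to each such $\mathcal{M}$. That theorem gives
\[
\tilde{P}^{UB}_{\mathcal{M},t}(s'\mid s,a)=\min\Big\{1,\tfrac{P_\mathcal{M}(s'\mid s,a)}{P_\mathcal{M}(s_{t+1}\mid s_t,a_t)}\Big\}.
\]
The problem therefore reduces to maximising $g(x,y)=\min\{1,x/y\}$ over the pairs $(x,y)=(P_\mathcal{M}(s'\mid s,a),P_\mathcal{M}(s_{t+1}\mid s_t,a_t))$ realisable by some admissible $\mathcal{M}$. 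Assumption~\ref{ass: observed transition} gives $y\geq P^{LB}(s_{t+1}\mid s_t,a_t)>0$, so $g$ is well defined.

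The upper estimate follows from monotonicity. For every admissible $\mathcal{M}$ we have $x\le P^{UB}(s'\mid s,a)$ and $y\geq P^{LB}(s_{t+1}\mid s_t,a_t)$. Since $g$ is nondecreasing in $x$ and nonincreasing in $y>0$, we get $g(x,y)\le\min\{1,P^{UB}(s'\mid s,a)/P^{LB}(s_{t+1}\mid s_t,a_t)\}$. This is exactly the stated case split.

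The main step is achievability: I must exhibit a single valid $\mathcal{M}\in\mathcal{M}_\updownarrow$ with $x=P^{UB}(s'\mid s,a)$ and $y=P^{LB}(s_{t+1}\mid s_t,a_t)$ simultaneously. Because $(s,a)\neq(s_t,a_t)$, these values sit in different rows of the transition function, and the constraints of an IMDP act row by row. So I only need to construct each row separately and set every other row to any admissible distribution.

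\begin{itemize}
\item For row $(s,a)$, set $P(s'\mid s,a)=P^{UB}(s'\mid s,a)$. Then distribute the remaining mass $1-P^{UB}(s'\mid s,a)$ over $s''\neq s'$ within their intervals. Assumption~\ref{ass: normalisation} gives $\sum_{s''\neq s'}P^{LB}(s''\mid s,a)\le 1-P^{UB}(s'\mid s,a)\le\sum_{s''\neq s'}P^{UB}(s''\mid s,a)$. Starting from the lower bounds, I raise the entries greedily until the residual is absorbed.
\item Row $(s_t,a_t)$ is handled the same way, using the other normalisation inequality, with $s_{t+1}$ fixed at its lower bound.
\end{itemize}

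One detail needs checking. The resulting MDP must still have disjoint supports, which holds because disjointness is assumed for every admissible $\mathcal{M}$. It must also still satisfy $y>0$, which holds by Assumption~\ref{ass: observed transition}. Hence Theorem~\ref{theorem: ub disjoint} applies to this $\mathcal{M}$, and it attains the bound.

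The main obstacle I anticipate is precisely this joint-feasibility step, namely verifying that the greedy row completion under Assumption~\ref{ass: normalisation} is valid. Once it holds, combining the upper estimate with achievability shows that the maximum equals the stated expression, which also establishes tightness.
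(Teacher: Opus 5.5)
Your proof is correct and follows the paper's argument. You apply the MDP closed form (Theorem~\ref{theorem: ub disjoint}) to every admissible $\mathcal{M}$, bound the ratio by $P^{\it UB}(s'\mid s,a)/P^{\it LB}(s_{t+1}\mid s_t,a_t)$, and attain it with a single admissible MDP via Assumption~\ref{ass: normalisation}; your monotonicity argument for $\min\{1,x/y\}$ just merges the paper's two cases into one. Your row-by-row completion makes explicit the joint-feasibility step that the paper only asserts. One small correction there: the inequality $1-P^{\it UB}(s'\mid s,a)\le\sum_{s''\neq s'}P^{\it UB}(s''\mid s,a)$ is not literally one of the normalisation inequalities, but it follows from the first one combined with $P^{\it LB}\le P^{\it UB}$.
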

\begin{proof}
Because the supports of $(s, a)$ and $(s_t, a_t)$ are disjoint in the IMDP $\mathcal{M}_\updownarrow$, this means that $\forall s' \in \mathcal{S}, P^{\it UB}_{\mathcal{M}_\updownarrow}(s' \mid s, a) > 0 \Rightarrow P^{\it UB}_{\mathcal{M}_\updownarrow}(s' \mid s_t, a_t) = 0$, and $\forall s' \in \mathcal{S}, P^{\it UB}_{\mathcal{M}_\updownarrow}(s' \mid s_t, a_t) > 0 \Rightarrow P^{\it UB}_{\mathcal{M}_\updownarrow}(s' \mid s, a) = 0$. Therefore, in any MDP $\mathcal{M} \in \mathcal{M_{\updownarrow}}$, the supports of $(s, a)$ and $(s_t, a_t)$ are guaranteed to be disjoint as well. 

For any MDP $\mathcal{M} \in \mathcal{M_{\updownarrow}}$, the proven closed-form solution to the upper bound $\tilde{P}_{\mathcal{M}, t}^{\it UB}(s' \mid s, a)$ is:

\[\tilde{P}_{\mathcal{M}, t}^{\it UB}(s' \mid s, a) = 
\begin{cases}
\dfrac{P_{\mathcal{M}}(s' \mid s, a)}{P_{\mathcal{M}}(s_{t+1} \mid s_t, a_t)} & \text{if $P_\mathcal{M}(s' \mid s, a) < P_\mathcal{M}(s_{t+1} \mid s_t, a_t)$} \\
1 & \text{otherwise} \\
\end{cases}
\]

Therefore, to identify the upper bound, we need to maximise this over all MDPs $\mathcal{M} \in \mathcal{M}_\updownarrow$. Clearly, if there exist MDPs $\mathcal{M} \in \mathcal{M}_\updownarrow$ where $P_\mathcal{M}(s' \mid s, a) < P_\mathcal{M}(s_{t+1} \mid s_t, a_t)$, then $\dfrac{P_\mathcal{M}(s' \mid s, a)}{P_\mathcal{M}(s_{t+1} \mid s_t, a_t)} < 1$, so the absolute maximum value of $\tilde{P}_{\mathcal{M}, t}^{\it UB}(s' \mid s, a)$ across all MDPs cannot be greater than $1$.

Consider whether $P^{UB}_{\mathcal{M}_\updownarrow}(s' \mid s, a) < P^{LB}_{\mathcal{M}_\updownarrow}(s_{t+1} \mid s_t, a_t)$:

\begin{enumerate}
    \item If $P^{UB}_{\mathcal{M}_\updownarrow}(s' \mid s, a) \geq P^{LB}_{\mathcal{M}_\updownarrow}(s_{t+1} \mid s_t, a_t)$, then, because of Assumption \ref{ass: normalisation}, there must exist an MDP $\mathcal{M} \in \mathcal{M}_\updownarrow$ where $P_\mathcal{M}(s_{t+1} \mid s_t, a_t) = P^{\it LB}_{\mathcal{M}_\updownarrow}(s_{t+1} \mid s_t, a_t)$ and $P_\mathcal{M}(s' \mid s, a) = P^{\it UB}_{\mathcal{M}_\updownarrow}(s' \mid s, a)$. Because $P^{UB}_{\mathcal{M}_\updownarrow}(s' \mid s, a) \geq P^{LB}_{\mathcal{M}_\updownarrow}(s_{t+1} \mid s_t, a_t)$, we know that in MDP $\mathcal{M}$, the upper bound counterfactual probability $\tilde{P}_{\mathcal{M}, t}^{\it UB}(s' \mid s, a) = 1$. 
    Since this is the maximum possible counterfactual probability, this means that $\tilde{P}_{\mathcal{M}_\updownarrow, t}^{\it UB}(s' \mid s, a) = 1$.
    
    \item Otherwise, if $P^{UB}_{\mathcal{M}_\updownarrow}(s' \mid s, a) < P^{LB}_{\mathcal{M}_\updownarrow}(s_{t+1} \mid s_t, a_t)$, then, for every MDP $\mathcal{M} \in \mathcal{M}_\updownarrow$, $P_{\mathcal{M}}(s' \mid s, a) < P_{\mathcal{M}}(s_{t+1} \mid s_t, a_t)$. Therefore, for every MDP, the upper bound counterfactual probability for the transition will be $\tilde{P}_{\mathcal{M}, t}^{\it UB}(s' \mid s, a) = 
    \dfrac{P_\mathcal{M}(s' \mid s, a)}{P_\mathcal{M}(s_{t+1} \mid s_t, a_t)}$. Therefore, the maximum upper bound counterfactual probability is:

    \begin{align*}
    \tilde{P}^{UB}_{\mathcal{M}_\updownarrow}(s' \mid s, a) &= {\max}_{\mathcal{M} \in \mathcal{M_\updownarrow}} \left( \tilde{P}^{UB}_{\mathcal{M}, t}(s' \mid s, a) \right) \\
    &= {\max}_{\mathcal{M} \in \mathcal{M_\updownarrow}} \left( \dfrac{P_{\mathcal{M}}(s' \mid s, a)}{P_{\mathcal{M}}(s_{t+1} \mid s_t, a_t)}\right) \\
    &= \dfrac{P^{\it UB}_{\mathcal{M}_\updownarrow}(s' \mid s, a)}{P^{\it LB}_{\mathcal{M}_\updownarrow}(s_{t+1} \mid s_t, a_t)}
    \end{align*}

    Because of Assumption \ref{ass: normalisation}, there must exist an MDP $\mathcal{M}$ where $P_\mathcal{M}(s_{t+1} \mid s_t, a_t) = P^{\it LB}_{\mathcal{M}_\updownarrow}(s_{t+1} \mid s_t, a_t)$ and $P_\mathcal{M}(s' \mid s, a) = P^{\it UB}_{\mathcal{M}_\updownarrow}(s' \mid s, a)$. Also, note that because of Assumption \ref{ass: observed transition}, $P^{\it LB}_{\mathcal{M}_\updownarrow}(s_{t+1} \mid s_t, a_t) > 0$.
    
    Therefore, the maximum upper bound counterfactual probability in this case is 
    \[
    \tilde{P}^{UB}_{\mathcal{M}_\updownarrow}(s' \mid s, a) = \dfrac{P^{\it UB}_{\mathcal{M}_\updownarrow}(s' \mid s, a)}{P^{\it LB}_{\mathcal{M}_\updownarrow}(s_{t+1} \mid s_t, a_t)}
    \]

    Also, note that because of Assumption \ref{ass: observed transition}, $P^{\it LB}_{\mathcal{M}_\updownarrow}(s_{t+1} \mid s_t, a_t) > 0$.
\end{enumerate}
We can combine these cases as follows:
\[
\tilde{P}_{\mathcal{M}_\updownarrow, t}^{\it UB}(s' \mid s, a) = 
\begin{cases}
\dfrac{P^{\it UB}_{\mathcal{M}_\updownarrow}(s' \mid s, a)}{P^{\it LB}_{\mathcal{M}_\updownarrow}(s_{t+1} \mid s_t, a_t)} & \text{if $P^{UB}_{\mathcal{M}_\updownarrow}(s' \mid s, a) < P^{LB}_{\mathcal{M}_\updownarrow}(s_{t+1} \mid s_t, a_t)$} \\
1 & \text{otherwise} \\
\end{cases}
\]
\end{proof}

\subsubsection{Lower Bound}
\begin{theorem}
\label{theorem: lower disjoint imdp}
Assume $(s, a) \neq (s_t, a_t)$, and assume that the support of $(s, a)$ is disjoint from the support of $(s_t, a_t)$ in the IMDP $\mathcal{M}_\updownarrow$. In this case, the optimisation problem in \eqref{eq: optimisation lb} will produce a lower bound of:
\[\tilde{P}_{\mathcal{M}_\updownarrow, t}^{\it LB}(s' \mid s, a) = 
\begin{cases}
\dfrac{P^{\it LB}_{\mathcal{M}_\updownarrow}(s' \mid s, a) - (1 - P^{\it LB}_{\mathcal{M}_\updownarrow}(s_{t+1} \mid s_t, a_t))}{P^{\it LB}_{\mathcal{M}_\updownarrow}(s_{t+1} \mid s_t, a_t)} & \text{if $P^{\it LB}_{\mathcal{M}_\updownarrow}(s' \mid s, a) > 1 - P^{\it LB}_{\mathcal{M}_\updownarrow}(s_{t+1} \mid s_t, a_t)$}\\
0 & \text{otherwise}\\
\end{cases}
\]
\end{theorem}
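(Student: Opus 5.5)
Following the structure of the proof of Theorem~\ref{theorem: upper disjoint imdp}, I would begin by noting that disjointness of supports in $\mathcal{M}_\updownarrow$ implies disjointness in every admissible $\mathcal{M}\in\mathcal{M}_\updownarrow$. Hence, by Theorem~\ref{theorem: ub disjoint}, the MDP-level lower bound can be written as
\[
\tilde{P}^{\it LB}_{\mathcal{M},t}(s'\mid s,a)=\max\Big\{0,\ \frac{x-(1-y)}{y}\Big\}=\max\Big\{0,\ 1-\frac{1-x}{y}\Big\},
\]
where $x=P_\mathcal{M}(s'\mid s,a)$ and $y=P_\mathcal{M}(s_{t+1}\mid s_t,a_t)>0$ by Assumption~\ref{ass: observed transition}. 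By \eqref{eq: optimisation lb}, the task is then to minimise this quantity over admissible $\mathcal{M}$.

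The key observation is monotonicity. Since $1-x\ge 0$, the expression $1-\frac{1-x}{y}$ is nondecreasing in $x$ and nondecreasing in $y$, and $\max\{0,\cdot\}$ preserves both monotonicities. The minimum over the box $[p^{\it LB}_{s,a}(s'),p^{\it UB}_{s,a}(s')]\times[p^{\it LB}_{s_t,a_t}(s_{t+1}),p^{\it UB}_{s_t,a_t}(s_{t+1})]$ is therefore attained at the corner $x=p^{\it LB}_{s,a}(s')$, $y=p^{\it LB}_{s_t,a_t}(s_{t+1})$. This yields exactly the claimed expression, including the case split: the value is positive precisely when $p^{\it LB}_{s,a}(s')>1-p^{\it LB}_{s_t,a_t}(s_{t+1})$, and is $0$ otherwise.

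It remains to show that this corner is realised by a single admissible MDP. Because $(s,a)\neq(s_t,a_t)$, the two distributions $P_\mathcal{M}(\cdot\mid s,a)$ and $P_\mathcal{M}(\cdot\mid s_t,a_t)$ are chosen independently. By Assumption~\ref{ass: normalisation}, each single coordinate can be set to its interval lower bound while the remaining coordinates are completed within their intervals so that the row sums to $1$. The completion follows by a continuity or greedy-filling argument, since $p^{\it LB}(s')+\sum_{s''\neq s'}p^{\it UB}(s'')\ge 1$ and $p^{\it LB}(s')+\sum_{s''\neq s'}p^{\it LB}(s'')\le 1$. All other rows are filled arbitrarily. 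I expect this feasibility step to be the only real subtlety, and it is the same argument already invoked in Theorem~\ref{theorem: upper disjoint imdp}.

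Finally, I would remark on the zero case. When the condition fails, any admissible MDP attaining the corner gives value $0$, which is the absolute minimum of a probability. So the bound is tight in both branches.
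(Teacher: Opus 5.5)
Your proposal is correct and follows essentially the same route as the paper. Both arguments reduce to the per-MDP closed form of Theorem~\ref{theorem: ub disjoint}, observe that it is minimised at $P_\mathcal{M}(s'\mid s,a)=p^{\it LB}_{s,a}(s')$ and $P_\mathcal{M}(s_{t+1}\mid s_t,a_t)=p^{\it LB}_{s_t,a_t}(s_{t+1})$, and use Assumption~\ref{ass: normalisation} to realise this corner by an admissible MDP. The differences are cosmetic. You handle both branches at once via the monotonicity of $\max\{0,1-(1-x)/y\}$, whereas the paper splits explicitly on whether $p^{\it LB}_{s,a}(s')>1-p^{\it LB}_{s_t,a_t}(s_{t+1})$. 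You also spell out the row-completion feasibility argument that the paper simply attributes to the normalisation assumption.
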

\begin{proof}
Because the supports of $(s, a)$ and $(s_t, a_t)$ are disjoint in the IMDP $\mathcal{M}_\updownarrow$, this means that  $\forall s' \in \mathcal{S}, P^{\it UB}_{\mathcal{M}_\updownarrow}(s' \mid s, a) > 0 \Rightarrow P^{\it UB}_{\mathcal{M}_\updownarrow}(s' \mid s_t, a_t) = 0$, and $\forall s' \in \mathcal{S}, P^{\it UB}_{\mathcal{M}_\updownarrow}(s' \mid s_t, a_t) > 0 \Rightarrow P^{\it UB}_{\mathcal{M}_\updownarrow}(s' \mid s, a) = 0$. Therefore, in any MDP $\mathcal{M} \in \mathcal{M_{\updownarrow}}$, the supports of $(s, a)$ and $(s_t, a_t)$ are guaranteed to be disjoint as well. 

For any MDP $\mathcal{M} \in \mathcal{M_{\updownarrow}}$, the proven closed-form solution to the lower bound $\tilde{P}_{\mathcal{M}, t}^{\it LB}(s' \mid s, a)$ is:

\[\tilde{P}_{\mathcal{M}, t}^{\it LB}(s' \mid s, a) = 
\begin{cases}
\dfrac{P_\mathcal{M}(s' \mid s, a) - (1 - P_\mathcal{M}(s_{t+1} \mid s_t, a_t))}{P_\mathcal{M}(s_{t+1} \mid s_t, a_t)} & \text{if $P_\mathcal{M}(s' \mid s, a) > 1 - P_\mathcal{M}(s_{t+1} \mid s_t, a_t)$}\\
0 & \text{otherwise}\\
\end{cases}
\]

Therefore, to identify the lower bound, we need to minimise this over all MDPs $\mathcal{M} \in \mathcal{M}_\updownarrow$. Clearly, for any MDP $\mathcal{M}$ where $P_\mathcal{M}(s' \mid s, a) > 1 - P_\mathcal{M}(s_{t+1} \mid s_t, a_t)$: \[\dfrac{P_\mathcal{M}(s' \mid s, a) - (1 - P_\mathcal{M}(s_{t+1} \mid s_t, a_t))}{P_\mathcal{M}(s_{t+1} \mid s_t, a_t)} > 0\]
Therefore, the absolute minimum value of $\tilde{P}_{\mathcal{M}, t}^{\it LB}(s' \mid s, a)$ across all MDPs cannot be less than $0$. Consider whether $P^{LB}_{\mathcal{M}_\updownarrow}(s' \mid s, a) > 1 - P^{LB}_{\mathcal{M}_\updownarrow}(s_{t+1} \mid s_t, a_t)$:

\begin{enumerate}
    \item If $P^{LB}_{\mathcal{M}_\updownarrow}(s' \mid s, a) \leq 1 - P^{LB}_{\mathcal{M}_\updownarrow}(s_{t+1} \mid s_t, a_t)$, then, because of Assumption \ref{ass: normalisation}, there must exist an MDP $\mathcal{M}$ where $P_\mathcal{M}(s_{t+1} \mid s_t, a_t) = P^{\it LB}_{\mathcal{M}_\updownarrow}(s_{t+1} \mid s_t, a_t)$ and $P_\mathcal{M}(s' \mid s, a) = P^{\it LB}_{\mathcal{M}_\updownarrow}(s' \mid s, a)$. Because $P^{LB}_{\mathcal{M}_\updownarrow}(s' \mid s, a) \leq 1 - P^{LB}_{\mathcal{M}_\updownarrow}(s_{t+1} \mid s_t, a_t)$, we know that in this MDP $\mathcal{M}$, the lower bound counterfactual probability $\tilde{P}_{\mathcal{M}, t}^{\it LB}(s' \mid s, a) = 0$. Since this is the minimum possible counterfactual probability across all MDPs, this means that $\tilde{P}_{\mathcal{M}_\updownarrow, t}^{\it LB}(s' \mid s, a) = 0$.
    
    \item Otherwise, if $P^{LB}_{\mathcal{M}_\updownarrow}(s' \mid s, a) > 1 - P^{LB}_{\mathcal{M}_\updownarrow}(s_{t+1} \mid s_t, a_t)$, then, for every MDP $\mathcal{M} \in \mathcal{M}_\updownarrow$, $P_\mathcal{M}(s' \mid s, a) > 1 - P_\mathcal{M}(s_{t+1} \mid s_t, a_t)$. Therefore, for every MDP, the lower bound counterfactual probability for the transition will be $\tilde{P}_{\mathcal{M}, t}(s' \mid s, a) = \dfrac{P_\mathcal{M}(s' \mid s, a) - (1 - P_\mathcal{M}(s_{t+1} \mid s_t, a_t))}{P_\mathcal{M}(s_{t+1} \mid s_t, a_t)}$.
    
    Therefore, the minimum lower bound counterfactual probability will be:

    \begin{align*}
    \tilde{P}^{LB}_{\mathcal{M}_\updownarrow}(s' \mid s, a) &= {\min}_{\mathcal{M} \in \mathcal{M_\updownarrow}} \left( \tilde{P}^{LB}_{\mathcal{M},t}(s' \mid s, a) \right) \\
        &= {\min}_{\mathcal{M} \in \mathcal{M_\updownarrow}} \left( \dfrac{P_{\mathcal{M}}(s' \mid s, a) - (1 - P_{\mathcal{M}}(s_{t+1} \mid s_t, a_t))}{P_{\mathcal{M}}(s_{t+1} \mid s_t, a_t)}  \right) \\
        &= {\min}_{\mathcal{M} \in \mathcal{M_\updownarrow}} \left( \dfrac{P_{\mathcal{M}}(s' \mid s, a) - 1 + P_{\mathcal{M}}(s_{t+1} \mid s_t, a_t)}{P_{\mathcal{M}}(s_{t+1} \mid s_t, a_t)}  \right) \\
        &= {\min}_{\mathcal{M} \in \mathcal{M_\updownarrow}} \left( 1 - \left(\dfrac{1-P_{\mathcal{M}}(s' \mid s, a)}{P_{\mathcal{M}}(s_{t+1} \mid s_t, a_t)}\right)  \right)
    \end{align*}

    Clearly, this formula will be minimal when $P_\mathcal{M}(s' \mid s, a) = P^{LB}_{\mathcal{M}_\updownarrow}(s' \mid s, a)$ and $P_\mathcal{M}(s_{t+1} \mid s_t, a_t) = P^{LB}_{\mathcal{M}_\updownarrow}(s_{t+1} \mid s_t, a_t)$. Because of Assumption \ref{ass: normalisation}, there must exist an MDP $\mathcal{M}$ where $P_\mathcal{M}(s' \mid s, a) = P^{LB}_{\mathcal{M}_\updownarrow}(s' \mid s, a)$ and $P_\mathcal{M}(s_{t+1} \mid s_t, a_t) = P^{LB}_{\mathcal{M}_\updownarrow}(s_{t+1} \mid s_t, a_t)$. Also, note that because of Assumption \ref{ass: observed transition}, $P^{\it LB}_{\mathcal{M}_\updownarrow}(s_{t+1} \mid s_t, a_t) > 0$.
    
    Therefore, the minimum lower bound counterfactual probability in this case is:

    \[
    \tilde{P}^{\it LB}_{\mathcal{M}_\updownarrow}(s' \mid s, a) = \dfrac{P^{\it LB}_{\mathcal{M}_\updownarrow}(s' \mid s, a) - (1 - P^{\it LB}_{\mathcal{M}_\updownarrow}(s_{t+1} \mid s_t, a_t))}{P^{\it LB}_{\mathcal{M}_\updownarrow}(s_{t+1} \mid s_t, a_t)} 
    \]
\end{enumerate}

We can combine these cases as follows:
\[\tilde{P}_{\mathcal{M}_\updownarrow, t}^{\it LB}(s' \mid s, a) = 
\begin{cases}
\dfrac{P^{\it LB}_{\mathcal{M}_\updownarrow}(s' \mid s, a) - (1 - P^{\it LB}_{\mathcal{M}_\updownarrow}(s_{t+1} \mid s_t, a_t))}{P^{\it LB}_{\mathcal{M}_\updownarrow}(s_{t+1} \mid s_t, a_t)} & \text{if $P^{\it LB}_{\mathcal{M}_\updownarrow}(s' \mid s, a) > 1 - P^{\it LB}_{\mathcal{M}_\updownarrow}(s_{t+1} \mid s_t, a_t)$}\\
0 & \text{otherwise}\\
\end{cases}
\]
\end{proof}
\subsection{Overlapping Support $(s, a)$}

\subsubsection{Upper Bound}
\begin{theorem}
\label{theorem: upper overlapping imdp}
Assume $(s, a) \neq (s_t, a_t)$, and assume that the support of $(s, a)$ overlaps with the support of $(s_t, a_t)$ in the IMDP $\mathcal{M}_\updownarrow$. In this case, the optimisation problem in \eqref{eq: optimisation ub} will produce an upper bound of:
\[
\tilde{P}_{\mathcal{M}_\updownarrow, t}^{\it UB}(s' \mid s, a) = \\
\]
\begin{align*}
&\begin{cases}
\min(1, \dfrac{P^{\it UB}_{\mathcal{M}_\updownarrow}(s_{t+1} \mid s, a))}{P^{\it LB}_{\mathcal{M}_\updownarrow}(s_{t+1} \mid s_t, a_t)} & \text{if $s' = s_{t+1}$}\\
0 & \text{if $P^{\it LB}_{\mathcal{M}_\updownarrow}(s' \mid s_t, a_t) > 0$ and} \\ & \text{$\dfrac{P^{\it LB}_{\mathcal{M}_\updownarrow}(s_{t+1} \mid s, a)}{P^{\it UB}_{\mathcal{M}_\updownarrow}(s_{t+1} \mid s_t, a_t)} \geq \dfrac{P^{\it UB}_{\mathcal{M}_\updownarrow}(s' \mid s, a)}{P^{\it LB}_{\mathcal{M}_\updownarrow}(s' \mid s_t, a_t)}$}\\
\min(P^{\it UB}_{\mathcal{M}_\updownarrow}(s' \mid s, a), 1 - P^{\it LB}_{\mathcal{M}_\updownarrow}(s_{t+1} \mid s, a)) & \text{if $P^{LB}_{\mathcal{M}_\updownarrow}(s' \mid s_t, a_t) > 0$}\\
\min\left(1 - P^{LB}_{\mathcal{M}_\updownarrow}(s_{t+1} \mid s, a), \dfrac{P^{UB}_{\mathcal{M}_\updownarrow}(s' \mid s, a)}{P^{LB}_{\mathcal{M}_\updownarrow}(s_{t+1} \mid s_t, a_t)}\right) & \text{if $P^{UB}_{\mathcal{M}_\updownarrow}(s' \mid s_t, a_t) = 0$}\\
\min\left(1 - P^{LB}_{\mathcal{M}_\updownarrow}(s_{t+1} \mid s, a),  \dfrac{P^{UB}_{\mathcal{M}_\updownarrow}(s' \mid s, a)}{\max\left(\substack{1 - \sum_{s'' \in \mathcal{S} \setminus \{s', s_{t+1}\}}P^{UB}_{\mathcal{M}_\updownarrow}(s'' \mid s_t, a_t), \\P^{LB}(s_{t+1} \mid s_t, a_t)}\right)}\right) & \text{otherwise}
\end{cases}
\end{align*}
\end{theorem}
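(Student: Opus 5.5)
Following the template of Theorems~\ref{theorem: upper disjoint imdp} and~\ref{theorem: lower disjoint imdp}, the plan is to start from the closed-form MDP upper bound of Theorem~\ref{theorem:ub overlapping}, valid for every fixed $\mathcal{M}\in\mathcal{M}_\updownarrow$, and maximise it over $\mathcal{M}$ as in \eqref{eq: optimisation ub}. Two things must be shown in each case. First, the stated value is an upper bound on $\tilde{P}^{UB}_{\mathcal{M},t}(s'\mid s,a)$ for every admissible $\mathcal{M}$ and every branch of the MDP formula that $\mathcal{M}$ may fall into. Second, some admissible $\mathcal{M}$ attains it. The cases of the statement are handled in the order given, since each later case presupposes that the earlier conditions fail.

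\emph{Case $s'=s_{t+1}$.} The MDP bound is $\min(1, P_\mathcal{M}(s_{t+1}\mid s,a)/P_\mathcal{M}(s_{t+1}\mid s_t,a_t))$. This is monotone increasing in the numerator and decreasing in the denominator. Because $(s,a)\neq(s_t,a_t)$, the two distributions can be chosen independently. By Assumption~\ref{ass: normalisation} we can therefore set $P(s_{t+1}\mid s,a)=p^{UB}_{s,a}(s_{t+1})$ and $P(s_{t+1}\mid s_t,a_t)=p^{LB}_{s_t,a_t}(s_{t+1})$ simultaneously. Assumption~\ref{ass: observed transition} keeps the denominator positive.

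\emph{Case where CS always holds.} Monotonicity of both sides of the CS inequality shows that $P^{LB}(s'\mid s_t,a_t)>0$ together with the interval CS inequality implies the MDP-level CS conditions for every $\mathcal{M}$. Hence $\tilde P^{UB}_{\mathcal{M},t}=0$ uniformly.

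\emph{Remaining cases.} Whenever the interval CS condition fails, some $\mathcal{M}$ violates CS: we take lower values of $s_{t+1}$ from $(s,a)$, upper values of $s_{t+1}$ from $(s_t,a_t)$, and so on. The optimiser may therefore move to the other MDP branches, so the answer is the maximum over the non-CS branches.

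\begin{itemize}
\item If $p^{LB}_{s_t,a_t}(s')>0$, every $\mathcal{M}$ has $P(s'\mid s_t,a_t)>0$. The third branch then gives $\min(P(s'\mid s,a),1-P(s_{t+1}\mid s,a))$, which is maximised at $p^{UB}_{s,a}(s')$ and $p^{LB}_{s,a}(s_{t+1})$. Both extremes are jointly realisable within the single distribution $P(\cdot\mid s,a)$ by Assumption~\ref{ass: normalisation}, provided $p^{UB}_{s,a}(s')+p^{LB}_{s,a}(s_{t+1})\le 1$. If this sum exceeds $1$, one instead sets $P(s'\mid s,a)=1-p^{LB}_{s,a}(s_{t+1})$, and the minimum equals it in either case. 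Showing that CS can be violated simultaneously at this maximiser is handled by adjusting the $(s_t,a_t)$ distribution, which is free.
\item If $p^{UB}_{s_t,a_t}(s')=0$, only the fourth branch applies. Maximising its two terms uses the independent choices $P(s'\mid s,a)$ high, $P(s_{t+1}\mid s,a)$ low, and $P(s_{t+1}\mid s_t,a_t)=p^{LB}_{s_t,a_t}(s_{t+1})$.
\end{itemize}

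\emph{Final ``otherwise'' case.} This is where $p^{LB}_{s_t,a_t}(s')=0<p^{UB}_{s_t,a_t}(s')$, and I expect it to be the main obstacle. An MDP may put $P(s'\mid s_t,a_t)>0$, which lands it in the third branch with value at most $\min(p^{UB}_{s,a}(s'),1-p^{LB}_{s,a}(s_{t+1}))$. Alternatively it may put $P(s'\mid s_t,a_t)=0$, which lands it in the fourth branch. The fourth-branch value dominates because $P(s_{t+1}\mid s_t,a_t)\le 1$, so the optimiser chooses $P(s'\mid s_t,a_t)=0$. Under that constraint, however, the mass of $(s_t,a_t)$ must sit on $s_{t+1}$ and the other states. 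The smallest feasible $P(s_{t+1}\mid s_t,a_t)$ is therefore $\max(1-\sum_{s''\neq s',s_{t+1}}p^{UB}_{s_t,a_t}(s''),\,p^{LB}_{s_t,a_t}(s_{t+1}))$.

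To complete this case I would prove two things. First, this denominator is feasible: it lies below $p^{UB}_{s_t,a_t}(s_{t+1})$ by Assumption~\ref{ass: normalisation} combined with $p^{LB}_{s_t,a_t}(s')=0$. Second, the remaining states can be filled within their intervals. Substituting the denominator into the fourth branch then yields the stated bound. A final check confirms it is at least the third-branch value, so the maximum over both options is the stated expression.
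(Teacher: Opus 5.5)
Your proposal is correct and follows essentially the same route as the paper's proof. Like the paper, you start from the per-MDP closed form and handle the five cases in order, and in each case you exhibit a single admissible MDP that attains the bound: one interval endpoint per distribution on the relevant coordinates, and in the final case $P(s'\mid s_t,a_t)=0$ with the smallest feasible $P(s_{t+1}\mid s_t,a_t)$. Your check that this denominator is feasible is slightly more explicit than the paper's appeal to Assumption~\ref{ass: normalisation}.

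The fallback in your $p^{LB}_{s_t,a_t}(s')>0$ case is unnecessary. Assumption~\ref{ass: normalisation} already gives $p^{UB}_{s,a}(s')\le 1-\sum_{s''\neq s'}p^{LB}_{s,a}(s'')\le 1-p^{LB}_{s,a}(s_{t+1})$, so that sum never exceeds $1$.
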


\begin{proof}
For any MDP $\mathcal{M} \in \mathcal{M}_\updownarrow$, the proven closed-form solution to the upper bound $\tilde{P}_{\mathcal{M}, t}(s' \mid s, a)$ is as follows: 

\[\tilde{P}_{\mathcal{M}, t}(s' \mid s, a) =
\begin{cases}
\dfrac{\min(P_\mathcal{M}(s_{t+1} \mid s_t, a_t), P_\mathcal{M}(s_{t+1} \mid s, a))}{P_\mathcal{M}(s_{t+1} \mid s_t, a_t)} & \text{if $s' = s_{t+1}$}\\
0 & \text{if $P_\mathcal{M}(s' \mid s_t, a_t) > 0$} \\ & \text{and $\dfrac{P_\mathcal{M}(s_{t+1} \mid s, a)}{P_\mathcal{M}(s_{t+1} \mid s_t, a_t)}\geq\dfrac{P_\mathcal{M}(s' \mid s, a)}{P_\mathcal{M}(s' \mid s_t, a_t)}$}\\
\min(P_\mathcal{M}(s' \mid s, a), 1 - P_\mathcal{M}(s_{t+1} \mid s, a)) & \text{if $P_\mathcal{M}(s' \mid s_t, a_t) > 0$}\\
\min\left(1 - P_\mathcal{M}(s_{t+1} \mid s, a), \dfrac{P_\mathcal{M}(s' \mid s, a)}{P_\mathcal{M}(s_{t+1} \mid s_t, a_t)}\right) & \text{otherwise}\\
\end{cases}
\]

Consider the following disjoint cases.

\paragraph{Case 1: $s' = s_{t+1}$}
In this case, for any MDP $\mathcal{M} \in \mathcal{M}_\updownarrow$, the closed-form solution is:

\[\tilde{P}_{\mathcal{M},t}^{\it UB}(s_{t+1} \mid s, a) =
\dfrac{\min(P_\mathcal{M}(s_{t+1} \mid s_t, a_t), P_\mathcal{M}(s_{t+1} \mid s, a))}{P_\mathcal{M}(s_{t+1} \mid s_t, a_t)}\]

Therefore, we need to maximise this over all MDPs $\mathcal{M} \in \mathcal{M}_\updownarrow$. Clearly, for any MDP $\mathcal{M}$ where $P_\mathcal{M}(s_{t+1} \mid s, a) < P_\mathcal{M}(s_{t+1} \mid s_t, a_t)$, $\dfrac{P_\mathcal{M}(s_{t+1} \mid s, a)}{P_\mathcal{M}(s_{t+1} \mid s_t, a_t)} < 1$, so the maximum possible value of $\tilde{P}_{\mathcal{M}, t}^{\it UB}(s' \mid s, a)$ across all MDPs is $1$. Consider now whether $P^{UB}_{\mathcal{M}_\updownarrow}(s_{t+1} \mid s, a) < P^{LB}_{\mathcal{M}_\updownarrow}(s_{t+1} \mid s_t, a_t)$:

\begin{enumerate}
    \item If $P^{UB}_{\mathcal{M}_\updownarrow}(s_{t+1} \mid s, a) \geq P^{LB}_{\mathcal{M}_\updownarrow}(s_{t+1} \mid s_t, a_t)$, then, because of Assumption \ref{ass: normalisation}, we know there exists an MDP $\mathcal{M} \in \mathcal{M}_\updownarrow$ where $P_\mathcal{M}(s_{t+1} \mid s, a) = P^{UB}_{\mathcal{M}_\updownarrow}(s_{t+1} \mid s, a)$ and $P_\mathcal{M}(s_{t+1} \mid s_t, a_t) = P^{LB}_{\mathcal{M}_\updownarrow}(s_{t+1} \mid s_t, a_t)$. Also, note that because of Assumption \ref{ass: observed transition}, $P^{\it LB}_{\mathcal{M}_\updownarrow}(s_{t+1} \mid s_t, a_t) > 0$. In this MDP $\mathcal{M}$, we know $P_\mathcal{M}(s_{t+1} \mid s, a) \geq P_\mathcal{M}(s_{t+1} \mid s_t, a_t)$, therefore, the upper bound counterfactual probability $\tilde{P}_{\mathcal{M}, t}(s' \mid s, a) = \dfrac{P_\mathcal{M}(s_{t+1} \mid s_t, a_t)}{P_\mathcal{M}(s_{t+1} \mid s_t, a_t)} = 1$, and since this is the maximum possible counterfactual probability, $\tilde{P}_{\mathcal{M}_\updownarrow,t}^{\it UB}(s_{t+1} \mid s, a) = 1$.

    \item Otherwise, if $P^{UB}_{\mathcal{M}_\updownarrow}(s_{t+1} \mid s, a) < P^{LB}_{\mathcal{M}_\updownarrow}(s_{t+1} \mid s_t, a_t)$, then, for every MDP $\mathcal{M} \in \mathcal{M}_\updownarrow$, $P_{\mathcal{M}}(s_{t+1} \mid s, a) < P_{\mathcal{M}}(s_{t+1} \mid s_t, a_t)$. Therefore, for every MDP, the upper bound counterfactual probability for the transition will be $\tilde{P}_{\mathcal{M}, t}^{\it UB}(s' \mid s, a) = 
    \dfrac{P_\mathcal{M}(s_{t+1} \mid s, a)}{P_\mathcal{M}(s_{t+1} \mid s_t, a_t)}$. The maximum upper bound counterfactual probability, therefore, will be:

    \begin{align*}
    \tilde{P}^{UB}_{\mathcal{M}_\updownarrow}(s_{t+1} \mid s, a) &= {\max}_{\mathcal{M} \in \mathcal{M_\updownarrow}} \left( \tilde{P}^{UB}_{\mathcal{M}, t}(s_{t+1} \mid s, a) \right) \\
    &= {\max}_{\mathcal{M} \in \mathcal{M_\updownarrow}} \left( \dfrac{P_{\mathcal{M}}(s_{t+1} \mid s, a)}{P_{\mathcal{M}}(s_{t+1} \mid s_t, a_t)}\right) \\
    &= \dfrac{P^{\it UB}_{\mathcal{M}_\updownarrow}(s_{t+1} \mid s, a)}{P^{\it LB}_{\mathcal{M}_\updownarrow}(s_{t+1} \mid s_t, a_t)}
    \end{align*}

    Because of Assumption \ref{ass: normalisation}, there must exist an MDP $\mathcal{M}$ where $P_\mathcal{M}(s_{t+1} \mid s_t, a_t) = P^{\it LB}_{\mathcal{M}_\updownarrow}(s_{t+1} \mid s_t, a_t)$ and $P_\mathcal{M}(s' \mid s, a) = P^{\it UB}_{\mathcal{M}_\updownarrow}(s' \mid s, a)$. Also, note that because of Assumption \ref{ass: observed transition}, $P^{\it LB}_{\mathcal{M}_\updownarrow}(s_{t+1} \mid s_t, a_t) > 0$.
    
    Therefore, the maximum upper bound counterfactual probability in this case is 
    \[
    \tilde{P}^{UB}_{\mathcal{M}_\updownarrow}(s' \mid s, a) = \dfrac{P^{\it UB}_{\mathcal{M}_\updownarrow}(s' \mid s, a)}{P^{\it LB}_{\mathcal{M}_\updownarrow}(s_{t+1} \mid s_t, a_t)}
    \]
\end{enumerate}

We can combine these cases as follows:
\[
\tilde{P}_{\mathcal{M}_\updownarrow, t}^{\it UB}(s_{t+1} \mid s, a) = 
\min(1, \dfrac{P^{\it UB}_{\mathcal{M}_\updownarrow}(s_{t+1} \mid s, a))}{P^{\it LB}_{\mathcal{M}_\updownarrow}(s_{t+1} \mid s_t, a_t)}
\]

\paragraph{Case 2: $P^{\it LB}_{\mathcal{M}_\updownarrow}(s' \mid s_t, a_t) > 0$ and $\dfrac{P^{\it LB}_{\mathcal{M}_\updownarrow}(s_{t+1} \mid s, a)}{P^{\it UB}_{\mathcal{M}_\updownarrow}(s_{t+1} \mid s_t, a_t)} \geq \dfrac{P^{\it UB}_{\mathcal{M}_\updownarrow}(s' \mid s, a)}{P^{\it LB}_{\mathcal{M}_\updownarrow}(s' \mid s_t, a_t)}$}

This is a valid condition, because, by assumption, $P^{LB}(s' \mid s_t, a_t) > 0$, and $P^{LB}(s_{t+1} \mid s_t, a_t) > 0$ since this was the observed transition (Assumption \ref{ass: observed transition}).

If this condition holds, then, for every $\mathcal{M} \in \mathcal{M}_{\updownarrow}$, $\dfrac{P_{\mathcal{M}}(s_{t+1} \mid s, a)}{P_{\mathcal{M}}(s_{t+1} \mid s_t, a_t)}\geq\dfrac{P_{\mathcal{M}}(s' \mid s, a)}{P_{\mathcal{M}}(s' \mid s_t, a_t)}$, and so, for every $\mathcal{M} \in \mathcal{M}_{\updownarrow}$ the upper bound counterfactual probability must be $0$, to satisfy the counterfactual stability constraint. Therefore, we know the maximum upper bound counterfactual probability $\tilde{P}^{UB}_{\updownarrow}(s' \mid s, a)$ will be:

\begin{align*}
\tilde{P}^{UB}_{\mathcal{M}_\updownarrow}(s' \mid s, a) &= {\max}_{\mathcal{M} \in \mathcal{M_\updownarrow}} \left( \tilde{P}^{UB}_{\mathcal{M}}(s' \mid s, a) \right) \\
&= {\max}_{\mathcal{M} \in \mathcal{M_\updownarrow}} \left( 0 \right) \\
&= 0
\end{align*}

\paragraph{Case 3: $P^{LB}_{\mathcal{M}_\updownarrow}(s' \mid s_t, a_t) > 0$}
If $P^{LB}(s' \mid s_t, a_t) > 0$, then, for all MDPs $\mathcal{M} \in \mathcal{M}_{\updownarrow}$, the counterfactual probability of the transition $\tilde{P}_{\mathcal{M},t}(s' \mid s, a) \leq P_\mathcal{M}(s' \mid s, a)$, due to the counterfactual monotonicity constraints. 

For every MDP $\mathcal{M} \in \mathcal{M}_{\updownarrow}$, the counterfactual probability $\tilde{P}_{\mathcal{M},t}^{\it UB}(s' \mid s, a) =
\min(P_{\mathcal{M}}(s' \mid s, a), 1 - P_{\mathcal{M}}(s_{t+1} \mid s, a))$, as long as the counterfactual stability constraint does not constrain the counterfactual probability to $0$. This leads to the following optimisation problem: 

\begin{align*}
\tilde{P}^{UB}_{\mathcal{M}_\updownarrow, t}(s' \mid s, a) &= {\max}_{\mathcal{M} \in \mathcal{M_\updownarrow}} \left( \tilde{P}^{UB}_{\mathcal{M},t}(s' \mid s, a) \right) \\
&= {\max}_{\mathcal{M} \in \mathcal{M_\updownarrow}} \left( \min(P_{\mathcal{M}}(s' \mid s, a), 1 - P_{\mathcal{M}}(s_{t+1} \mid s, a)) \right) \\
\end{align*}

Because of Assumption \ref{ass: normalisation}, there must exist an MDP $\mathcal{M} \in \mathcal{M}_\updownarrow$ with 
\begin{itemize}
    \item $P_{\mathcal{M}}(s' \mid s, a) = P^{\it UB}_{\mathcal{M}_\updownarrow}(s' \mid s, a)$
    \item $P_{\mathcal{M}}(s_{t+1} \mid s, a) = P^{\it LB}_{\mathcal{M}_\updownarrow}(s_{t+1} \mid s, a)$
    \item $P_{\mathcal{M}}(s' \mid s_t, a_t) = P^{\it LB}_{\mathcal{M}_\updownarrow}(s' \mid s_t, a_t)$
    \item $P_{\mathcal{M}}(s_{t+1} \mid s_t, a_t) = P^{\it UB}_{\mathcal{M}_\updownarrow}(s_{t+1} \mid s_t, a_t)$
\end{itemize}

This MDP results in the maximum possible values of $P_{\mathcal{M}}(s' \mid s, a)$ and $1 - P_{\mathcal{M}}(s_{t+1} \mid s, a))$, therefore, this also produces the maximum possible value of $\min(P_{\mathcal{M}}(s' \mid s, a), 1 - P_{\mathcal{M}}(s_{t+1} \mid s, a))$.

In this case, we know $\dfrac{P^{\it LB}_{\mathcal{M}_\updownarrow}(s_{t+1} \mid s, a)}{P^{\it UB}_{\mathcal{M}_\updownarrow}(s_{t+1} \mid s_t, a_t)} < \dfrac{P^{\it UB}_{\mathcal{M}_\updownarrow}(s' \mid s, a)}{P^{\it LB}_{\mathcal{M}_\updownarrow}(s' \mid s_t, a_t)}$ (otherwise, we would be in Case 2), therefore for this MDP $\mathcal{M}$, the counterfactual stability constraint will not constrain the counterfactual transition probability $\tilde{P}_{\mathcal{M},t}(s' \mid s, a)$. Therefore, the maximum counterfactual probability across all MDPs $\mathcal{M} \in \mathcal{M}_\updownarrow$ will be:

\[
\tilde{P}^{UB}_{\mathcal{M}_\updownarrow, t}(s' \mid s, a) = \min(P^{\it UB}_{\mathcal{M}_\updownarrow}(s' \mid s, a), 1 - P^{\it LB}_{\mathcal{M}_\updownarrow}(s_{t+1} \mid s, a))
\]

\paragraph{Case 4: $P^{UB}_{\mathcal{M}_\updownarrow}(s' \mid s_t, a_t) = 0$} If $P^{UB}_{\mathcal{M}_\updownarrow}(s' \mid s_t, a_t) = 0$, then, for all MDPs $\mathcal{M} \in \mathcal{M}_\updownarrow$, $P^{UB}_{\mathcal{M}}(s' \mid s_t, a_t) = 0$. Therefore, the upper bound counterfactual probability of the transition for all MDPs $\mathcal{M} \in \mathcal{M}_\updownarrow$ is as follows:

\[\tilde{P}_{\mathcal{M},t}^{\it UB}(s' \mid s, a) =
\min\left(1 - P_\mathcal{M}(s_{t+1} \mid s, a), \dfrac{P_\mathcal{M}(s' \mid s, a)}{P_\mathcal{M}(s_{t+1} \mid s_t, a_t)}\right)\]

This results in the following optimisation problem:

\begin{align*}
\tilde{P}^{UB}_{\mathcal{M}_\updownarrow, t}(s' \mid s, a) &= {\max}_{\mathcal{M} \in \mathcal{M_\updownarrow}} \left( \tilde{P}^{UB}_{\mathcal{M},t}(s' \mid s, a) \right) \\
&= {\max}_{\mathcal{M} \in \mathcal{M_\updownarrow}} \left( \min\left(1 - P_{\mathcal{M}}(s_{t+1} \mid s, a), \dfrac{P_{\mathcal{M}}(s' \mid s, a)}{P_{\mathcal{M}}(s_{t+1} \mid s_t, a_t)}\right) \right) \\
\end{align*}

Because of Assumption \ref{ass: normalisation}, there must exist an MDP $\mathcal{M} \in \mathcal{M}_\updownarrow$ where:
\begin{itemize}
    \item $P_{\mathcal{M}}(s_{t+1} \mid s, a) = P^{LB}_{\mathcal{M}_\updownarrow}(s_{t+1} \mid s, a)$
    \item $P_{\mathcal{M}}(s' \mid s, a) = P^{UB}_{\mathcal{M}_\updownarrow}(s' \mid s, a)$
    \item $P_{\mathcal{M}}(s_{t+1} \mid s_t, a_t) = P^{LB}_{\mathcal{M}_\updownarrow}(s_{t+1} \mid s_t, a_t)$
    
\end{itemize}

Also, note that because of Assumption \ref{ass: observed transition}, $P^{\it LB}_{\mathcal{M}_\updownarrow}(s_{t+1} \mid s_t, a_t) > 0$.

This MDP results in the maximum possible values of $1 - P_{\mathcal{M}}(s_{t+1} \mid s, a)$ and $\dfrac{P_{\mathcal{M}}(s' \mid s, a)}{P_{\mathcal{M}}(s_{t+1} \mid s_t, a_t)}$, and therefore produces the maximum possible value of $\min\left(1 - P_{\mathcal{M}}(s_{t+1} \mid s, a), \dfrac{P_{\mathcal{M}}(s' \mid s, a)}{P_{\mathcal{M}}(s_{t+1} \mid s_t, a_t)}\right)$.

Therefore, the maximum counterfactual probability across all MDPs $\mathcal{M} \in \mathcal{M}_\updownarrow$ will be:

\[\tilde{P}_{\mathcal{M}_\updownarrow, t}^{\it UB}(s' \mid s, a) =
\min\left(1 - P^{LB}_{\mathcal{M}_\updownarrow}(s_{t+1} \mid s, a), \dfrac{P^{UB}_{\mathcal{M}_\updownarrow}(s' \mid s, a)}{P^{LB}_{\mathcal{M}_\updownarrow}(s_{t+1} \mid s_t, a_t)}\right)\]

\paragraph{Case 5: $P^{LB}_{\mathcal{M}_\updownarrow}(s' \mid s_t, a_t) = 0$ and $P^{UB}_{\mathcal{M}_\updownarrow}(s' \mid s_t, a_t) > 0$}

For every MDP $\mathcal{M} \in \mathcal{M}_\updownarrow$, the counterfactual probability of $\tilde{P}_{\mathcal{M}, t}(s' \mid s, a)$ depends on the value of $P_\mathcal{M}(s' \mid s_t, a_t)$:
\begin{itemize}
    \item If $P_\mathcal{M}(s' \mid s_t, a_t) >0$, then the counterfactual probability of the transition will either be $\tilde{P}_{\mathcal{M},t}(s' \mid s, a) = \min(P_{\mathcal{M}}(s' \mid s, a), 1 - P_\mathcal{M}(s_{t+1} \mid s, a))$, because of the counterfactual monotonicity constraint, or $0$ if the counterfactual stability constraint holds.

    \item Otherwise, if $P_\mathcal{M}(s' \mid s_t, a_t) =0$, then the counterfactual probability of the transition $\tilde{P}_{\mathcal{M},t}(s' \mid s, a) = \min\left(1 - P_\mathcal{M}(s_{t+1} \mid s, a), \dfrac{P_\mathcal{M}(s' \mid s, a)}{P_\mathcal{M}(s_{t+1} \mid s_t, a_t)}\right)$.
\end{itemize}

Because $\forall \mathcal{M} \in \mathcal{M}_\updownarrow, 0 < P_{\mathcal{M}}(s_{t+1} \mid s_t, a_t) \leq 1$ we have, for all MDPs $\mathcal{M} \in \mathcal{M}_\updownarrow$: 

\[\min\left(1 - P_\mathcal{M}(s_{t+1} \mid s, a), \dfrac{P_\mathcal{M}(s' \mid s, a)}{P_\mathcal{M}(s_{t+1} \mid s_t, a_t)}\right) \geq \min(P_\mathcal{M}(s' \mid s, a), 1 - P_\mathcal{M}(s_{t+1} \mid s, a))\]

and, trivially,

\[\min\left(1 - P_\mathcal{M}(s_{t+1} \mid s, a), \dfrac{P_\mathcal{M}(s' \mid s, a)}{P_\mathcal{M}(s_{t+1} \mid s_t, a_t)}\right) \geq 0\]

Therefore, the counterfactual transition probability for MDPs where $P_\mathcal{M}(s' \mid s_t, a_t) =0$ will always be greater than or equal to the counterfactual transition probability when $P_\mathcal{M}(s' \mid s_t, a_t) > 0$. Because of Assumption \ref{ass: normalisation}, and since $P^{LB}_{\mathcal{M}_\updownarrow}(s' \mid s_t, a_t) = 0$, there must exist an MDP $\mathcal{M} \in \mathcal{M}_\updownarrow$ where:
    \begin{itemize}
        \item $P_{\mathcal{M}}(s' \mid s_t, a_t) = P^{LB}_{\mathcal{M}_\updownarrow}(s' \mid s_t, a_t) = 0$
        \item $P_\mathcal{M}(s_{t+1} \mid s_t, a_t) = \max\left(1 - \sum_{s'' \in \mathcal{S} \setminus \{s', s_{t+1}\}}P^{UB}_{\mathcal{M}_\updownarrow}(s'' \mid s_t, a_t), P^{LB}(s_{t+1} \mid s_t, a_t)\right)$
        \item $P_{\mathcal{M}}(s' \mid s, a) = P^{\it UB}_{\mathcal{M}_\updownarrow}(s' \mid s, a)$
        \item $P_{\mathcal{M}}(s_{t+1} \mid s, a) = P^{LB}_{\mathcal{M}_\updownarrow}(s_{t+1} \mid s, a)$
    \end{itemize}

    This assignment leads to the maximum possible value of $\min\left(1 - P_{\mathcal{M}}(s_{t+1} \mid s, a), \dfrac{P_{\mathcal{M}}(s' \mid s, a)}{P_\mathcal{M}(s_{t+1} \mid s_t, a_t)}\right)$. Therefore, the maximum upper bound counterfactual probability is:

    \[
        \tilde{P}_{\mathcal{M}, t}^{\it UB}(s_{t+1} \mid s, a) = \min\left(1 - P^{LB}_{\mathcal{M}_\updownarrow}(s_{t+1} \mid s, a), \dfrac{P^{UB}_{\mathcal{M}_\updownarrow}(s' \mid s, a)}{\max\left(\substack{1 - \sum_{s'' \in \mathcal{S} \setminus \{s', s_{t+1}\}}P^{UB}_{\mathcal{M}_\updownarrow}(s'' \mid s_t, a_t), \\P^{LB}(s_{t+1} \mid s_t, a_t)}\right)}\right)
    \]

\end{proof}

\subsubsection{Lower Bound}
Because the lower bound closed-form solutions for each MDP $\mathcal{M}$ contain $\sum_{s' \in \mathcal{S}\setminus\{s'\}}{\tilde{P}_{\mathcal{M}, t}^{\it UB}(s' \mid s, a)}$, this makes it challenging to identify the minimum lower bound counterfactual probability across all MDPs $\mathcal{M}\in\mathcal{M}_\updownarrow$. Because of this, we provide two solutions: a closed-form solution for a non-tight lower bound, and a reduced optimisation problem that solves for the exact lower bound as efficiently as possible.

\paragraph{Solution 1: Non-tight Lower Bound}
\begin{theorem}
\label{theorem: lower overlapping imdp}
Assume $(s, a) \neq (s_t, a_t)$, and assume that the support of $(s, a)$ overlaps with the support of $(s_t, a_t)$ in the IMDP $\mathcal{M}_\updownarrow$. In this case, the optimisation problem in \eqref{eq: optimisation lb} will produce a lower bound that is guaranteed to be no smaller than:
\[\tilde{P}_{\mathcal{M}_\updownarrow, t}^{\it LB}(s' \mid s, a) = 
\begin{cases}
\max\left(P^{\it LB}_{\mathcal{M}_\updownarrow}(s' \mid s, a), 1 - \sum_{s' \in \mathcal{S}\setminus\{s_{t+1}\}}{\tilde{P}_{\mathcal{M}_\updownarrow, t}^{\it UB}(s' \mid s, a)}\right) \text{ if $s' = s_{t+1}$}\\
0 \text{ if $P^{\it UB}_{\mathcal{M}_\updownarrow}(s' \mid s_t, a_t) > 0$ and }\text{$\dfrac{P^{\it UB}_{\mathcal{M}_\updownarrow}(s_{t+1} \mid s, a)}{P^{\it LB}_{\mathcal{M}_\updownarrow}(s_{t+1} \mid s_t, a_t)} \geq \dfrac{P^{\it LB}_{\mathcal{M}_\updownarrow}(s' \mid s, a)}{P^{\it UB}_{\mathcal{M}_\updownarrow}(s' \mid s_t, a_t)}$}\\
\max\left(0, 1 - \sum_{s' \in \mathcal{S}\setminus\{s'\}}{\tilde{P}_{\mathcal{M}_\updownarrow, t}^{\it UB}(s' \mid s, a)}\right) \text{ otherwise}\\
\end{cases}
\]
\end{theorem}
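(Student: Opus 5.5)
The claim is that the stated expression is a valid (sound) lower bound on $\min_{\mathcal{M}\in\mathcal{M}_\updownarrow}\tilde{P}^{\it LB}_{\mathcal{M},t}(s'\mid s,a)$. I will show that, for every admissible $\mathcal{M}\in\mathcal{M}_\updownarrow$, the per-MDP closed-form lower bound of Theorem~\ref{theorem:ub overlapping} is at least the stated IMDP-level expression. Taking the minimum over $\mathcal{M}$ then gives the result. Throughout I fix an arbitrary $\mathcal{M}\in\mathcal{M}_\updownarrow$ and use Assumption~\ref{ass: observed transition}, which guarantees $P_\mathcal{M}(s_{t+1}\mid s_t,a_t)\geq p^{\it LB}_{s_t,a_t}(s_{t+1})>0$, so all ratios are well defined. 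Since every per-MDP lower bound is nonnegative, the middle case (value $0$) is trivially dominated. The work therefore reduces to the first and third cases, together with checking that the case split at the IMDP level is compatible with the case split for $\mathcal{M}$.

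The key monotonicity step is as follows. By the definition of $\tilde{P}^{\it UB}_{\mathcal{M}_\updownarrow,t}$ in \eqref{eq: optimisation ub}, which Theorem~\ref{theorem: upper overlapping imdp} evaluates in closed form, we have $\tilde{P}^{\it UB}_{\mathcal{M},t}(s''\mid s,a)\leq \tilde{P}^{\it UB}_{\mathcal{M}_\updownarrow,t}(s''\mid s,a)$ for every $s''$. Summing over $s''\neq s'$ gives
\[
1-\textstyle\sum_{s''\neq s'}\tilde{P}^{\it UB}_{\mathcal{M},t}(s''\mid s,a)\;\geq\; 1-\mathcal{P}_{\it sum}.
\]
In addition, $P_\mathcal{M}(s'\mid s,a)\geq p^{\it LB}_{s,a}(s')$. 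Since $\max$ is monotone in each argument, the following holds in the case $s'=s_{t+1}$. The per-MDP bound $\max(P_\mathcal{M}(s_{t+1}\mid s,a),\,1-\sum_{s''\neq s_{t+1}}\tilde{P}^{\it UB}_{\mathcal{M},t})$ dominates $\max(p^{\it LB}_{s,a}(s_{t+1}),\,1-\mathcal{P}_{\it sum})$. Note that this case is identified identically for $\mathcal{M}$ and for $\mathcal{M}_\updownarrow$.

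For $s'\neq s_{t+1}$, the difficulty is the case alignment: per MDP, the lower bound is $0$ when the CS conditions hold for $\mathcal{M}$ and equals $\max(0,1-\sum\ldots)$ otherwise. Suppose the IMDP-level condition ``CS sometimes holds'' fails, so we are in the third case. The per-MDP value is either $0$ or $\max(0,1-\sum_{s''\neq s'}\tilde{P}^{\it UB}_{\mathcal{M},t})$. I must show that whenever $\mathcal{M}$ itself satisfies CS, the target $\max(0,1-\mathcal{P}_{\it sum})$ is also $0$, since otherwise the claimed inequality could fail. So the IMDP-level ``sometimes holds'' condition should be exactly the statement that some admissible $\mathcal{M}$ satisfies CS.

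Concretely, suppose $\mathcal{M}$ satisfies $P_\mathcal{M}(s'\mid s_t,a_t)>0$ and $\frac{P_\mathcal{M}(s_{t+1}\mid s,a)}{P_\mathcal{M}(s_{t+1}\mid s_t,a_t)}\geq\frac{P_\mathcal{M}(s'\mid s,a)}{P_\mathcal{M}(s'\mid s_t,a_t)}$. Bounding each of the four quantities by its interval endpoint gives $p^{\it UB}_{s_t,a_t}(s')>0$ and $\frac{p^{\it UB}_{s,a}(s_{t+1})}{p^{\it LB}_{s_t,a_t}(s_{t+1})}\geq\frac{p^{\it LB}_{s,a}(s')}{p^{\it UB}_{s_t,a_t}(s')}$. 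That is precisely the second-case condition, a contradiction. Hence in the third case no admissible $\mathcal{M}$ is in its CS branch, and the monotonicity argument above applies directly. The first case needs no such check.

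I expect this alignment to be the main obstacle, along with a subtlety in the third case. Even outside CS, the sum in $\mathcal{P}_{\it sum}$ includes terms $\tilde{P}^{\it UB}_{\mathcal{M}_\updownarrow,t}(s''\mid s,a)$ that may be attained by different MDPs. That mismatch only makes $\mathcal{P}_{\it sum}$ larger, so it preserves soundness but explains the lack of tightness noted in Proposition~\ref{prop:tightness}.
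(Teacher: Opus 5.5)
Your proof is correct and follows the paper's argument. Like the paper, you apply the per-MDP closed form from Theorem~\ref{theorem:ub overlapping}, dominate $\sum_{s''\neq s'}\tilde{P}^{\it UB}_{\mathcal{M},t}(s''\mid s,a)$ by the IMDP-level sum, and in the third case use the fact that no admissible $\mathcal{M}$ satisfies counterfactual stability, which you justify explicitly by bounding each quantity by its interval endpoint where the paper only asserts it. The remaining differences are minor: your termwise monotonicity of $\max$ in the $s'=s_{t+1}$ case avoids the paper's claim (not needed for soundness) that both minima occur in a single MDP, and in the middle case you use nonnegativity where the paper constructs an extremal MDP attaining $0$, which additionally shows that case is tight.
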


Consider the following disjoint cases.

\paragraph{Case 1: $s' = s_{t+1}$}
Because of Assumption \ref{ass: observed transition}, for any MDP $\mathcal{M} \in \mathcal{M}_\updownarrow$, $P_\mathcal{M}(s_{t+1} \mid s_t, a_t) > 0$ since this was the observed transition. Therefore, for any MDP $\mathcal{M} \in \mathcal{M}_\updownarrow$, the monotonicity assumption ensures that $\tilde{P}_{\mathcal{M},t}(s_{t+1} \mid s, a) \geq P_\mathcal{M}(s_{t+1} \mid s, a)$, and the closed-form solution is:

\[
\tilde{P}_{\mathcal{M},t}^{\it LB}(s_{t+1} \mid s, a) = 
\max\left(P_\mathcal{M}(s_{t+1} \mid s, a), 1 - \sum_{s'' \in \mathcal{S}\setminus\{s_{t+1}\}}{\tilde{P}_{\mathcal{M},t}^{\it UB}(s'' \mid s, a)}\right)
\]

We need to minimise this condition over all MDPs $\mathcal{M} \in \mathcal{M}_\updownarrow$. We can immediately see that the minimum values of each part of the closed-form solution, i.e., $P_\mathcal{M}(s_{t+1} \mid s, a)$ and \\$1 - \sum_{s'' \in \mathcal{S}\setminus\{s_{t+1}\}}{\tilde{P}_{\mathcal{M},t}^{\it UB}(s'' \mid s, a)}$ will occur in the same MDP $\mathcal{M}$. This is because, to minimise $1 - \sum_{s'' \in \mathcal{S}\setminus\{s_{t+1}\}}{\tilde{P}_{\mathcal{M},t}^{\it UB}(s'' \mid s, a)}$, we need to identify an MDP $\mathcal{M}$ which maximises the value of $\sum_{s'' \in \mathcal{S}\setminus\{s_{t+1}\}}{{P}_{\mathcal{M},t}(s'' \mid s, a)}$. Clearly, the maximum value of $\sum_{s'' \in \mathcal{S}\setminus\{s_{t+1}\}}{\tilde{P}_{\mathcal{M},t}(s'' \mid s, a)}$ is $1-P^{\it LB}_{\mathcal{M}_\updownarrow}(s_{t+1} \mid s, a)$, and the minimum value of $P_\mathcal{M}(s_{t+1} \mid s, a)$ across all MDPs $\mathcal{M}\in\mathcal{M}_\updownarrow$ is $P^{\it LB}_{\mathcal{M}_\updownarrow}(s_{t+1} \mid s, a)$. Therefore, we can consider each part of this closed-form solution separately.

Now, consider the minimum value of $1 - \sum_{s'' \in \mathcal{S}\setminus\{s_{t+1}\}}{\tilde{P}_{\mathcal{M},t}^{\it UB}(s'' \mid s, a)}$ across all MDPs $\mathcal{M}\in\mathcal{M}_\updownarrow$. In any MDP $\mathcal{M} \in \mathcal{M}_\updownarrow$, we require the transition probabilities from every state-action pair to sum to $1$. Because of this, it is challenging to determine the MDP with the greatest value of $\sum_{s'' \in \mathcal{S}\setminus\{s_{t+1}\}}{\tilde{P}_{\mathcal{M},t}^{\it UB}(s'' \mid s, a)}$. However, to identify a non-tight lower bound, we can ignore this constraint, and instead find the value of $\sum_{s'' \in \mathcal{S}\setminus\{s_{t+1}\}}{\tilde{P}_{\mathcal{M}_\updownarrow,t}^{\it UB}(s'' \mid s, a)}$ across the whole IMDP $\mathcal{M}_\updownarrow$. Clearly, $\sum_{s'' \in \mathcal{S}\setminus\{s_{t+1}\}}{\tilde{P}_{\mathcal{M}_\updownarrow,t}^{\it UB}(s'' \mid s, a)} \geq \sum_{s'' \in \mathcal{S}\setminus\{s_{t+1}\}}{\tilde{P}_{\mathcal{M}, t}^{\it UB}(s'' \mid s, a)}$ for any MDP $\mathcal{M} \in \mathcal{M}_\updownarrow$, so $1-\sum_{s'' \in \mathcal{S}\setminus\{s_{t+1}\}}{\tilde{P}_{\mathcal{M}_\updownarrow,t}^{\it UB}(s'' \mid s, a)}$ results in a lower bound.

Therefore, a minimum (but not necessarily tight) lower bound would be:
\[
\max\left(P^{\it LB}_{\mathcal{M}_\updownarrow}(s' \mid s, a), 1 - \sum_{s' \in \mathcal{S}\setminus\{s_{t+1}\}}{\tilde{P}_{\mathcal{M}_\updownarrow, t}^{\it UB}(s' \mid s, a)}\right)
\]

\paragraph{Case 2: $P^{\it UB}_{\mathcal{M}_\updownarrow}(s' \mid s_t, a_t) > 0$ and $\dfrac{P^{\it UB}_{\mathcal{M}_\updownarrow}(s_{t+1} \mid s, a)}{P^{\it LB}_{\mathcal{M}_\updownarrow}(s_{t+1} \mid s_t, a_t)} \geq \dfrac{P^{\it LB}_{\mathcal{M}_\updownarrow}(s' \mid s, a)}{P^{\it UB}_{\mathcal{M}_\updownarrow}(s' \mid s_t, a_t)}$}

This is a valid condition, because, by assumption, $P^{UB}_{\mathcal{M}_\updownarrow}(s' \mid s_t, a_t) > 0$, and $P^{UB}_{\mathcal{M}_\updownarrow}(s_{t+1} \mid s_t, a_t) \geq P^{LB}_{\mathcal{M}_\updownarrow}(s_{t+1} \mid s_t, a_t) > 0$ since this was the observed transition (Assumption \ref{ass: observed transition}).

Because of Assumption \ref{ass: normalisation}, there must exist an MDP $\mathcal{M} \in \mathcal{M}_\updownarrow$ with:

\begin{itemize}
    \item $P_\mathcal{M}(s' \mid s, a) = P^{\it LB}_{\mathcal{M}_\updownarrow}(s' \mid s, a)$
    \item $P_\mathcal{M}(s_{t+1} \mid s, a) = P^{\it UB}_{\mathcal{M}_\updownarrow}(s_{t+1} \mid s, a)$
    \item $P_\mathcal{M}(s' \mid s_t, a_t) = P^{\it UB}_{\mathcal{M}_\updownarrow}(s' \mid s_t, a_t)$
    \item $P_\mathcal{M}(s_{t+1} \mid s_t, a_t) = P^{\it LB}_{\mathcal{M}_\updownarrow}(s_{t+1} \mid s_t, a_t)$
\end{itemize}

Therefore, for this MDP $\mathcal{M}$, $\dfrac{P_{\mathcal{M}}(s_{t+1} \mid s, a)}{P_{\mathcal{M}}(s_{t+1} \mid s_t, a_t)}\geq\dfrac{P_{\mathcal{M}}(s' \mid s, a)}{P_{\mathcal{M}}(s' \mid s_t, a_t)}$, and so the lower bound counterfactual probability for $\tilde{P}^{\it LB}_{\mathcal{M},t} = 0$, to satisfy the counterfactual stability constraint. Since the counterfactual probability of any transition cannot be smaller than $0$, we know the minimum lower bound counterfactual probability across all MDPs $\mathcal{M} \in \mathcal{M}_\updownarrow$, $\tilde{P}^{LB}_{\mathcal{M}_\updownarrow,t}(s' \mid s, a)$ will be:

\begin{align*}
\tilde{P}^{LB}_{\mathcal{M}_\updownarrow,t}(s' \mid s, a) &= {\min}_{\mathcal{M} \in \mathcal{M_\updownarrow}} \left( \tilde{P}^{LB}_{\mathcal{M},t}(s' \mid s, a) \right) \\
&= {\min}_{\mathcal{M} \in \mathcal{M_\updownarrow}} \left( 0 \right) \\
&= 0
\end{align*}

\paragraph{Case 3: $P^{\it UB}_{\mathcal{M}_\updownarrow}(s' \mid s_t, a_t) > 0$ and $\dfrac{P^{\it UB}_{\mathcal{M}_\updownarrow}(s_{t+1} \mid s, a)}{P^{\it LB}_{\mathcal{M}_\updownarrow}(s_{t+1} \mid s_t, a_t)} < \dfrac{P^{\it LB}_{\mathcal{M}_\updownarrow}(s' \mid s, a)}{P^{\it UB}_{\mathcal{M}_\updownarrow}(s' \mid s_t, a_t)}$, or $P^{\it UB}_{\mathcal{M}_\updownarrow}(s' \mid s_t, a_t) = 0$}
In this case, because either $P^{\it UB}_{\mathcal{M}_\updownarrow}(s' \mid s_t, a_t) > 0$ and $\dfrac{P^{\it UB}_{\mathcal{M}_\updownarrow}(s_{t+1} \mid s, a)}{P^{\it LB}_{\mathcal{M}_\updownarrow}(s_{t+1} \mid s_t, a_t)} < \dfrac{P^{\it LB}_{\mathcal{M}_\updownarrow}(s' \mid s, a)}{P^{\it UB}_{\mathcal{M}_\updownarrow}(s' \mid s_t, a_t)}$, or $P^{\it UB}_{\mathcal{M}_\updownarrow}(s' \mid s_t, a_t) = 0$, then, for every MDP $\mathcal{M} \in \mathcal{M}_\updownarrow$, either $P_\mathcal{M}(s' \mid s_t, a_t) > 0$ and $\dfrac{P_{\mathcal{M}}(s_{t+1} \mid s, a)}{P_{\mathcal{M}}(s_{t+1} \mid s_t, a_t)}<\dfrac{P_{\mathcal{M}}(s' \mid s, a)}{P_{\mathcal{M}}(s' \mid s_t, a_t)}$, or $P_\mathcal{M}(s' \mid s_t, a_t) = 0$. Therefore, the closed-form solution for every MDP $\mathcal{M} \in \mathcal{M}_\updownarrow$ will be:
\[
\tilde{P}_{\mathcal{M},t}^{\it LB}(s' \mid s, a) = 
\max\left(0, 1 - \sum_{s'' \in \mathcal{S}\setminus\{s'\}}{\tilde{P}_{\mathcal{M},t}^{\it UB}(s'' \mid s, a)}\right)
\]

We need to minimise this condition over all MDPs $\mathcal{M} \in \mathcal{M}_\updownarrow$. Clearly, the counterfactual probability of any transition cannot be any smaller than $0$. Therefore, if there exists an MDP $\mathcal{M} \in \mathcal{M}_\updownarrow$ where $\tilde{P}^{\it LB}_{\mathcal{M},t}(s' \mid s, a) = 0$, then the absolute minimum value of $\tilde{P}^{\it LB}_{\mathcal{M}_\updownarrow,t}(s' \mid s, a) =0$ as well. The minimum lower bound counterfactual probability $\tilde{P}^{\it LB}_{\mathcal{M}_\updownarrow,t}(s' \mid s, a)$ will only be greater than $0$ if for every MDP $\mathcal{M} \in \mathcal{M}_\updownarrow$, $1 - \sum_{s'' \in \mathcal{S}\setminus\{s'\}}{\tilde{P}_{\mathcal{M},t}^{\it UB}(s'' \mid s, a)} > 0$. 

Now, consider the minimum value of $1-\sum_{s'' \in \mathcal{S}\setminus\{s_{t+1}\}}{\tilde{P}_{\mathcal{M},t}^{\it UB}(s'' \mid s, a)}$. In any MDP $\mathcal{M} \in \mathcal{M}_\updownarrow$, we require the transition probabilities from every state-action pair to sum to $1$. Because of this, it is challenging to determine the MDP with the greatest value of $\sum_{s'' \in \mathcal{S}\setminus\{s_{t+1}\}}{\tilde{P}_{\mathcal{M},t}^{\it UB}(s'' \mid s, a)}$. However, to identify a non-tight lower bound, we can ignore this constraint, and instead find the value of $\sum_{s'' \in \mathcal{S}\setminus\{s_{t+1}\}}{\tilde{P}_{\mathcal{M}_\updownarrow,t}^{\it UB}(s'' \mid s, a)}$ across the whole IMDP $\mathcal{M}_\updownarrow$. Clearly, $\sum_{s'' \in \mathcal{S}\setminus\{s_{t+1}\}}{\tilde{P}_{\mathcal{M}_\updownarrow,t}^{\it UB}(s'' \mid s, a)} \geq \sum_{s'' \in \mathcal{S}\setminus\{s_{t+1}\}}{\tilde{P}_{\mathcal{M}, t}^{\it UB}(s'' \mid s, a)}$ for any MDP $\mathcal{M} \in \mathcal{M}_\updownarrow$, so $1-\sum_{s'' \in \mathcal{S}\setminus\{s_{t+1}\}}{\tilde{P}_{\mathcal{M}_\updownarrow,t}^{\it UB}(s'' \mid s, a)}$ results in a lower bound.

Therefore, a minimum (but not necessarily tight) lower bound would be:
\[
\max\left(0, 1 - \sum_{s' \in \mathcal{S}\setminus\{s'\}}{\tilde{P}_{\mathcal{M}_\updownarrow, t}^{\it UB}(s' \mid s, a)}\right)
\]

\paragraph{Solution 2: Reducing the Optimisation Problem}
Alternatively, if it is computationally feasible, we can calculate the exact lower bound for the counterfactual probability of any transition by calculating the true minimum value of $1 - \sum_{s'' \in \mathcal{S}\setminus\{s'\}}{\tilde{P}_{\mathcal{M}_\updownarrow, t}^{\it UB}(s'' \mid s, a)}$, for Cases 1 and 3 in Solution 1. This can be solved by maximising the value of $\sum_{s'' \in \mathcal{S}\setminus\{s'\}}{\tilde{P}_{\mathcal{M}_\updownarrow, t}^{\it UB}(s'' \mid s, a)}$, in a nested optimisation problem, as follows:
\begin{align*}
    &\max_{\substack{P_{\mathcal{M}}(\cdot \mid s_t, a_t) \in P_{\mathcal{M}_\updownarrow}(\cdot \mid s_t, a_t) \\ P_{\mathcal{M}}(\cdot \mid s,a) \in P_{\mathcal{M}_\updownarrow}(\cdot \mid s,a)}} \sum_{s'' \in \mathcal{S}\setminus\{s'\}} {\tilde{P}_{t}^{\it UB}(s'' \mid s, a)} \\
    \text{s.t.} &\sum_{s' \in \mathcal{S}} P_{\mathcal{M}}(s' \mid s_t, a_t) = 1 \\
    &\sum_{s'\in \mathcal{S}} P_{\mathcal{M}}(s' \mid s,a) = 1 \\
    & P^{\it LB}_{\mathcal{M}_\updownarrow}(s' \mid s_t, a_t) \leq P_{\mathcal{M}}(s' \mid s_t, a_t) \leq P^{\it UB}_{\mathcal{M}_\updownarrow}(s' \mid s_t, a_t), \quad \forall s'\in \mathcal{S} \\
    & P^{\it LB}_{\mathcal{M}_\updownarrow}(s' \mid s,a) \leq P_{\mathcal{M}}(s' \mid s,a) \leq P^{\it UB}_{\mathcal{M}_\updownarrow}(s' \mid s,a), \quad \forall s'\in \mathcal{S} \\
\end{align*}

where
\[\tilde{P}_{t}^{\it UB}(s' \mid s, a) =
\begin{cases}
\dfrac{\min(P(s_{t+1} \mid s_t, a_t), P(s_{t+1} \mid s, a))}{P(s_{t+1} \mid s_t, a_t)} & \text{if $s' = s_{t+1}$}\\
0 & \text{if $P(s' \mid s_t, a_t) > 0$} \\ & \text{and $\dfrac{P(s_{t+1} \mid s, a)}{P(s_{t+1} \mid s_t, a_t)}\geq\dfrac{P(s' \mid s, a)}{P(s' \mid s_t, a_t)}$}\\
\min(P(s' \mid s, a), 1 - P(s_{t+1} \mid s, a)) & \text{if $P(s' \mid s_t, a_t) > 0$}\\
\min\left(1 - P(s_{t+1} \mid s, a), \dfrac{P(s' \mid s, a)}{P(s_{t+1} \mid s_t, a_t)}\right) & \text{otherwise}\\
\end{cases}
\]

This works, because we have proven in existing work that the counterfactual probability of any given transition depends only on the interventional distributions from the observed state-action pair $(s_t, a_t)$ and the counterfactual state-action pair $(s,a)$ (hence the closed-form solution for $\tilde{P}_{t}^{\it UB}(s' \mid s, a)$ depends only on these two distributions), so we don't need to consider any other state-action pair in our optimisation.

Given this nested optimisation, we can correctly calculate the lower bounds in Cases 1 and 3 as follows:

\[
\tilde{P}^{\it LB}_{\mathcal{M}_\updownarrow}(s_{t+1}\mid s, a) = \max\left(P^{\it LB}_{\mathcal{M}_\updownarrow}(s_{t+1} \mid s, a), 1-\max_{\substack{P_{\mathcal{M}}(\cdot \mid s_t, a_t) \in P_{\mathcal{M}_\updownarrow}(\cdot \mid s_t, a_t) \\ P_{\mathcal{M}}(\cdot \mid s,a) \in P_{\mathcal{M}_\updownarrow}(\cdot \mid s,a)}} \sum_{s'' \in \mathcal{S}\setminus\{s_{t+1}\}} {\tilde{P}_{t}^{\it UB}(s'' \mid s, a)}\right)
\]

\[
\tilde{P}^{\it LB}_{\mathcal{M}_\updownarrow}(s'\mid s, a) = \max\left(0, 1-\max_{\substack{P_{\mathcal{M}}(\cdot \mid s_t, a_t) \in P_{\mathcal{M}_\updownarrow}(\cdot \mid s_t, a_t) \\ P_{\mathcal{M}}(\cdot \mid s,a) \in P_{\mathcal{M}_\updownarrow}(\cdot \mid s,a)}} \sum_{s'' \in \mathcal{S}\setminus\{s'\}} {\tilde{P}_{t}^{\it UB}(s'' \mid s, a)}\right)
\]
\section{Gumbel-max SCMs}
\label{app: gumbel max}
The Gumbel-max SCM for an MDP is expressed as:
\begin{equation}\label{eq:gumbel-max-scm}
    S_{t+1} = f(S_{t},A_{t}, U_{t}=(G_{s,t})_{s\in \mathcal{S}}) 
    = \argmax_{s\in \mathcal{S}}\left\{\log\left(P_{\mathcal{M}}(s \mid S_t, A_t)\right) +G_{s,t}\right\}
\end{equation}
where $P_{\mathcal{M}}$ is the MDP's transition probabilities, $S_t$, $A_t$ and $S_{t+1}$ are endogenous variables representing the state, action and next state, and the values of the exogenous variable $U_{t}=(G_{s,t})_{s\in \mathcal{S}}$ are sampled from the standard Gumbel distribution\footnote{This formulation relies on the Gumbel-max trick \citep{maddison2014sampling} by which sampling from a categorical distribution with $n$ categories is equivalent to sampling $n$ values $g_1,\dots,g_n$ from the standard Gumbel distribution and evaluating $\argmax_{j}\left\{\log\left(P(Y=j)\right)+ g_j\right\}$ over all the categories, where $Y$ is the output.}. 

\begin{algorithm}[h]
\caption{Gumbel-max Topdown Sampling}
\label{alg:topdown}
\begin{algorithmic}
   \STATE {\bfseries Input:} $logits$, observed $s_{t+1}$, number of samples $nsamp=1$
   \STATE {\bfseries Output:} $gumbels$

   \STATE $topgumbel \gets$ sample $nsamp$ values from Gumbel(0,1)
   \STATE $gumbels \gets$ zero matrix of shape $(nsamp, |\mathcal{S}|)$

   \FOR{$i = 0$ {\bfseries to} $|\mathcal{S}| - 1$}
       \IF{$i = s_{t+1}$} 
           \STATE $gumbels[:,i] \gets topgumbel - logits[i]$ \COMMENT{Observed outcome}
       \ELSIF{$logits[i] \neq -\infty$}
           \STATE $g \gets$ sample $nsamp$ values from Gumbel(0,1) + logits[i]
           \STATE $gumbels[:,i] \gets -\log(\exp(-g) + \exp(-topgumbel)) - logits[i]$ \COMMENT{Truncated Gumbel}
       \ELSE
           \STATE $gumbels[:,i] \gets$ sample $nsamp$ values from Gumbel(0,1) \COMMENT{Unconstrained state}
       \ENDIF
   \ENDFOR

   \STATE {\bfseries return} $gumbels$
\end{algorithmic}
\end{algorithm}

We can perform (approximate) posterior inference of $P((G_{s,t})_{s\in \mathcal{S}} \mid s_t, a_t, s_{t+1})$ through rejection sampling \citep{oberst2019counterfactual}, or the top-down Gumbel sampling \citep{maddison2014sampling} algorithm, given in Algorithm \ref{alg:topdown}. This algorithm returns a finite sample of values from the posterior Gumbel distribution, $G'_{s, t}$. The counterfactual transition probabilities can then be calculated as follows:
\begin{multline}
\label{eq:cf_mdp_probs}
     P_{\mdp,t}(s' \mid s, a) = P\left(s' = \argmax_{q \in \states} \left\{\log\left(\probs(q \mid s, a)\right) + 
     G'_{q,t}\right\}\right)
     \approx \dfrac{1}{N} \sum_{j=1}^{N} \mathbbm{1}\left(s' = \argmax_{q \in \states} \left\{\log\left(\probs(q \mid s, a)\right) + 
     G_{q,t}^{\prime(j)}\right\}\right)
\end{multline}

Given an IMDP $\mathcal{M}_\updownarrow$, we can derive conservative bounds for counterfactual transition probabilities as follows;

\begin{theorem}\label{thm: gumbel bounds}
    Given an arbitrary IMDP $\mathcal{M}_\updownarrow$, observed transition $s_t, a_t \rightarrow s_{t+1}$, and counterfactual transition $s, a \rightarrow s'$, under the Gumbel-max SCM the counterfactual transition probabilities of the observed state-action pair are exact for all $\mathcal{M} \in \mathcal{M}_\updownarrow$: $\tilde{P}_t(s_{t+1} \mid s_t, a_t) = 1$ and $\tilde{P}_t(s' \mid s_t, a_t) = 0$ for all $s' \neq s_{t+1}$.
 
    For $(s, a) \neq (s_t, a_t)$, a lower bound for the counterfactual transition probability across all MDPs $\mathcal{M} \in \mathcal{M}_\updownarrow$, produced by the Gumbel-max SCM, is given by:
    \[
    \tilde{P}_t^{\it LB}(s' \mid s, a) = P\left(s' = \argmax_{q \in \states} \left\{\log\left(\hat{P}^{\it LB}(q \mid s, a)\right) + G'_{q,t}\right\}\right),
    \]
    \[
    \text{where} \quad \hat{P}^{\it LB}(q \mid s, a) =
    \begin{cases}
        P^{\it LB}_{\mathcal{M}_\updownarrow}(s' \mid s, a) & \text{if } q = s',\\
        P^{\it UB}_{\mathcal{M}_\updownarrow}(q \mid s, a) & \text{otherwise,}
    \end{cases}
    \]
    and $G'_{q,t}$ is derived given $P_{\mathcal{M}}(s' \mid s_t, a_t) = P^{\it UB}_{\mathcal{M}_\updownarrow}(s' \mid s_t, a_t)$, and $\forall s'' \neq s'$, $P_{\mathcal{M}}(s'' \mid s_t, a_t) = P^{\it LB}_{\mathcal{M}_\updownarrow}(s'' \mid s_t, a_t)$.
 
    An upper bound for the counterfactual transition probability across all MDPs $\mathcal{M} \in \mathcal{M}_\updownarrow$, produced by the Gumbel-max SCM, is given by:
    \[
    \tilde{P}_t^{\it UB}(s' \mid s, a) = P\left(s' = \argmax_{q \in \states} \left\{\log\left(\hat{P}^{\it UB}(q \mid s, a)\right) + G'_{q,t}\right\}\right),
    \]
    \[
    \text{where} \quad \hat{P}^{\it UB}(q \mid s, a) =
    \begin{cases}
        P^{\it UB}_{\mathcal{M}_\updownarrow}(s' \mid s, a) & \text{if } q = s',\\
        P^{\it LB}_{\mathcal{M}_\updownarrow}(q \mid s, a) & \text{otherwise,}
    \end{cases}
    \]
    and $G'_{q,t}$ is derived given $P_{\mathcal{M}}(s' \mid s_t, a_t) = P^{\it LB}_{\mathcal{M}_\updownarrow}(s' \mid s_t, a_t)$, and $\forall s'' \neq s'$, $P_{\mathcal{M}}(s'' \mid s_t, a_t) = P^{\it UB}_{\mathcal{M}_\updownarrow}(s'' \mid s_t, a_t)$.
\end{theorem}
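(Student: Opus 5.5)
The overall strategy is a monotonicity argument for the Gumbel-max argmax, applied in two stages. First, the observed-pair claim is immediate. Under Algorithm~\ref{alg:topdown}, every posterior sample $G'$ satisfies
\[
s_{t+1}=\argmax_q\{\log P_{\mathcal M}(q\mid s_t,a_t)+G'_{q,t}\},
\]
for any $\mathcal M\in\mathcal M_\updownarrow$. Hence intervening with $(s,a)=(s_t,a_t)$ reproduces $s_{t+1}$ with probability one. This gives $\tilde P_t(s_{t+1}\mid s_t,a_t)=1$ and $\tilde P_t(s'\mid s_t,a_t)=0$ for $s'\neq s_{t+1}$, as in Theorem~\ref{theorem: observed imdp}.

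For $(s,a)\neq(s_t,a_t)$, I would fix $s'$ and write the counterfactual probability as
\[
F(P_{\mathcal M}(\cdot\mid s,a),\,P_{\mathcal M}(\cdot\mid s_t,a_t))
=\Pr_{G'\sim \mathrm{post}(P_{\mathcal M}(\cdot\mid s_t,a_t))}\big[s'=\argmax_q\{\log P_{\mathcal M}(q\mid s,a)+G'_q\}\big].
\]
The first step is monotonicity in the counterfactual logits. For a fixed noise realisation $G'$, the event that $s'$ wins the argmax is increasing in $P(s'\mid s,a)$ and decreasing in each $P(q\mid s,a)$ with $q\neq s'$. Replacing the true row by the vector $\hat P^{\it LB}$ can therefore only shrink the winning event, and replacing it by $\hat P^{\it UB}$ can only enlarge it. These vectors need not sum to one. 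That is harmless, because the argmax is invariant to normalisation only up to a common shift, and here we just use pointwise domination of the logits. Taking expectations over $G'$ yields
\[
F(\hat P^{\it LB},\cdot)\le F(P_{\mathcal M}(\cdot\mid s,a),\cdot)\le F(\hat P^{\it UB},\cdot)
\]
for every posterior.

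The second step handles the dependence on the observed row through the posterior of $G'$, and I expect it to be the main obstacle. I would use the top-down representation of Algorithm~\ref{alg:topdown}. The posterior fixes $\log P(s_{t+1}\mid s_t,a_t)+G'_{s_{t+1}}=T$, the top Gumbel, and each other coordinate is a Gumbel truncated below $T$ at location $\log P(q\mid s_t,a_t)$. I would try to couple posteriors across two observed rows so that raising $P(s'\mid s_t,a_t)$ stochastically raises $G'_{s'}$ (less truncation pressure relative to its location shifting after subtracting the logit). I would also aim to show that lowering the other entries, including $s_{t+1}$, lowers the competing noise terms. The goal is to show that the choice in the theorem is the worst case for the lower bound and the best case for the upper bound. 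For the lower bound this means $P(s'\mid s_t,a_t)$ at its upper bound and the others at their lower bounds; for the upper bound it is the symmetric choice.

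If a clean coupling fails, I would fall back on a coordinatewise argument. This would differentiate the closed-form truncated-Gumbel CDF $-\log(e^{-g}+e^{-T})$ with respect to each logit, checking the sign of the effect on $\{s' \text{ wins}\}$. This sign check is the delicate part, and I would verify it first on the two-state case before generalising. Combining the two monotonicity steps then gives, for every $\mathcal M\in\mathcal M_\updownarrow$, that $\tilde P_t(s'\mid s,a)$ lies between the stated $\tilde P^{\it LB}_t$ and $\tilde P^{\it UB}_t$. The bounds are conservative rather than tight because the extreme rows need not be admissible distributions.
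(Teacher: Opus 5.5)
Your architecture matches the paper's: exactness on the observed pair, pointwise monotonicity of the event $\{s' \text{ wins}\}$ in the counterfactual logits, and monotonicity of the top-down posterior noise in the observed logits under a fixed realisation of the Gumbel draws. The gap is in your second step, which is the crux, and the directions you state there are backwards. Under the natural joint coupling (shared $T$ and shared standard Gumbel draws $g_q$), Algorithm~\ref{alg:topdown} gives $G'_{s_{t+1},t} = T - \ell_{s_{t+1}}$. For $q \neq s_{t+1}$ in the support it gives $G'_{q,t} = -\log\bigl(e^{-g_q} + e^{\ell_q - T}\bigr)$, where $\ell_q = \log P_{\mathcal M}(q \mid s_t,a_t)$. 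Both expressions are strictly decreasing in $\ell_q$; this is the paper's derivative $-e^{-T}/(e^{-(g+L)}+e^{-T})<0$. The untruncated draw used outside the support is the $\ell_q\to-\infty$ limit, and it dominates every truncated value. So raising $P(s'\mid s_t,a_t)$ \emph{lowers} $G'_{s',t}$. Truncation pressure increases rather than decreases: the truncated value rises by strictly less than the location shift. Likewise, lowering the other observed entries, including $s_{t+1}$, \emph{raises} the competing noise. Intuitively, having seen a likely alternative $s'$ not occur is evidence that its noise was small.

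With the signs as you wrote them, the theorem's extremal observed row ($P^{\it UB}$ at $s'$, $P^{\it LB}$ elsewhere) would \emph{maximise} $\Pr[s' \text{ wins}]$ rather than minimise it. Your two monotonicity steps would then combine into an inequality pointing the wrong way, contradicting your own sentence that this choice is the worst case for the lower bound. The missing piece is the sample-path computation above, which is exactly the sign check you defer. It must cover $s'=s_{t+1}$, $s'$ with positive lower bound, and $P^{\it LB}(s'\mid s_t,a_t)=0<P^{\it UB}(s'\mid s_t,a_t)$; in the last case you must pick a positive value, since the untruncated draw is larger. Once the correct directions are in place, the joint coupling makes the winning event monotone in every coordinate, and your combination argument goes through as in the paper.
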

\begin{proof}
We first consider the observed state-action pair $(s_t, a_t)$. For any $\mathcal{M} \in \mathcal{M}_\updownarrow$, its corresponding Gumbel-max SCM produces counterfactual probabilities $\tilde{P}_{\mdp,t}(s_{t+1} \mid s_t, a_t) = 1$ and $\tilde{P}_{\mdp,t}(s' \mid s_t, a_t) = 0$ for all $s' \neq s_{t+1}$, since the posterior Gumbel values are, by construction, those for which $s_{t+1}$ is the $\argmax$ under the same logits. These bounds are therefore exact.

For the remaining state-action pairs $(s,a) \neq (s_t, a_t)$, let us consider the lower counterfactual probability bound (the proof for the upper counterfactual probability bound follows similarly by reversing all bounds throughout). To derive the lower counterfactual probability bound, we need to consider how we can minimise $\tilde{P}_{\mdp,t}(s' \mid s, a)$ across all $\mdp \in \mdp_\updownarrow$:

\begin{equation}
{\min / \max}_{\mathcal{M} \in \mathcal{M_\updownarrow}} \tilde{P}_{\mdp,t}(s' \mid s, a) = {\min / \max}_{\mathcal{M} \in \mathcal{M_\updownarrow}} P\left(s' = \argmax_{q \in \states} \left\{\log\left(\probs(q \mid s, a)\right) + 
     G'_{q,t}\right\}\right)
\label{eq: gumbel-max optimisation}
\end{equation}

First, we assume the IMDP satisfies the normalisation assumption (Assumption~\ref{ass: normalisation}).

Next, from \eqref{eq: gumbel-max optimisation}, we can see that the counterfactual transition probabilities depend only on the logits of the observed state-action pair \((s_t, a_t)\) and the counterfactual state-action pair \((s, a)\). This means we only need to consider these logits within their bounds to determine the upper and lower bounds of the counterfactual transition probability for any transition. Also, from \eqref{eq: gumbel-max optimisation}, we can see that to minimise the counterfactual transition probability \(P_{\mdp,t}(s' \mid s, a)\), we must minimise the number of samples where \(s'\) is the \(\arg\max\) over all states \(q \in \mathcal{S}\) (and vice versa for maximising the counterfactual probability). To achieve this, we need to minimise both $\log P_\mathcal{M}(s' \mid s, a)$ and $G'_{s',t}$, while maximising the corresponding quantities $\log P_\mathcal{M}(q \mid s, a)$ and $G'_{q,t}$ of all other states $q \neq s'$.

Clearly, we can minimise $\log P_\mathcal{M}(s' \mid s, a)$ by setting $P_\mathcal{M}(s' \mid s, a) = P^{\it LB}_{\mathcal{M}_\updownarrow}(s' \mid s, a)$, and maximise $\log P_\mathcal{M}(q \mid s, a)$ for each $q \neq s'$ by setting $P_\mathcal{M}(q \mid s, a) = P^{\it UB}_{\mathcal{M}_\updownarrow}(q \mid s, a)$. Note that we take these extremal values for each next state individually, so together they need not form a valid probability distribution (i.e., they need not correspond to any admissible MDP $\mathcal{M} \in \mathcal{M}_\updownarrow$); as we show below, this still yields a valid, though possibly not tight, lower bound.

Now, we need to consider how to minimise the posterior Gumbel distribution \(G'_{s', t}\), which is derived from the top-down sampling algorithm in Algorithm~\ref{alg:topdown}. Note that the Gumbel values are determined entirely by the logits of the observed state-action pair, \(\log P_\mathcal{M}(\cdot \mid s_t, a_t)\). We will examine the following disjoint cases separately.

\paragraph{Case 1: \(s' = s_{t+1}\).} In this case, the Gumbel values are given by 
\[
\mathit{topgumbel} - \log P_\mathcal{M}(s_{t+1} \mid s_t, a_t),
\]
where \(\mathit{topgumbel}\) is sampled from the standard Gumbel distribution. Therefore, to minimise the Gumbel values, we need to maximise \(\log P_\mathcal{M}(s_{t+1} \mid s_t, a_t)\). Since \(\log P_\mathcal{M}(s_{t+1} \mid s_t, a_t) \leq 0\) (because \(0 \leq P_\mathcal{M}(s_{t+1} \mid s_t, a_t) \leq 1\)), this maximum is achieved when
\[
P_\mathcal{M}(s_{t+1} \mid s_t, a_t) = P_{\mathcal{M}_\updownarrow}^{\it UB}(s_{t+1} \mid s_t, a_t).
\]

\paragraph{Case 2: \(s' \neq s_{t+1}\) and \(P_{\mathcal{M}_\updownarrow}^{\it LB}(s' \mid s_t, a_t) > 0\).} 
If \(s' \neq s_{t+1}\) and \(P_{\mathcal{M}_\updownarrow}^{\it LB}(s' \mid s_t, a_t) > 0\), then the top-down sampling algorithm produces the following Gumbel values, across all MDPs \(\mathcal{M} \in \mathcal{M}_\updownarrow\):
\[
G'_{s', t} = -\log\bigl( \exp( - (g + \log P_\mathcal{M}(s' \mid s_t, a_t))) + \exp(-\mathit{topgumbel}) \bigr) - \log P_\mathcal{M}(s' \mid s_t, a_t),
\]
where \(g\) and \(\mathit{topgumbel}\) are sampled from the standard Gumbel distribution. 

Let us analyse how this depends on \(P_\mathcal{M}(s' \mid s_t, a_t)\). Set \(\log P_\mathcal{M}(s' \mid s_t, a_t) = L\), and define 
\[
Z = \exp( - (g + L)) + \exp( - \mathit{topgumbel}).
\]
Then
\[
G'_{s', t} = -\log Z - L.
\]
We compute
\begin{align*}
\frac{d}{dL} [ -\log Z - L ] 
&= \frac{d}{dL} [ -\log Z ] - 1 \\
&= -\frac{1}{Z} \frac{dZ}{dL} - 1,
\end{align*}
where
\begin{align*}
Z &= \exp( - (g + L)) + \exp( - \mathit{topgumbel}), \\
\frac{dZ}{dL} &= -\exp( - (g + L)).
\end{align*}
Thus
\begin{equation}
\begin{aligned}
\frac{d}{dL} &= -\frac{1}{Z} \cdot -\exp(-(g+L)) - 1\\
&= \frac{\exp(-(g+L))}{\exp(-(g + L)) + \exp(-topgumbel)} - 1\\
&= \frac{-\exp(-topgumbel)}{\exp(-(g + L)) + \exp(-topgumbel)}
\end{aligned}
\label{eq: derivative of gumbel-max}
\end{equation}
which is strictly negative, since \(\exp(x) > 0\) for all real \(x\).

Therefore, the minimum Gumbel value \(G'_{s', t}\) occurs when \(L\) is maximised, i.e.,
\[
P_\mathcal{M}(s' \mid s_t, a_t) = P_{\mathcal{M}_\updownarrow}^{\it UB}(s' \mid s_t, a_t)
\]

\paragraph{Case 3: \(s' \neq s_{t+1}\) and \(P_{\mathcal{M}_\updownarrow}^{\it UB}(s' \mid s_t, a_t) = 0\).} 
If \(P_{\mathcal{M}_\updownarrow}^{\it UB}(s' \mid s_t, a_t) = 0\), i.e., \(s'\) is not in the support of \((s_t, a_t)\), then the Gumbel values for \(s'\) are drawn from the standard Gumbel distribution. In this case, adjusting the logits for \(P_\mathcal{M}(\cdot \mid s_t, a_t)\) has no impact on the posterior Gumbel distribution, and therefore no effect on the counterfactual transition probability.

\paragraph{Case 4: \(s' \neq s_{t+1}\), \(P_{\mathcal{M}_\updownarrow}^{\it LB}(s' \mid s_t, a_t) = 0\), and \(P_{\mathcal{M}_\updownarrow}^{\it UB}(s' \mid s_t, a_t) > 0\).} 
If \(P_{\mathcal{M}_\updownarrow}^{\it LB}(s' \mid s_t, a_t) = 0\) and \(P_{\mathcal{M}_\updownarrow}^{\it UB}(s' \mid s_t, a_t) > 0\), then depending on the choice of MDP \(\mathcal{M} \in \mathcal{M}_\updownarrow\), the top-down sampling algorithm could fall into different cases. However, the truncated Gumbel values (which occur when \(P_\mathcal{M}(s' \mid s_t, a_t) > 0\)) are always smaller than Gumbel values sampled from the standard Gumbel distribution (which occur when \(P_\mathcal{M}(s' \mid s_t, a_t) = 0\)). Therefore, to minimise the lower counterfactual probability, we should select an MDP where \(P(s' \mid s_t, a_t) > 0\). Moreover, as shown in Case~2, the minimum Gumbel values for such an MDP occur when
\[
P_\mathcal{M}(s' \mid s_t, a_t) = P_{\mathcal{M}_\updownarrow}^{\it UB}(s' \mid s_t, a_t).
\]

Therefore, in all cases, we should derive $G'_{s',t}$ given $P_\mathcal{M}(s' \mid s_t, a_t) = P_{\mathcal{M}_\updownarrow}^{\it UB}(s' \mid s_t, a_t)$ to minimise $G'_{s',t}$; by the same argument (Case~2), the remaining posterior Gumbel values $G'_{q,t}$, $q \neq s'$, are maximised by setting $P_\mathcal{M}(q \mid s_t, a_t) = P_{\mathcal{M}_\updownarrow}^{\it LB}(q \mid s_t, a_t)$. As before, these extremal values are chosen for each next state individually and need not form a valid distribution.

However, since the derivation depends on the realised Gumbel samples, we cannot deterministically specify which state will be the $\argmax$ sample for a given choice of transition probabilities. But, for any fixed realisation of the Gumbel samples $g$ and $\mathit{topgumbel}$, whether $s'$ is the $\arg\max$ is monotone increasing in its interventional logit $\log P_{\mathcal M}(s' \mid s,a)$ and posterior Gumbel value $G'_{s',t}$, and monotone decreasing in those of every other state $q \neq s'$; hence so is the probability that $s'$ is the $\arg\max$. Therefore, although the $\arg\max$ outcome itself is random, we can minimise the probability that $s'$ is selected by minimising the former and maximising the latter.

Take the following assignment of interventional probabilities:

\begin{itemize}
\item $P_{\mathcal{M}}(s' \mid s, a) = P^{\it LB}_{\mathcal{M}_\updownarrow}(s' \mid s, a)$, and $\forall s'' \neq s', P_{\mathcal{M}}(s'' \mid s, a) = P^{\it UB}_{\mathcal{M}_\updownarrow}(s'' \mid s, a)$
\item $P_{\mathcal{M}}(s' \mid s_t, a_t) = P^{\it UB}_{\mathcal{M}_\updownarrow}(s' \mid s_t, a_t)$, and $\forall s'' \neq s', P_{\mathcal{M}}(s'' \mid s_t, a_t) = P^{\it LB}_{\mathcal{M}_\updownarrow}(s'' \mid s_t, a_t)$
\end{itemize}

If there exists an MDP $\mathcal{M} \in \mathcal{M}_\updownarrow$ with this assignment of probabilities, this assignment simultaneously minimises $P_{\mathcal M}(s' \mid s,a)$ and $G'_{s',t}$ and maximises those of all other states, and therefore minimises the probability that $s'$ will be the $\arg\max$, leading to a tight lower bound.

If no admissible MDP satisfies this assignment exactly, we may still evaluate the Gumbel-max expression at these extremal logits. Since the probability of the $\arg\max$ event is monotone in each logit (increasing in those of $s'$ and decreasing in those of all other states), evaluating it at these extremal (possibly infeasible) logits yields a conservative lower bound.

This proves that a conservative lower bound is:
\[
    \tilde{P}_t^{\it LB}(s' \mid s, a) = P\left(s' = \argmax_{q \in \states} \left\{\log\left(\hat{P}^{\it LB}(q \mid s, a)\right) + G'_{q,t}\right\}\right),
\]
where $\hat{P}^{\it LB}(q \mid s, a) = P^{\it LB}_{\mathcal{M}_\updownarrow}(s' \mid s, a)$ if $q = s'$ and $P^{\it UB}_{\mathcal{M}_\updownarrow}(q \mid s, a)$ otherwise, and $G'_{q,t}$ is derived given $P_\mathcal{M}(s' \mid s_t, a_t) = P_{\mathcal{M}_\updownarrow}^{\it UB}(s' \mid s_t, a_t)$ and $\forall s'' \neq s'$, $P_\mathcal{M}(s'' \mid s_t, a_t) = P_{\mathcal{M}_\updownarrow}^{\it LB}(s'' \mid s_t, a_t)$.

The proof for the upper bound follows similarly, by taking the opposite IMDP bounds throughout.
\end{proof}

In our experiments, we estimate these bounds using~\eqref{eq:cf_mdp_probs} with $N = 10{,}000$ samples from the posterior Gumbel distribution per transition. For the observed state-action pair $(s_t, a_t)$, we fix the counterfactual transition probabilities to $1$ for the observed next state $s_{t+1}$ and $0$ for all other states,  since under the Gumbel-max SCM the counterfactual of the observed state-action pair always reproduces $s_{t+1}$ (as in Theorem~\ref{theorem: observed imdp}). Because the remaining bounds are estimated by sampling, we clip them to $[0, 1]$ and, where sampling error places the lower bound above the upper bound, set the two equal. Finally, we rescale the lower bounds of each distribution to sum to at most 1 and the upper bounds to sum to at least 1, to ensure we have a valid CFIMDP.
\section{Confidence Intervals over Counterfactual IMDP}
\label{app: confidence proof}
Given a finite set of observed transitions in an MDP, we can build an interval MDP where transition probabilities are represented as intervals, capturing uncertainty in the transition probabilities due to limited data. For a given state-action pair, $(s, a)$, we can simulate a large number of transitions, and calculate:

\begin{itemize}
    \item $m(s,a)$: number of times action $a$ was taken in state $s$.
    \item $n(s,a,s')$: number of times the system transitioned to state $s'$ after being in state $s$ and taking action $a$.
    \item Empirical probability $\hat{P}(s' \mid s, a) = \frac{n(s, a, s')}{m(s, a)}$.
\end{itemize}

\subsection{Confidence Intervals}
There are several methods we could use to produce confidence intervals over the probability of each transition. \jl{We require all transition probabilities to lie within their intervals simultaneously with probability at least $1-\alpha$. To achieve this, we distribute the overall confidence evenly across all nondeterministic transitions: by a union bound, it suffices to compute each interval at confidence level $1-\alpha'$, where $\alpha' = \alpha / |\mathcal{T}'|$ and $\mathcal{T}'$ is the set of nondeterministic transitions\footnote{Since the topology of the MDP is known, deterministic transitions have probability exactly $1$ and require no confidence interval.}. There are several methods we could use to produce confidence intervals over the probability of each transition:
\begin{itemize}
    \item \textbf{Hoeffding's inequality} provides a high-probability bound on the deviation between the true transition probability $P(s' \mid s, a)$ and $\hat{P}(s' \mid s, a)$. For a confidence level of $1-\alpha'$,
        \[
        Pr(|P(s' \mid s, a)-\hat{P}(s' \mid s, a)| \geq \eta) \leq 2e^{(-2\cdot m(s, a) \cdot \eta^2)}
        \]
        where
        \[
        \eta = \sqrt{\frac{\ln(2/\alpha')}{2\cdot m(s, a)}}
        \]
        is chosen so that the right-hand side equals $\alpha'$. We can then construct an interval MDP as follows. For each transition $s, a \rightarrow s'$, we estimate its interval transition probability as:
        \[
        P_{\mathcal{M}_\updownarrow}(s' \mid s, a) = [\max(0, \hat{P}(s' \mid s, a) - \eta), \min(1, \hat{P}(s' \mid s, a) + \eta)]
        \]

    \item The \textbf{Clopper--Pearson} method provides an exact $(1-\alpha')$ confidence interval for a binomial proportion based on the cumulative distribution function of the binomial distribution. For a confidence level of $1-\alpha'$, the Clopper-Pearson interval is given by:
    \[
    P(s' \mid s,a) \in 
    \left[
    \mathrm{Beta}^{-1}\!\left(\frac{\alpha'}{2};\, n(s,a,s'),\, m(s,a)-n(s,a,s')+1\right),\;
    \mathrm{Beta}^{-1}\!\left(1-\frac{\alpha'}{2};\, n(s,a,s')+1,\, m(s,a)-n(s,a,s')\right)
    \right]
    \]
    where $\mathrm{Beta}^{-1}(q;\,a,b)$ denotes the $q$-quantile of the $\mathrm{Beta}(a,b)$ distribution. These intervals are typically tighter than those from Hoeffding's inequality, so we use the Clopper--Pearson method in all our experiments.
\end{itemize}}

\subsection{Construction and Normalisation of the IMDP}

In our approach, we use the Clopper–Pearson method to estimate the IMDP. The resulting intervals are then normalised to satisfy Assumptions~\ref{ass: normalisation} and~\ref{ass: observed transition}, which ensure that counterfactual probabilities can be computed over the IMDP.

We have constructed the IMDP such that for any transition $s, a \rightarrow s'$ in the MDP, \[Pr(P(s' \mid s, a) \notin P_{\mathcal{M}_\updownarrow}(s' \mid s, a))\leq \alpha\]

where $P_{\mathcal{M}_\updownarrow}(s' \mid s, a)$ is the true transition probability.

\subsection{Propagation of Confidence to the CFIMDP}
\begin{proof}[Proof of Proposition~\ref{prop: confidence}]
For now, assume the derived counterfactual probability bounds are tight (we will consider what happens if we apply our non-tight lower bound later). Take our IMDP $\mathcal{M}_\updownarrow$ derived from observational data using the Clopper--Pearson bounds. By construction, the chance of \jl{any true transition probability lying outside the IMDP is at most $\alpha$, i.e., $Pr\big(\exists\, s, a, s' : P(s' \mid s, a) \notin P_{\mathcal{M}_\updownarrow}(s' \mid s, a)\big) \leq \alpha$.}

\jl{Take an arbitrary transition $s, a \rightarrow s'$. In the worst case, where the true transition probability for this transition is \textbf{not} contained in the bound $[P_{\mathcal{M}_\updownarrow}^{\it LB}(s' \mid s, a), P_{\mathcal{M}_\updownarrow}^{\it UB}(s' \mid s, a)]$, the corresponding counterfactual probability bounds could be entirely disjoint from those in our CFIMDP (though in practice, some overlap is likely). However, the counterfactual bounds are exact given the true transition (interventional) probabilities, and, by construction, all true transition probabilities lie within the estimated IMDP with probability at least $1-\alpha$. Hence, with probability at least $1-\alpha$, all true counterfactual transition probabilities lie within their CFIMDP intervals simultaneously. Therefore, we can propagate the confidence bound through to the CFIMDP: the CFIMDP will contain the true counterfactual MDP (CFMDP) with probability at least $1-\alpha$.}

Finally, when we use a non-tight lower bound for the CFIMDP, this will be an over-approximation (which includes all valid CFMDPs and possibly some invalid ones). Therefore, this only increases our confidence that the computed counterfactual probability bounds contain the true counterfactual transition probability.
\end{proof}

\section{IMDP Value Iteration with Max Changes}
\label{app: algorithm}

\begin{algorithm}[h]
\caption{IMDP Value Iteration with Max Changes}\label{alg:vi_counter}
\begin{algorithmic}
   \STATE {\bfseries Input:} CFIMDP $\tilde{\mathcal M}^{\updownarrow}_\tau$ with bounds $\tilde P^{\updownarrow}_t$, observed path $\tau$, maximum number of changes $K$
   \STATE {\bfseries Output:} values $V^*$, policies $\pi^*_k$ for $k=0,\ldots,K$

   \FOR{$s \in \mathcal S$, $k = 0$ {\bfseries to} $K$}
       \STATE $V^*(s,|\tau|,k) \gets 0$
   \ENDFOR
   \FOR{$t = |\tau|-1$ {\bfseries down to} $0$}
       \FOR{$k = 0$ {\bfseries to} $K$}
           \FOR{$s \in \mathcal S$}
               \STATE $best \gets -\infty$
               \FOR{$a \in \mathcal A$}
                   \IF{$k = 0$ {\bfseries and} $a \neq a_t$}
                       \STATE {\bfseries continue} \COMMENT{no changes remaining}
                   \ENDIF
                   \STATE $k' \gets k - \mathbbm{1}\{a \neq a_t\}$
                   \STATE $Q \gets R(s,a) + \min_{\tilde P_t \in \tilde P^{\updownarrow}_t} \sum_{s'} \tilde P_t(s' \mid s,a)\, V^*(s',t+1,k')$
                   \IF{$Q > best$}
                       \STATE $best \gets Q$; \quad $\pi^*_k(t,s) \gets a$
                   \ENDIF
               \ENDFOR
               \STATE $V^*(s,t,k) \gets best$
           \ENDFOR
       \ENDFOR
   \ENDFOR
   \STATE {\bfseries return} $V^*$, $\{\pi^*_k\}_{k=0}^{K}$
\end{algorithmic}
\end{algorithm}

\section{Environment Descriptions and System Details}
\label{app: environments}
Below, we provide brief descriptions of the MDP environments and system specifications used in our experiments. The same environments as \citet{lally2025robustcounterfactualinferencemarkov} are used to allow for comparison.

\subsection{GridWorld}
In the GridWorld MDP, an agent navigates a $4\times4$ grid starting from the top-left corner with the objective of reaching the goal state in the bottom-right corner. At each time step, the agent can choose to move up, down, left, or right; however, it moves only in the chosen direction with probability $p$, otherwise moving in a random direction. Reaching the goal state yields a reward of $100$, and intermediate transitions provide shaped rewards proportional to the agent's proximity to the goal. The grid also contains a hazardous, terminal state in the centre, and entering it incurs a penalty of $-100$. In our experiments, we use two variants of the environment with different values of $p$, $p=0.9$ (i.e., $90$\% chance of moving in the chosen direction) and $p=0.4$.

\subsection{Frozen Lake}
The Frozen Lake MDP is adapted from PRISM \citep{prism, cool-mc}. In this environment, the agent navigates a $4\times4$ grid to retrieve a frisbee, while avoiding a set of holes scattered across the grid. Since the original PRISM environment is formulated as a reachability problem, the environment has been converted to a reward-based setting, which yields a reward of $100$ for successfully reaching the frisbee, and a penalty of $-100$ for falling into a hole. Both the goal state and states with holes are terminal.

\subsection{Sepsis}
The Sepsis MDP \citep{oberst2019counterfactual} simulates patient trajectories in a sepsis treatment setting. At each time step, the patient's state is defined by four vital signs (heart rate, blood pressure, oxygen concentration, and glucose levels), each categorised as low, in normal range, or high. The RL agent controls three treatments (which can be switched on or off), with the goal of discharging the patient (which occurs when at least three of the vital signs are in normal range, and all three treatments are switched off). The patient dies if at least three vital signs are out of normal range at any time step. Rewards are shaped by the number of vital signs that are out of normal range, ranging from $-1000$ for patient death to $+1000$ for patient discharge.

\subsection{Aircraft}
The Aircraft MDP is a navigation task, also adapted from PRISM \citep{prism, cool-mc}. In this environment, the RL agent controls an aircraft that advances forward by one unit at each time step, and must choose to increase, decrease, or maintain its altitude, with a fixed probability of success. The goal is to successfully reach the right-hand side of the grid without colliding with another aircraft, which independently increases, decreases, or maintains its current altitude at each time step. Since the original PRISM environment is formulated as a reachability problem, it has been converted to a reward-based setting, which yields a reward of $+100$ for reaching the goal or a penalty of $-100$ for colliding with the other aircraft. Intermediate transitions receive a shaped reward inversely proportional to the proximity of the two aircraft.

\subsection{System Specification}
Our experiments were run in parallel using $32$ threads on a $128$-core machine with an Intel Xeon CPU and $512$ GB RAM. Simulation of IMDPs and construction of the corresponding CFIMDPs were implemented in Python 3.10, while our extended value iteration algorithm was implemented by extending the Julia Interval MDP package (v0.6.0) with the algorithm described in Appendix \ref{app: algorithm}.

\section{Additional Experiments}
\label{app: additional experiments}
\subsection{Correctness for Off-policy Evaluation Experiment: GridWorld ($p=0.4$)}
\begin{table}[H]
\centering
  \caption{Given $100$ randomly sampled observed trajectories, we compare the average lower and upper bounds, $V^{\bot,\pi^*}(s_0)$ and $V^{\top,\pi^*}(s_0)$, on the counterfactual value under the exact CFIMDP with those under data-driven CFIMDPs, whose IMDPs are learnt from $N$ simulated trajectories. Averaged over observed trajectories, the counterfactual value equals the interventional value, so every pair of bounds should contain the true interventional value $V^{\pi^*}(s_0)=-40.1$.}%
\label{tab:cf_correctness_simulated_gridworld_0.4}

\begin{tabular}{lcc}
\toprule
& \textbf{Lower Bound} & \textbf{Upper Bound} \\
\midrule
Exact Bounds & $-192 \pm 163$ & $102 \pm 154$ \\
\midrule
\multicolumn{3}{l}{\textbf{Data-driven Bounds}} \\
$N=10{,}000$       & $-284 \pm 168$    & $144 \pm 143$ \\
$N=100{,}000$      & $-232 \pm 164$    & $125 \pm 136$ \\
$N=1{,}000{,}000$  & $-221 \pm 162$    & $119 \pm 138$ \\
\bottomrule
\end{tabular}
\end{table}
The results for the GridWorld ($p=0.4$) experiment closely mirror those of the GridWorld ($p=0.9$) experiment in Section \ref{sec: correctness}, further confirming that our approach is valid and unbiased, as the data-driven CFIMDP bounds again consistently contain the true expected return, and converge towards the exact bounds as the number of trajectories increases.

\subsection{Computing Minimal Robust Counterfactual Interventions: Sepsis}
Figure~\ref{fig: sepsis cf paths} gives a more detailed view of the Sepsis counterfactual paths summarised in Figure~\ref{fig: counterfactual visualisation}. For the same catastrophic observed trajectory and each $k$, we plot $200$ counterfactual paths sampled under the policy $\pi^*_k$ and an adversarially chosen CFMDP within the exact CFIMDP. Each row shows one component of the patient's state over time: heart rate, systolic blood pressure, oxygen saturation and glucose, followed by the three treatments (antibiotics, vasopressors and ventilation). With $k=0$, the counterfactual paths follow the observed treatments and the patient's vital signs deteriorate as observed. As $k$ increases, $\pi^*_k$ may change more of the observed treatments, and the vital signs of more counterfactual patients return to the normal range (dotted lines), consistent with the rise in recovery shown in Figure~\ref{fig: counterfactual visualisation}.
\begin{figure}[H]
  \centering
  \includegraphics[width=\textwidth]{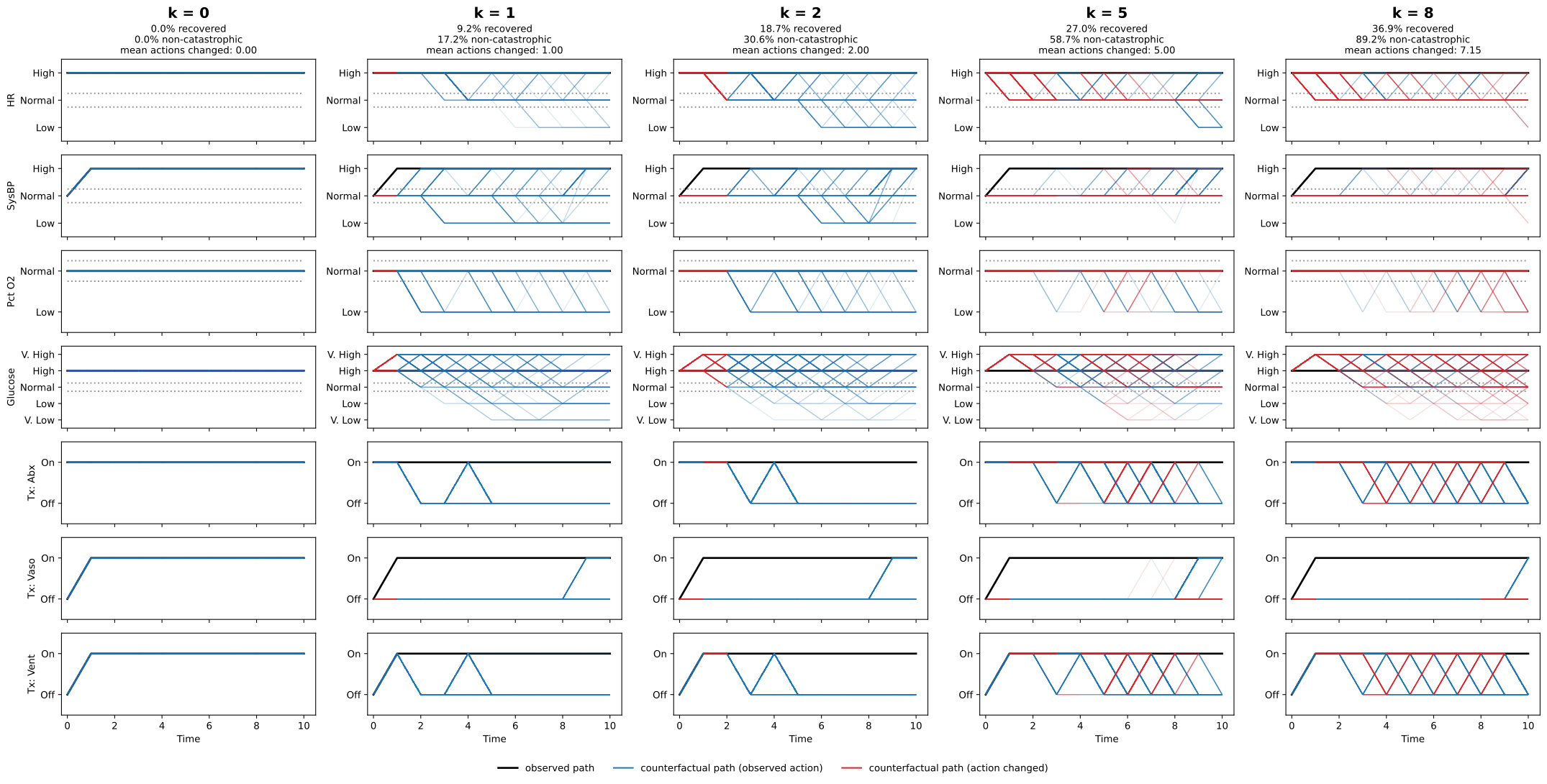}
\caption{Counterfactual Sepsis trajectories for one catastrophic observed trajectory, under $\pi^*_k$ for varying numbers of allowed action changes $k$. Rows show the patient's vital signs and applied treatments over time. The observed path is shown in black, and the $200$ sampled counterfactual paths in blue, or red at steps where an action differs from the observed one (in the treatment rows, where that treatment was changed). Dotted lines mark the normal range of each vital sign.}
  \label{fig: sepsis cf paths}
\end{figure}

\subsection{Execution Times and Bound Widths}
\begin{table}[H]
\centering
\caption{Left: effect on CFIMDP transitions of using tight (but more expensive) bounds in place of conservative closed-form bounds. Right: comparison of runtimes for constructing the CFIMDP (using $N=100{,}000, \alpha=0.1$ and averaging over $100$ randomly generated suboptimal paths).}
\resizebox{\linewidth}{!}{
\begin{tabular}{lccccccccc}
\toprule
& \multicolumn{4}{c}{\textbf{Bound Width Statistics}} 
& \multicolumn{4}{c}{\textbf{Execution Time (s)}} \\
\cmidrule(lr){2-5} \cmidrule(lr){6-9}

\textbf{Environment}
& \multicolumn{2}{c}{\textbf{Transitions Affected (\%)}} 
& \multicolumn{2}{c}{\textbf{Avg Difference in Width}}
& \textbf{Exact CFIMDP}
& \makecell{\textbf{Data-driven CFIMDP} \\ \textbf{(Cons. LB)}}
& \makecell{\textbf{Data-driven CFIMDP} \\ \textbf{(Exact LB)}}
& \makecell{\textbf{Data-driven CFIMDP} \\ \textbf{(Gumbel-max)}} \\

\cmidrule(lr){2-3} \cmidrule(lr){4-5}

& \textbf{All} & \textbf{Non-zero}
& \textbf{All} & \textbf{Non-zero}
& & & & \\

\midrule

GridWorld ($p=0.9$) 
& $4.37 \pm 0.46$ 
& $20.6 \pm 2.1$ 
& $4.89\times10^{-4} \pm 7.6\times10^{-5}$
& $0.0112 \pm 0.0013$ 
& $1.67 \pm 0.08$
& $1.71 \pm 0.09$
& $2.64 \pm 0.13$
& $1.55 \pm 0.07$ \\

GridWorld ($p=0.4$) 
& $1.14 \pm 0.74$ & $5.5 \pm 3.5$ 
& $2.8\times10^{-4} \pm 3.9\times10^{-4}$ & $0.018 \pm 0.014$ 
& $1.62 \pm 0.12$
& $1.65 \pm 0.06$
& $2.40 \pm 0.11$
& $1.55 \pm 0.10$ \\

Sepsis
& $0.0041 \pm 0.0050$ & $1.0 \pm 1.2$
& $3.9\times10^{-7} \pm 3.3\times10^{-7}$ & $0.024\pm 0.013$
& $57.2 \pm 3.9$
& $141 \pm 31$
& $1430 \pm 580$
& --$^{\dagger}$ \\

Frozen Lake
& $0.19 \pm 0.12$ & $1.49 \pm 0.91$ 
& $1.5\times10^{-5} \pm 1.2\times10^{-5}$ & $0.0070 \pm 0.0029$ 
& $2.24 \pm 0.11$
& $2.34 \pm 0.11$
& $3.75 \pm 0.30$
& $2.03 \pm 0.10$ \\

Aircraft
& $0.0097 \pm 0.0052$ & $0.90 \pm 0.48$
& $5.7\times10^{-6} \pm 3.3\times10^{-6}$ & $0.059 \pm 0.011$ 
& $7.9 \pm 1.8$
& $10.0 \pm 1.9$
& $85 \pm 13$
& $293 \pm 7$ \\
\bottomrule
\end{tabular}
}
\vspace{2pt}
\begin{minipage}{\linewidth}
\footnotesize $^{\dagger}$We omit Sepsis, as generating its Gumbel-max CFIMDPs is computationally prohibitive.
\end{minipage}
\label{tab:combined_results}
\end{table}

\end{document}